\documentclass{article}

\PassOptionsToPackage{numbers, compress}{natbib}
\usepackage[preprint]{neurips_2026}

\usepackage[utf8]{inputenc}
\usepackage[T1]{fontenc}
\usepackage{hyperref}
\usepackage{url}
\usepackage{booktabs}
\usepackage{amsfonts}
\usepackage{amsmath}
\usepackage{amssymb}
\usepackage{amsthm}
\usepackage{nicefrac}
\usepackage{microtype}
\usepackage{xcolor}
\usepackage{graphicx}
\usepackage{subcaption}
\usepackage{adjustbox}
\usepackage{mathtools}

\usepackage{titlesec}

\titlespacing*{\section}
  {0pt}{*0.4}{*0.4}  
\titlespacing*{\subsection}
  {0pt}{*0.3}{*0.3}

\usepackage{bm}
\usepackage{multirow}
\usepackage{colortbl}
\usepackage{float}
\usepackage{wrapfig}

\newtheorem{theorem}{Theorem}
\newtheorem{proposition}[theorem]{Proposition}
\newtheorem{lemma}[theorem]{Lemma}
\newtheorem{corollary}[theorem]{Corollary}
\newtheorem{definition}{Definition}
\newtheorem{remark}[theorem]{Remark}

\newcommand{\R}{\mathbb{R}}
\newcommand{\Z}{\mathbb{Z}}
\newcommand{\En}{\mathrm{E}(n)}
\newcommand{\On}{\mathrm{O}(n)}
\newcommand{\SOn}{\mathrm{SO}(n)}
\newcommand{\OC}{\mathrm{O}(C)}
\newcommand{\SOC}{\mathrm{SO}(C)}
\newcommand{\Tn}{\mathrm{T}(n)}
\newcommand{\spd}{\mathrm{SPD}}
\newcommand{\tr}{\mathrm{tr}}
\newcommand{\diag}{\mathrm{diag}}
\DeclareMathOperator{\softplus}{softplus}

\title{Learning Discrete Riemannian Metrics for Physical Fields with Cochain-Frame Equivariance}

\author{%
  Dongzhe Zheng$^{1}$,
  Christine Allen-Blanchette$^{1}$ \\
  $^1$Princeton University\\
  \texttt{\{dz5992, ca15\}@princeton.edu}
}

\begin{document}

\maketitle


\begin{abstract}
Physical fields on meshes require a separation between topology and geometry: conservation laws are topological and should be exact, while geometry, material response, and anisotropic coupling must be learned from data. Existing neural surrogates often mix these roles inside unconstrained message passing. We introduce Riemannian Hodge Message Passing (RHMP), which turns this separation into an architectural principle. RHMP fixes the cellular coboundaries ($d_k$) determined by oriented incidence and learns symmetric positive-definite cochain metrics ($H_k$) for geometry-dependent propagation. Treating $H_k$ as the learned metric motivates cochain-frame equivariance: physical propagation should be invariant to orthogonal changes of the hidden cochain feature basis. RHMP implements this principle with metric-weighted Hodge blocks ($d_k^\top H_{k+1}d_k$), yielding exact cochain-complex identities ($d_{k+1}d_k=0$), nonnegative Hodge energies, positive-semidefinite operators, and exact Abelian curvature invariance. Across seven physical benchmarks spanning fluids, electromagnetism, gauge fields, and variable-mesh CFD, RHMP achieves the best overall performance, with the largest gains when topology, learned geometry, and field structure interact.
\end{abstract}
\section{Introduction}
\label{sec:intro}

Physical fields on meshes do not all live naturally as node features. Pressure is naturally vertex- or cell-centered, flux and circulation live on
oriented edges, and field strengths such as vorticity or curvature live on
faces or higher-dimensional cells. Learning such fields therefore requires more than expressive message passing: the model must preserve topological identities that are exact, while adapting to geometric and material responses that vary across domains. Incidence relations between vertices, edges, faces, and higher cells impose exact physical identities, such as the curl-free condition of gradient fields ($\nabla\!\times\!\nabla\phi=0$) and the source-free (or divergence-free) condition of curl fields ($\nabla\!\cdot\!\nabla\!\times\!\mathbf{A}=0$). By contrast, material parameters, anisotropy, curvature, and constitutive laws determine how strongly neighboring quantities interact. This suggests a simple design principle for neural physical surrogates: keep topology fixed and learn geometry.

Hodge theory provides precisely this separation. On a cell complex, scalar quantities are represented as $0$-cochains, fluxes and circulations as $1$-cochains, and surface densities such as vorticity or field strength as $2$-cochains. The coboundary maps $d_k$ move between these spaces and are determined entirely by oriented incidence. They contain no material parameters and satisfy $d_{k+1}d_k=0$ by construction; we write this cochain-complex identity compactly as $d^2=0$. Geometry enters through a positive-definite inner product on each cochain space. We write this inner product as a symmetric positive-definite (SPD) matrix $H_k$ and interpret it as a \emph{discrete Riemannian metric}, the discrete counterpart of the metric-induced Hodge star in $\langle\omega,\eta\rangle_g=\int_M\omega\wedge\star_g\eta$~\cite{desbrun2005discrete,arnold2006finite}. For readers less familiar with this dictionary, Appendix~\ref{app:metric_primer} gives a concrete triangle example and the continuous-to-discrete correspondence.

This viewpoint leads to the RHMP principle: fix the topological coboundary map $d_k$, and learn the cochain metrics $H_k$. For an input or hidden $k$-cochain feature $x_k$, the weighted Hodge energy $\|d_k x_k\|_{H_{k+1}}^2$ induces the operator $d_k^\top H_{k+1}d_k$ on $k$-cochains (see Appendix~\ref{app:cochain-dataflow}). The coboundary \(d_k\) determines which quantities are differentiated and which conservation laws remain closed. 
The metric \(H_{k+1}\) determines how the resulting \((k+1)\)-cochain is measured and coupled. Thus, RHMP keeps the cochain complex exact while
concentrating learnable geometric freedom in the metric.

\begin{wrapfigure}{r}{0.49\linewidth}
\vspace{-0.5em}
\centering
\includegraphics[width=\linewidth]{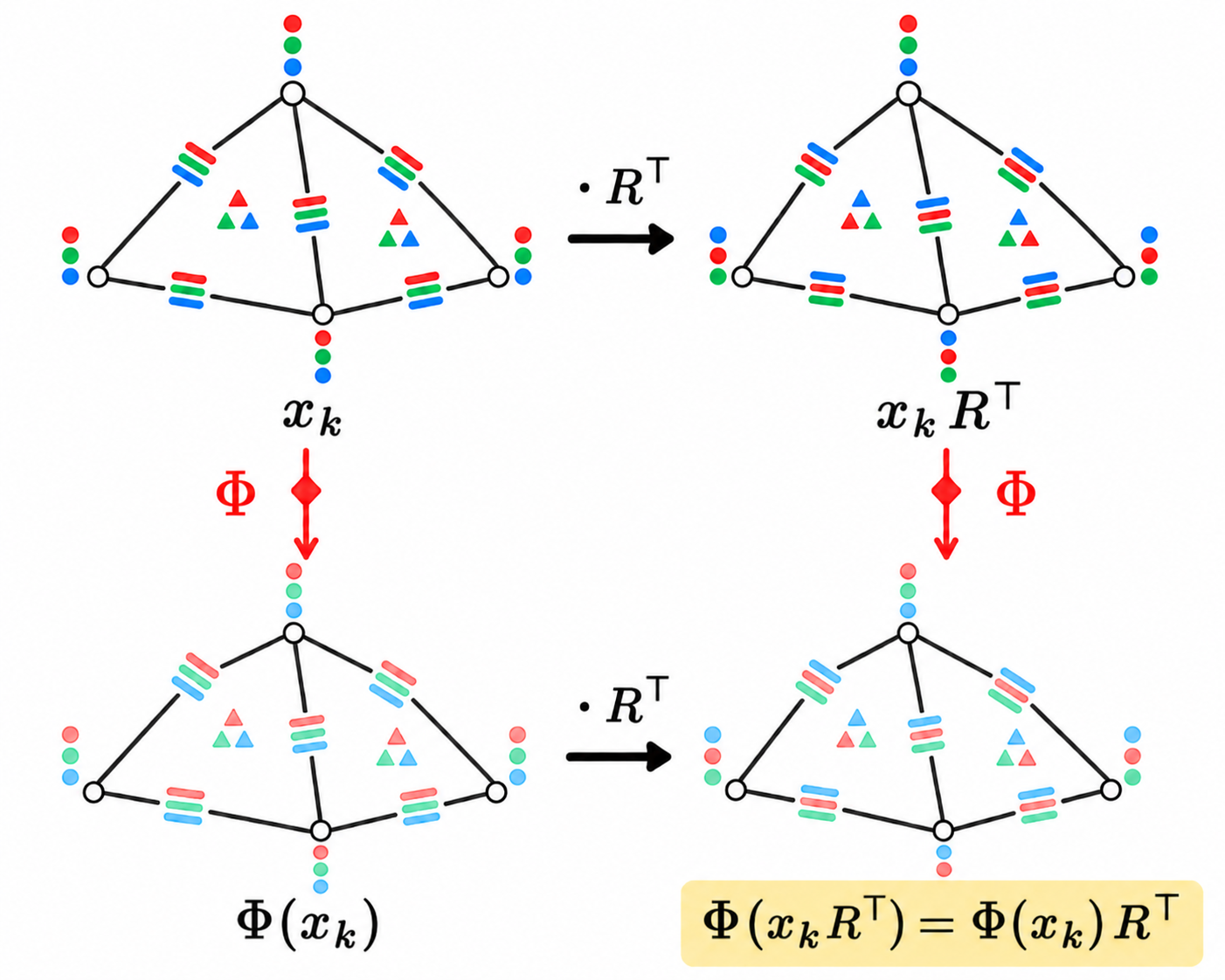}
\caption{\textbf{Cochain-frame equivariance.} Applying an orthogonal change of hidden cochain frame $R \in O(C)$ before or after an RHMP layer $\Phi$ gives the same result. The action is on the channel basis of cochain-valued hidden features, while $d_k$ and $H_k$ act on cell indices.}
\label{fig:gauge_equi_overview}
\vspace{-0.7em}
\end{wrapfigure}
Treating $H_k$ as the learned geometric object also changes the symmetry question. The metric should describe propagation, not an arbitrary choice of basis for the hidden feature channels. 
In RHMP, the $C$ channels of a hidden
\(k\)-cochain are coordinates in an auxiliary feature space with a standard inner product. Changing this orthonormal
frame by \(R\in O(C)\) acts as
\(\mathbf{x}_k\mapsto \mathbf{x}_k R^\top\) for
\(\mathbf{x}_k\in\mathbb{R}^{n_k\times C}\).
We call a layer \emph{cochain-frame equivariant} if it commutes with this action. RHMP enforces this by predicting the cell-space metric $H_k$ only from $\OC$-invariant channel-statistics, such as norms and inner products, and by
using norm-gated nonlinearities. As a result, orthogonal changes of hidden
cochain frame change the representation but not the represented propagation.
Together with the SPD constraint on $H_k$, Euclidean-invariant geometric
features, and fixed coboundaries, this yields nonnegative Hodge energies, positive-semidefinite Hodge operators, coordinate-free scalar behavior, $\En$-equivariant vector readouts, and exact preservation of $d^2=0$.

RHMP instantiates these principles by assigning features to the appropriate cochain degree, freezing the coboundary operators, predicting $H_k\succ0$ from invariant statistics, and using norm-gated nonlinearities that preserve cochain-frame equivariance. The result message-passing operator whose topology and symmetries are built in, while its geometry is learned from data.

\textbf{Contributions.}
(1) \emph{Metric-centered formulation.}
We formulate the problem of learning a mesh-based physical field as a problem of learning a discrete Riemannian metric $H_k$ on cochains, with fixed coboundary maps $d_k$. This makes the separation between topology, geometry, and conservation explicit.
(2) \emph{Cochain-frame symmetry theorem and constructive architecture.}
We prove that the metric conditions $H_k\succ0$, cochain-frame invariance under $\OC$, and spatial $\En$-invariance yield nonnegative Hodge energies,
positive-semidefinite Hodge operators, cochain-frame-equivariant message passing, and invariant/equivariant scalar and vector readouts. RHMP implements these conditions directly.
(3) \emph{Gauge-field consequence and empirical validation.} For Abelian gauge fields represented as cochains, curvature observables depending on $dA$ are exactly invariant under $A\mapsto A+d\lambda$, since the additional term $d(d\lambda)$ vanishes by the cochain-complex identity $d\circ d=0$. On seven tasks spanning fluid mechanics, electromagnetism, Abelian and non-Abelian gauge-field benchmarks, and per-sample variable-mesh CFD, RHMP outperforms graph, cell-complex, manifold-gauge, and neural-operator baselines. 


\section{Related Work}
\label{sec:related}

\textbf{Discrete exterior calculus and compatible discretization.}
RHMP is built on the topology--geometry separation that underlies discrete exterior calculus (DEC)~\cite{desbrun2005discrete} and finite element exterior calculus (FEEC)~\cite{arnold2006finite}: coboundaries encode the cochain complex and satisfy $d_{k+1}d_k=0$, while metric-dependent Hodge stars or mass matrices define inner products on discrete forms. This separation also appears in compatible and mimetic discretizations and computational electromagnetism, where incidence matrices encode exact sequences and conservation laws, while mass, Hodge, or constitutive matrices encode metric and material response~\cite{bossavit1998computational,hiptmair2002finite,bochev2006principles}. RHMP turns this separation into a neural architecture, with $d_k$ fixed and the SPD metric factor $H_k$ predicted from data.

\textbf{Cellular, simplicial, and sheaf neural networks.}
Topological signal processing extends graph filtering from node signals to edge-, face-, and higher-order cochains through Hodge Laplacians and Hodge decompositions~\cite{yang2022simplicial_filters}. Simplicial and cellular message-passing methods extend graph neural networks to higher-order cells via incidence maps, Hodge Laplacians, orientation-aware aggregation, and attention, including SNN~\cite{ebli2020simplicial}, MPSN~\cite{bodnar2021weisfeiler_sc}, SCCNN~\cite{yang2022sccnn}, CW Net~\cite{bodnar2021weisfeiler}, SAT~\cite{goh2022simplicial}, Clifford-SMPN~\cite{liu2024clifford}, and the unified framework of Hajij et al.~\cite{hajij2022topological}; sheaf neural networks~\cite{hansen2020sheaf} and neural sheaf diffusion~\cite{bodnar2022neural} equip graphs with learned cellular sheaves and sheaf Laplacians. RHMP keeps the cellular coboundary fixed and learns an SPD cochain metric $H_k$ from $\OC$-invariant statistics, preserving $d^2=0$ while learning geometry, anisotropy, and material response.

\textbf{Learned Hodge operators and mesh spectral geometry.}
HodgeNet~\cite{smirnov2021hodgenet}, HodgeFormer~\cite{nousias2025hodgeformer}, and HSD~\cite{zheng2026hsd} learn Hodge or Hodge-like matrices for mesh spectral or transformer operators. RHMP differs by learning input-dependent SPD cochain metrics inside physical message passing, with fixed coboundaries, $\OC$-equivariance, and conservation/gauge-field guarantees.

\textbf{Gauge-equivariant and surface networks.}
Gauge Equivariant CNNs~\cite{cohen2019gauge} and GEM-CNN~\cite{dehaan2021gem} define convolutions on manifolds or meshes via local tangent-frame parallel transport; Tangent Bundle Networks~\cite{battiloro2023tangent} use connection-Laplacian tangent-bundle filters; DiffusionNet~\cite{sharp2022diffusionnet} learns through discretization-agnostic surface diffusion. Lattice-gauge models instead impose gauge structure on link variables, Wilson-loop features, or flow-based samplers~\cite{favoni2022lattice,luo2021gauge,kanwar2020equivariant}. RHMP targets a separate symmetry: orthogonal changes of basis in the hidden cochain feature frame. Its metric predictor uses $\OC$-invariant statistics, so Hodge messages are cochain-frame equivariant. For Abelian connection data, RHMP inherits exact curvature invariance from $d^2=0$; for non-Abelian fields it uses cochain differentials with algebraic coupling.

\textbf{Equivariant graph networks and physical surrogates.}
$\En$-equivariant graph networks such as SchNet~\cite{schutt2017schnet}, EGNN~\cite{satorras2021en}, Tensor Field Networks~\cite{thomas2018tensor}, SE(3)-Transformer~\cite{fuchs2020se3}, PaiNN~\cite{schutt2021equivariant}, and NequIP~\cite{batzner2022e3} build coordinate-aware message passing from distances, relative positions, and higher-order geometric features. Graph Network-based Simulators~\cite{sanchezgonzalez2020learning}, MeshGraphNets~\cite{pfaff2021learning}, and message-passing neural PDE solvers~\cite{brandstetter2022message} learn surrogate dynamics on particles or meshes. These methods encode fields as node, particle, or edge features. RHMP assigns scalar-, flux-, and intensity-like quantities to 0-, 1-, and 2-cochains and uses fixed coboundaries with learned SPD metrics.

\textbf{Neural operators.}
Fourier Neural Operator~\cite{li2021fourier} and DeepONet~\cite{lu2021deeponet} are central neural operators, learning input--output maps at the function-space level. General neural-operator theory formulates such maps through discretization-consistent integral-kernel parameterizations~\cite{kovachki2023neural}. Beyond FNO and DeepONet, graph-kernel and multipole graph neural operators learn nonlocal kernels on irregular samples~\cite{li2020neural,li2020multipole}; Galerkin Transformer uses attention as an operator-learning layer~\cite{cao2021choose}; F-FNO and WNO modify the spectral factorization or basis~\cite{tran2023factorized,tripura2023wavelet}; PINO adds physics residual constraints~\cite{li2024physics}; and Geo-FNO / GINO extend FNO-style models to general or large-scale geometries~\cite{li2023fourier_deformations,li2023geometry}. RHMP is complementary: it builds cochain type, exact $d^2=0$, SPD Hodge energy, and cochain-frame-invariant metric learning into the architecture.

\section{Discrete Geometry on Cell Complexes}
\label{sec:prelim}

A regular CW complex \(K\) is a cell complex whose \(k\)-dimensional cells
generalize vertices, edges, faces, and higher-dimensional elements.  A $C$-channel
$k$-cochain is a matrix $\mathbf{x}_k\in\R^{n_k\times C}$, with one
$C$-dimensional feature vector per $k$-cell.  The coboundary operator
$d_k\in\R^{n_{k+1}\times n_k}$ maps $k$-cochains to $(k+1)$-cochains.
For example, $d_0$ differences vertex quantities along oriented edges and
$d_1$ sums oriented edge quantities around faces.  These are the standard cochain and coboundary operators of DEC and FEEC~\cite{desbrun2005discrete,arnold2006finite}.  The identity
$d_{k+1}d_k=0$ is purely combinatorial: it is the discrete form of
identities such as $\nabla\!\times\!\nabla\phi=0$ and
$\nabla\!\cdot\!\nabla\!\times\!\mathbf{A}=0$.
Appendix~\ref{app:triangle-example} gives a concrete triangle example illustrating $d_0$, $d_1$, and how changing $H_1$ while fixing $d_k$ changes the Hodge step.

\begin{definition}[Discrete Riemannian metric on cochains]
A discrete Riemannian metric on $k$-cochains is an SPD matrix $H_k\in\spd(n_k)$, where $\spd(n_k)$ denotes the cone of symmetric positive-definite $n_k\times n_k$ matrices, defining the inner product
\begin{equation}
  \langle \mathbf{x}_k,\mathbf{y}_k\rangle_{H_k}
  := \tr(\mathbf{x}_k^\top H_k\mathbf{y}_k).
  \label{eq:metric_inner_product}
\end{equation}
Diagonal entries of $H_k$ describe local cell weights such as volumes,
areas, conductivities, or permittivities. Off-diagonal entries describe
anisotropic or nonlocal couplings between cells.  Learning $H_k$ therefore
amounts to learning the discrete geometry through which the cochain field is
measured and propagated.
\end{definition}

\textbf{The metric block.}
The basic object is a metric-weighted differential.  Given an incidence map
$D:C^a(K)\to C^b(K)$ and a positive-definite metric $H_b$ on the target
cochain space, define
\begin{equation}
  \mathcal{E}_{D,H}(x)=\frac12\|Dx\|_{H_b}^2
  =\frac12\tr\big((Dx)^\top H_b(Dx)\big),
  \qquad
  \nabla \mathcal{E}_{D,H}=D^\top H_bD x.
  \label{eq:generic_metric_block}
\end{equation}
Throughout the paper, $d^\top H d$ denotes this metric block: $d$ supplies
the topological derivative and $H$ supplies the learned geometry used to
measure its output.

\textbf{Placement convention.}
Superscripts $\uparrow$, $\downarrow$, and $\mathrm{cross}$ only indicate which adjacent cochain space carries the metric; all such matrices use the same SPD construction.

\textbf{Upper and lower Hodge terms.}
For a $k$-cochain, the upper placement takes $D=d_k$ and measures the
$(k+1)$-cochain $d_k\mathbf{x}_k$ with $H_{k+1}^{\uparrow}$:
\begin{equation}
  \mathcal{E}_{k}^{\uparrow}(\mathbf{x}_k)
  = \frac12\|d_k\mathbf{x}_k\|_{H_{k+1}^{\uparrow}}^2
  = \frac12\tr\big((d_k\mathbf{x}_k)^\top H_{k+1}^{\uparrow}(d_k\mathbf{x}_k)\big),
  \label{eq:upper_energy}
\end{equation}
with gradient $d_k^\top H_{k+1}^{\uparrow}d_k\mathbf{x}_k$.  The lower
coexact placement takes $D=d_{k-1}^\top$ and measures the
$(k-1)$-cochain $d_{k-1}^\top\mathbf{x}_k$ with
$H_{k-1}^{\downarrow}$:
\begin{equation}
  \mathcal{E}_{k}^{\downarrow}(\mathbf{x}_k)
  = \frac12\|d_{k-1}^\top\mathbf{x}_k\|_{H_{k-1}^{\downarrow}}^2,
  \qquad
  \nabla \mathcal{E}_{k}^{\downarrow}
  = d_{k-1}H_{k-1}^{\downarrow}d_{k-1}^\top\mathbf{x}_k.
  \label{eq:lower_energy}
\end{equation}
The weighted $k$-Hodge message operator is therefore
\begin{equation}
  L_k^H
  = d_{k-1}H_{k-1}^{\downarrow}d_{k-1}^\top
  + d_k^\top H_{k+1}^{\uparrow}d_k.
  \label{eq:hodge_lap}
\end{equation}

Positive definiteness of the placed metrics gives positive semidefiniteness
of the Hodge operator.  For any $\mathbf{x}_k$,
\begin{equation}
  \langle \mathbf{x}_k,L_k^H\mathbf{x}_k\rangle
  = \|d_{k-1}^\top\mathbf{x}_k\|_{H_{k-1}^{\downarrow}}^2
  + \|d_k\mathbf{x}_k\|_{H_{k+1}^{\uparrow}}^2
  \ge 0.
  \label{eq:psd_energy}
\end{equation}
Thus $H_k\succ0$ certifies the learned geometry as an energy metric.

The connection to continuous geometry is direct.  A Riemannian metric $g$ on
a smooth manifold induces the Hodge star $\star_g$ and the $L^2$ inner
product on $k$-forms,
$\langle \omega,\eta\rangle_g=\int_M\omega\wedge\star_g\eta$.
After discretizing $k$-forms by their integrals over oriented $k$-cells, the
matrix representing this inner product is a mass/Hodge matrix
$H_k$~\cite{desbrun2005discrete,arnold2006finite}.  Hence $H_k$ is the discrete counterpart of the continuous
Riemannian metric and Hodge star, while $d_k$ is the discrete counterpart of
the exterior derivative.

\textbf{Cochain-frame action.}
For $\mathbf{x}_k\in\R^{n_k\times C}$, a cochain-frame change $R\in\OC$ acts on the hidden feature fiber by $\rho_R(\mathbf{x}_k)=\mathbf{x}_k R^\top$, with the same $R$ applied to all cells and cochain degrees: for $\mathbf{x}=(\mathbf{x}_0,\mathbf{x}_1,\mathbf{x}_2)$ the action is $\rho_R(\mathbf{x})=(\mathbf{x}_0 R^\top,\mathbf{x}_1 R^\top,\mathbf{x}_2 R^\top)$. A layer $\Phi$ is \emph{cochain-frame equivariant} if $\Phi(\rho_R\mathbf{x})=\rho_R\Phi(\mathbf{x})$ for all $R\in\OC$.

The following theorem is the main metric-to-properties statement. Its
hypotheses are conditions on $H_k$ and on the fixed cochain complex; its
conclusions are the induced conservation, positivity, and symmetry properties.

\begin{theorem}[Metric-induced topology, positivity, cochain-frame equivariance, and spatial equivariance]
\label{thm:equivariance}
Let $K$ be a CW complex embedded in $\R^n$, let $\mathbf{x}_k\in\R^{n_k\times C}$ be $C$-channel cochain features, and let $\Phi$ be a layer built from the fixed coboundaries $d_k$, metric-weighted operators of the form \eqref{eq:hodge_lap}, residual sums, and per-cell nonlinearities of the form $\sigma_{\rm gate}(\mathbf{m})=h(\|\mathbf{m}\|)\mathbf{m}$. Assume:
\begin{enumerate}
  \item each predicted metric satisfies $H_j\succ0$;
  \item $H_j(\rho_R\mathbf{x},K)=H_j(\mathbf{x},K)$ for all cochain-frame changes $R\in\OC$;
  \item $H_j$ and the lifting encoder depend on the embedding of $K$ only through $\En$-invariants such as distances, angles, areas, and incidence relations.
\end{enumerate}
Then the following hold.
\textnormal{(i)} $d_{k+1}d_k=0$ is exact and independent of training.
\textnormal{(ii)} $L_k^H$ is positive semidefinite for every input.
\textnormal{(iii)} $\Phi$ is cochain-frame equivariant under $\OC$: $\Phi(\rho_R\mathbf{x})=\rho_R\Phi(\mathbf{x})$.
\textnormal{(iv)} Scalar cochain readouts are $\En$-invariant; vector readouts of the form $\sum_e w_e(\cdot)(\mathbf{r}_j-\mathbf{r}_i)$, with invariant weights $w_e$, are $\En$-equivariant.
\end{theorem}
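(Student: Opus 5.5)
We prove the four claims separately. Each one comes from a different piece of the construction: (i) from the incidence structure alone, (ii) from hypothesis~1, (iii) from hypothesis~2, and (iv) from hypothesis~3.

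\emph{Step (i).} The coboundaries $d_k$ are the fixed oriented incidence matrices of the regular CW complex $K$ and carry no trainable parameters. We therefore invoke the standard fact that the cellular cochain complex satisfies $d_{k+1}d_k=0$. The entry $(d_{k+1}d_k)_{\tau\sigma}$ sums incidence products $[\tau:\eta][\eta:\sigma]$ over the intermediate $(k+1)$-cells $\eta$. For a regular CW complex each nonzero entry has exactly two such terms, and they carry opposite signs. Training never modifies these matrices, so the identity holds exactly and independently of the parameters.

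\emph{Step (ii).} By hypothesis~1, each placed metric factors as $H=G^\top G$. The quadratic-form computation \eqref{eq:psd_energy} then applies directly: $\langle \mathbf{x}_k,L_k^H\mathbf{x}_k\rangle$ is a sum of two squared $H$-norms, and each is $\ge0$. Symmetry of $L_k^H$ follows from the symmetry of each $H$, so $L_k^H\succeq0$ for every input.

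\emph{Step (iii).} We show that each building block of $\Phi$ commutes with $\rho_R$ and then compose. For a Hodge block we have $L_k^H(\rho_R\mathbf{x})\,\rho_R\mathbf{x}_k = L_k^H(\mathbf{x})\,\mathbf{x}_kR^\top$. This holds because $L_k^H$ depends on $\mathbf{x}$ only through the $H_j$, which are invariant by hypothesis~2, and because $L_k^H$ acts on the left (cell index) while $R^\top$ acts on the right (channels). The same argument covers the fixed $d_k$ and cross-degree terms. Residual sums are linear, so they commute with right multiplication. For the gate, $\|\mathbf{m}R^\top\|=\|\mathbf{m}\|$ because $R$ is orthogonal, so $\sigma_{\rm gate}(\mathbf{m}R^\top)=h(\|\mathbf{m}\|)\mathbf{m}R^\top$. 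Composition of equivariant maps is equivariant.

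\emph{Main obstacle.} The delicate point is that $\Phi$ must contain no channel-mixing linear maps $W$ acting as $\mathbf{x}W$ with unconstrained $W$, since such maps break $\OC$-equivariance. I would state explicitly that the layer class is restricted to the listed operations, with scalar channel weights allowed only if they are multiples of the identity. Under that restriction the argument above closes.

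\emph{Step (iv).} Let $g=(Q,t)\in\En$ act on vertex positions by $\mathbf{r}\mapsto Q\mathbf{r}+t$. Distances, angles, areas, and incidences are unchanged under $g$. By hypothesis~3, the lifted features, all $H_j$, and hence all hidden cochains are unchanged as well; any cochain-dependent quantity, including scalar readouts, is therefore invariant. For vector readouts, the weights $w_e$ are invariant, while $(Q\mathbf{r}_j+t)-(Q\mathbf{r}_i+t)=Q(\mathbf{r}_j-\mathbf{r}_i)$ because the translation cancels. Linearity then gives output $Q\sum_e w_e(\mathbf{r}_j-\mathbf{r}_i)$, which is the required equivariance: rotations and reflections act through $Q$ and translations act trivially on difference vectors.
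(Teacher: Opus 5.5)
Your proposal is correct and follows the paper's proof essentially step for step: the two-term sign cancellation for $d^2=0$ (which the paper writes out only for $d_1d_0$), the quadratic-form identity for (ii), commutation of cell-index operators with right multiplication by $R^\top$ plus norm-invariance of the gate for (iii), and propagation of $\En$-invariance from the lifted features through the layers for (iv). The paper additionally treats the affine-free RMSNorm explicitly, which your gate argument already covers because it is a norm gate with $h(s)=\sqrt{C}/s$; it also writes out as an explicit induction on the layer index what your phrase ``hence all hidden cochains are unchanged'' compresses.
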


\begin{proof}[Proof sketch]
(i) is the cellular identity $d^2=0$. (ii) follows from the quadratic form identity \eqref{eq:psd_energy}. For (iii), $d_k$ and $H_k$ act on the cell index, while $R\in\OC$ acts on the channel index; these actions commute. The metric itself is unchanged by assumption, and the norm gate satisfies $\sigma_{\rm gate}(\mathbf{m}R^\top)=\sigma_{\rm gate}(\mathbf{m})R^\top$. For (iv), all scalar quantities entering the layer are functions of Euclidean invariants. Multiplying an invariant scalar by the displacement $\mathbf{r}_j-\mathbf{r}_i$, which transforms equivariantly under $\En$, produces an equivariant vector. The full proof is given in Appendix~\ref{app:proofs}.
\end{proof}

\textbf{Abelian gauge-field observables.}
For Abelian gauge-field tasks, the fixed cochain complex provides an exact invariance at the level of physical observables. Let a $U(1)$ connection be represented additively as a 1-cochain $A$, and define curvature by $F=d_1 A$. For any 0-cochain $\lambda$, the gauge shift $A\mapsto A+d_0\lambda$ leaves curvature unchanged:
\[
  F'=d_1(A+d_0\lambda)=d_1 A+d_1 d_0\lambda=d_1 A,
\]
because $d_1 d_0=0$. Curvature-type observables depending on $A$ through $dA$ are therefore exactly invariant under Abelian gauge shifts; for compact $U(1)$ phases the equality is understood modulo $2\pi$.

\section{Riemannian Hodge Message Passing}
\label{sec:method}

RHMP implements Theorem~\ref{thm:equivariance} directly.  The
architecture lifts input fields to $0$-, $1$-, and $2$-cochains,
then applies metric-weighted Hodge message passing, and finally reads out
scalar or vector quantities according to the task. The learned geometric object inside each Hodge block is the
discrete cochain metric \(H_k\); coboundaries \(d_k\) remain fixed by the
oriented cell complex.
Appendix~\ref{app:cochain-dataflow} gives a schematic
of the cochain-level data flow, including the chain maps, Hodge return loops,
and matrix expressions.


\subsection{Weighted Hodge Message-Passing Layer}
\label{sec:hodge_mp}

Physical quantities are assigned to cochain degrees according to geometric
type: $0$-cochains carry scalar fields such as pressure or temperature,
$1$-cochains carry flux-like fields such as velocity components or gauge
connections, and $2$-cochains carry intensity-like fields such as vorticity
or field strength.  The coboundaries $d_k$ are fixed once the oriented cell
complex is fixed, so learning leaves $d^2=0$ intact.

For a $k$-cochain feature $\mathbf{x}_k^{(\ell)}$, RHMP forms a
same-degree metric Hodge message
\begin{equation}
  \mathbf{m}_{k,\mathrm{self}}^{(\ell)}
  = d_{k-1}H_{k-1}^{\downarrow,(\ell)}d_{k-1}^\top\mathbf{x}_k^{(\ell)}
  + d_k^\top H_{k+1}^{\uparrow,(\ell)}d_k\mathbf{x}_k^{(\ell)}.
  \label{eq:self_hodge_message}
\end{equation}
It also exchanges information across adjacent cochain degrees through
metric-aware transport,
\begin{equation}
  \mathbf{m}_{k,\mathrm{cross}}^{(\ell)}
  = d_{k-1}H_{k-1}^{\mathrm{cross},(\ell)}\mathbf{x}_{k-1}^{(\ell)}
  + d_k^\top H_{k+1}^{\mathrm{cross},(\ell)}\mathbf{x}_{k+1}^{(\ell)},
  \label{eq:cross_hodge_message}
\end{equation}
with boundary terms omitted when the adjacent degree is absent.  The same
invariant metric predictor and SPD constraints are used for
\(H^{\mathrm{cross}}\); the superscript records its use in cross-degree
transport.
The layer is
\begin{equation}
  \mathbf{x}_k^{(\ell+1)}
  = \mathbf{x}_k^{(\ell)}
  + \mathrm{RMSNorm}\!\left(
      \sigma_{\rm gate}\!\left(
      \alpha\,\mathbf{m}_{k,\mathrm{self}}^{(\ell)}
      +(1-\alpha)\,\mathbf{m}_{k,\mathrm{cross}}^{(\ell)}
      \right)\right),
  \label{eq:hodge_mp}
\end{equation}
where $\alpha\in(0,1)$ is learned and
$\sigma_{\rm gate}(\mathbf{m})=g_\theta(\|\mathbf{m}\|)\mathbf{m}$ is a
norm-gated nonlinearity.  RMSNorm is applied per cell without learnable
affine parameters, preserving channel-basis symmetry.

Eq.~\eqref{eq:hodge_mp} is the architectural form of the paper's topology--geometry
factorization.  The maps $d_k$ determine which cells communicate and preserve
the cochain complex.  The learned SPD metrics $H_k$ determine the geometry of that
communication.

\subsection{Constructing Positive-Definite and Equivariant Metrics}
\label{sec:symmetry_construction}

The metric predictor receives only statistics that are invariant under a
change of channel basis.  For each $k$-cell we use the self-norm and
neighbor inner products,
\begin{equation}
  \psi(\mathbf{x}_k)_i
  = \Big[\|\mathbf{x}_k(i)\|_2^2,\;
  \mathrm{Agg}_{j\sim i}\,\langle \mathbf{x}_k(i),\mathbf{x}_k(j)\rangle,\;
  \text{cell geometry invariants}\Big],
  \label{eq:metric_features}
\end{equation}
where the aggregation is permutation-invariant over neighboring cells.  Since
norms and inner products are unchanged by $\mathbf{x}\mapsto\mathbf{x}R^\top$,
any deterministic metric predictor built from $\psi$ is cochain-frame invariant.

The simplest instantiation is the diagonal metric
\begin{equation}
  H_k
  = \diag\!\left(\softplus(\mathrm{MLP}_H(\psi(\mathbf{x}_k)))+\epsilon\right),
  \qquad \epsilon>0.
  \label{eq:diagonal_metric}
\end{equation}
This gives a strict positive-definiteness guarantee:
for every nonzero $z\in\R^{n_k}$,
\begin{equation}
  z^\top H_k z
  = \sum_i \big(\softplus(h_i)+\epsilon\big)z_i^2
  \ge \epsilon\|z\|_2^2>0.
  \label{eq:spd_guarantee}
\end{equation}
Thus the learned $H_k$ is SPD at every layer and every training step, and Eq.~\eqref{eq:psd_energy} makes the corresponding Hodge operator positive semidefinite.
The weights of $\mathrm{MLP}_H$ are shared across all $k$-cells and are independent of mesh size $n_k$; only the scalar output $h_i$ scales with the number of cells (see scalability analysis in Appendix~\ref{app:complexity}).
The diagonal form in Eq.~\eqref{eq:diagonal_metric} gives the simplest SPD
certificate. In the main experiments, we use a more expressive
diagonal-plus-low-rank parameterization,
$H_k=B_kB_k^\top+\diag(\softplus(\mathbf{h}_k)+\epsilon)$ 
with rank $r=8$. 
Appendix~\ref{app:metric-comparison} compares this choice with other SPD
parameterizations, including full Cholesky forms.

Spatial behavior is handled by restricting geometric inputs to quantities
invariant under Euclidean motions:
edge lengths, vertex degrees, face
areas, and interior angles.  
Edge
orientation is handled separately through symmetric and antisymmetric feature
combinations. For an oriented edge \(e_{ij}\), RHMP constructs a \(1\)-cochain
feature of the form
\begin{align}
  \mathbf{x}_1(e_{ij})
  &= \phi_{\rm sym}(\mathbf{x}_0^i+\mathbf{x}_0^j,\ell_e,
      \deg_i+\deg_j) \\
  &\quad + \phi_{\rm asym}(\mathbf{x}_0^i-\mathbf{x}_0^j,\ell_e,
      \deg_i-\deg_j),
  \label{eq:edge_lift}
\end{align}
where \(\ell_e=\|\mathbf{r}_j-\mathbf{r}_i\|\). 
Higher-degree cochains are constructed by
oriented aggregation over boundary cells together with invariant face
features.
For tasks requiring ambient vector outputs, RHMP reconstructs vectors using an
invariant-scalar-times-displacement readout:
\begin{equation}
  \hat{\mathbf{v}}(v)
  = \frac{1}{\deg(v)}\sum_{e=(i,j)\ni v}
  w(\mathbf{x}_1(e),\|\mathbf{r}_{ij}\|^2,\ldots)
  (\mathbf{r}_j-\mathbf{r}_i).
  \label{eq:vector_reconstruction}
\end{equation}
Here \(w\) is an \(\En\)-invariant scalar and
\(\mathbf{r}_j-\mathbf{r}_i\) is translation-invariant and
\(\On\)-equivariant. Therefore each term, and hence the sum, transforms
equivariantly under \(\En\)~\cite{villar2021scalars}.

\subsection{Numerical Verification, Symmetry Hierarchy, and Spectral Expressivity}
\label{sec:method_verify}
\begin{table}[t]
\centering
\small
\caption{\textbf{Structural verification} (satisfied constraints out of $10$, grouped by family). A test is satisfied if its relative $L_2$ error is below $10^{-5}$ or holds exactly by construction; numeric and \textsc{n/a} entries count as unsatisfied. These tests identify which constraints each architecture enforces, though some baselines do not target all families. See Appendix~\ref{app:symmetry_detail} for per-subgroup errors.}
\label{tab:sym_pass_count}
\setlength{\tabcolsep}{3pt}
\adjustbox{max width=\linewidth}{%
\begin{tabular}{l|cccccccccccc|c}
\toprule
Constraint group
 & GCN & GAT & SchNet & EGNN & MPSN & SCCNN
 & GaugeEqCNN & GEM-CNN & CW~Net & Cliff-SMPN
 & FNO & DeepONet & \textbf{RHMP} \\
\midrule
Spatial $\En$ (3)        &  3 &  3 &  3 &  3 &  3 &  3 &  1 &  1 &  3 &  3 &  3 &  0 & \textbf{3} \\
Topology (2)             &  1 &  1 &  1 &  1 &  1 &  2 &  0 &  0 &  1 &  1 &  1 &  0 & \textbf{2} \\
Cochain-frame $\OC$ (4)  &  0 &  0 &  0 &  0 &  0 &  0 &  0 &  0 &  0 &  0 &  0 &  0 & \textbf{4} \\
Abelian $U(1)$ (1)       &  0 &  0 &  0 &  0 &  1 &  1 &  0 &  0 &  1 &  1 &  0 &  0 & \textbf{1} \\
\midrule
Total (10)               &  4 &  4 &  4 &  4 &  5 &  6 &  1 &  1 &  5 &  5 &  4 &  0 & \textbf{10} \\
\bottomrule
\end{tabular}
}
\vspace{-12pt}
\end{table}

We probe ten structural constraints, grouped into four families: spatial $\En$ subgroups (3 tests), topology checks (2; including $d^2\!=\!0$), cochain-frame $\OC$ subgroups (4; including channel rotation, permutation, and sign flips), and Abelian $U(1)$ curvature invariance under exact shifts (1). Table~\ref{tab:sym_pass_count} reports per-group pass counts: RHMP satisfies all ten, including the cochain-frame tests; other architectures satisfy complementary subsets according to their design. The full protocol, mesh and seed details, and per-subgroup numerical errors are in Appendix~\ref{app:symmetry_detail}.

From a spectral viewpoint, the metric parameters provide first-order control
over simple non-harmonic eigenvalues of the weighted Hodge operator
(Appendix Proposition~\ref{prop:expressivity}).  Even the diagonal metric family can locally perturb these eigenvalues near the identity metric, giving strictly more local spectral flexibility than the unweighted
Hodge Laplacian near the identity metric (Appendix Corollary~\ref{cor:diagonal}).

\section{Experiments}
\label{sec:experiments}
\begin{figure}[b]
\vspace{-1.3em}
    \centering
        \includegraphics[width=.95\textwidth]{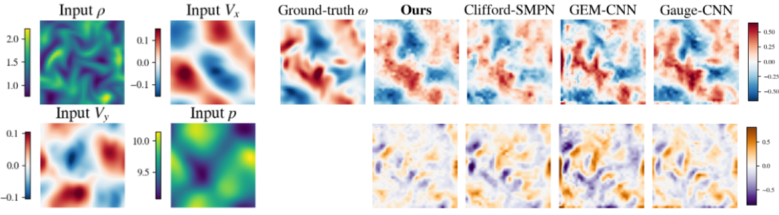}
    \caption{\textbf{Navier--Stokes vorticity.}
Left blocks summarize the inputs; right blocks compare ground truth, RHMP, and representative baselines.}
\vspace{-1.3em}
\label{fig:main}
\end{figure}

\begin{table}[t]
\centering
\caption{Quantitative results on the six fixed-mesh tasks, three metrics, and 11 baselines whose native definition admits irregular meshes.
SSIM ($\uparrow$), Pearson ($\uparrow$), NRMSE ($\downarrow$).
\textbf{Bold} = best, \underline{underline} = second best. Point estimates are the mean across three independent training seeds $\{42, 1, 2\}$; cross-seed and test-set bootstrap standard deviations are reported in Appendix~\ref{app:seed-variance} and~\ref{app:bootstrap}.
Comparison against FNO on the regular-grid setting (NS Vorticity) is discussed inline in §\ref{sec:main_results}; variable-mesh generalization (AirfRANS) is reported separately in Table~\ref{tab:airfoil}.
Baselines grouped by family: \colorbox{blue!6}{Graph}, \colorbox{orange!8}{Topological}, \colorbox{green!8}{Geometric}, \colorbox{purple!8}{Operator}.}
\label{tab:main}
\resizebox{\linewidth}{!}{%
\begin{tabular}{ll cccc cc cccc c c}
\toprule
& & \multicolumn{4}{c}{\cellcolor{blue!6} Graph}
& \multicolumn{2}{c}{\cellcolor{orange!8} Topological}
& \multicolumn{4}{c}{\cellcolor{green!8} Geometric}
& \multicolumn{1}{c}{\cellcolor{purple!8} Operator}
& \\
\cmidrule(lr){3-6} \cmidrule(lr){7-8} \cmidrule(lr){9-12} \cmidrule(lr){13-13}
Task & Metric
& \cellcolor{blue!6}GCN & \cellcolor{blue!6}GAT & \cellcolor{blue!6}SchNet & \cellcolor{blue!6}EGNN
& \cellcolor{orange!8}MPSN & \cellcolor{orange!8}SCCNN
& \cellcolor{green!8}GaugeEqCNN & \cellcolor{green!8}GEM-CNN & \cellcolor{green!8}CW Net & \cellcolor{green!8}Cliff-SMPN
& \cellcolor{purple!8}DeepONet
& \textbf{RHMP} \\
\midrule
\multirow{3}{*}{NS Vort}
 & SSIM & \cellcolor{blue!6}.726 & \cellcolor{blue!6}.716 & \cellcolor{blue!6}.719 & \cellcolor{blue!6}.725 & \cellcolor{orange!8}.721 & \cellcolor{orange!8}.721 & \cellcolor{green!8}.973 & \cellcolor{green!8}.936 & \cellcolor{green!8}\underline{.981} & \cellcolor{green!8}.941 & \cellcolor{purple!8}.730 & \textbf{.984} \\
 & Pearson & \cellcolor{blue!6}.081 & \cellcolor{blue!6}.024 & \cellcolor{blue!6}.026 & \cellcolor{blue!6}.039 & \cellcolor{orange!8}.009 & \cellcolor{orange!8}.016 & \cellcolor{green!8}.947 & \cellcolor{green!8}.887 & \cellcolor{green!8}\underline{.950} & \cellcolor{green!8}.822 & \cellcolor{purple!8}.244 & \textbf{.970} \\
 & NRMSE & \cellcolor{blue!6}.034 & \cellcolor{blue!6}.034 & \cellcolor{blue!6}.034 & \cellcolor{blue!6}.034 & \cellcolor{orange!8}.034 & \cellcolor{orange!8}.034 & \cellcolor{green!8}.013 & \cellcolor{green!8}.017 & \cellcolor{green!8}\underline{.011} & \cellcolor{green!8}.018 & \cellcolor{purple!8}.034 & \textbf{.009} \\
\cmidrule(lr){1-14}
\multirow{3}{*}{Torus Adv}
 & SSIM & \cellcolor{blue!6}.708 & \cellcolor{blue!6}.719 & \cellcolor{blue!6}.766 & \cellcolor{blue!6}.539 & \cellcolor{orange!8}.777 & \cellcolor{orange!8}.777 & \cellcolor{green!8}.846 & \cellcolor{green!8}.865 & \cellcolor{green!8}\underline{.912} & \cellcolor{green!8}.870 & \cellcolor{purple!8}.346 & \textbf{.946} \\
 & Pearson & \cellcolor{blue!6}.774 & \cellcolor{blue!6}.767 & \cellcolor{blue!6}.775 & \cellcolor{blue!6}.774 & \cellcolor{orange!8}.784 & \cellcolor{orange!8}.794 & \cellcolor{green!8}.859 & \cellcolor{green!8}.867 & \cellcolor{green!8}\underline{.918} & \cellcolor{green!8}.883 & \cellcolor{purple!8}.125 & \textbf{.949} \\
 & NRMSE & \cellcolor{blue!6}.031 & \cellcolor{blue!6}.032 & \cellcolor{blue!6}.031 & \cellcolor{blue!6}.032 & \cellcolor{orange!8}.031 & \cellcolor{orange!8}.030 & \cellcolor{green!8}.025 & \cellcolor{green!8}.024 & \cellcolor{green!8}\underline{.019} & \cellcolor{green!8}.023 & \cellcolor{purple!8}.052 & \textbf{.015} \\
\cmidrule(lr){1-14}
\multirow{3}{*}{Ellips Flow}
 & SSIM & \cellcolor{blue!6}.210 & \cellcolor{blue!6}.210 & \cellcolor{blue!6}.209 & \cellcolor{blue!6}.210 & \cellcolor{orange!8}.152 & \cellcolor{orange!8}.152 & \cellcolor{green!8}\underline{.981} & \cellcolor{green!8}.882 & \cellcolor{green!8}.478 & \cellcolor{green!8}.170 & \cellcolor{purple!8}.210 & \textbf{.988} \\
 & Pearson & \cellcolor{blue!6}.024 & \cellcolor{blue!6}.022 & \cellcolor{blue!6}.033 & \cellcolor{blue!6}.026 & \cellcolor{orange!8}.054 & \cellcolor{orange!8}.048 & \cellcolor{green!8}\underline{.981} & \cellcolor{green!8}.899 & \cellcolor{green!8}.497 & \cellcolor{green!8}.114 & \cellcolor{purple!8}.072 & \textbf{.988} \\
 & NRMSE & \cellcolor{blue!6}.061 & \cellcolor{blue!6}.061 & \cellcolor{blue!6}.061 & \cellcolor{blue!6}.061 & \cellcolor{orange!8}.070 & \cellcolor{orange!8}.070 & \cellcolor{green!8}\underline{.012} & \cellcolor{green!8}.027 & \cellcolor{green!8}.053 & \cellcolor{green!8}.070 & \cellcolor{purple!8}.061 & \textbf{.009} \\
\cmidrule(lr){1-14}
\multirow{3}{*}{Maxwell}
 & SSIM & \cellcolor{blue!6}.607 & \cellcolor{blue!6}.606 & \cellcolor{blue!6}.637 & \cellcolor{blue!6}.603 & \cellcolor{orange!8}.385 & \cellcolor{orange!8}.539 & \cellcolor{green!8}.821 & \cellcolor{green!8}.755 & \cellcolor{green!8}\underline{.834} & \cellcolor{green!8}.594 & \cellcolor{purple!8}.621 & \textbf{.886} \\
 & Pearson & \cellcolor{blue!6}.056 & \cellcolor{blue!6}.167 & \cellcolor{blue!6}.392 & \cellcolor{blue!6}.173 & \cellcolor{orange!8}.048 & \cellcolor{orange!8}.473 & \cellcolor{green!8}.703 & \cellcolor{green!8}.553 & \cellcolor{green!8}\underline{.760} & \cellcolor{green!8}.598 & \cellcolor{purple!8}.322 & \textbf{.822} \\
 & NRMSE & \cellcolor{blue!6}.019 & \cellcolor{blue!6}.019 & \cellcolor{blue!6}.018 & \cellcolor{blue!6}.019 & \cellcolor{orange!8}.029 & \cellcolor{orange!8}.025 & \cellcolor{green!8}.013 & \cellcolor{green!8}.015 & \cellcolor{green!8}\underline{.012} & \cellcolor{green!8}.023 & \cellcolor{purple!8}.019 & \textbf{.011} \\
\cmidrule(lr){1-14}
\multirow{3}{*}{Wilson}
 & SSIM & \cellcolor{blue!6}.805 & \cellcolor{blue!6}.943 & \cellcolor{blue!6}.897 & \cellcolor{blue!6}.956 & \cellcolor{orange!8}.789 & \cellcolor{orange!8}.896 & \cellcolor{green!8}.967 & \cellcolor{green!8}.917 & \cellcolor{green!8}\underline{.977} & \cellcolor{green!8}.950 & \cellcolor{purple!8}.845 & \textbf{.993} \\
 & Pearson & \cellcolor{blue!6}.171 & \cellcolor{blue!6}.716 & \cellcolor{blue!6}.542 & \cellcolor{blue!6}.760 & \cellcolor{orange!8}.272 & \cellcolor{orange!8}.587 & \cellcolor{green!8}.841 & \cellcolor{green!8}.649 & \cellcolor{green!8}\underline{.850} & \cellcolor{green!8}.766 & \cellcolor{purple!8}.296 & \textbf{.950} \\
 & NRMSE & \cellcolor{blue!6}.014 & \cellcolor{blue!6}.008 & \cellcolor{blue!6}.011 & \cellcolor{blue!6}.007 & \cellcolor{orange!8}.015 & \cellcolor{orange!8}.012 & \cellcolor{green!8}.007 & \cellcolor{green!8}.010 & \cellcolor{green!8}\underline{.005} & \cellcolor{green!8}.008 & \cellcolor{purple!8}.014 & \textbf{.003} \\
\cmidrule(lr){1-14}
\multirow{3}{*}{Yang--Mills}
 & SSIM & \cellcolor{blue!6}.439 & \cellcolor{blue!6}.487 & \cellcolor{blue!6}.569 & \cellcolor{blue!6}.355 & \cellcolor{orange!8}.523 & \cellcolor{orange!8}.511 & \cellcolor{green!8}\underline{.712} & \cellcolor{green!8}.563 & \cellcolor{green!8}.600 & \cellcolor{green!8}.540 & \cellcolor{purple!8}.405 & \textbf{.852} \\
 & Pearson & \cellcolor{blue!6}.184 & \cellcolor{blue!6}.283 & \cellcolor{blue!6}.444 & \cellcolor{blue!6}.028 & \cellcolor{orange!8}.341 & \cellcolor{orange!8}.325 & \cellcolor{green!8}\underline{.630} & \cellcolor{green!8}.433 & \cellcolor{green!8}.480 & \cellcolor{green!8}.389 & \cellcolor{purple!8}.028 & \textbf{.810} \\
 & NRMSE & \cellcolor{blue!6}.035 & \cellcolor{blue!6}.035 & \cellcolor{blue!6}.032 & \cellcolor{blue!6}.041 & \cellcolor{orange!8}.033 & \cellcolor{orange!8}.033 & \cellcolor{green!8}\underline{.028} & \cellcolor{green!8}.033 & \cellcolor{green!8}.032 & \cellcolor{green!8}.033 & \cellcolor{purple!8}.036 & \textbf{.021} \\
\bottomrule
\end{tabular}%
}
\vspace{-1.3em}
\end{table}

We evaluate RHMP on seven physical systems chosen to probe complementary structural requirements: structured-grid vorticity prediction,
scalar transport on a torus, equivariant surface-vector prediction on an
ellipsoid, Maxwell--Poisson electrostatics, Abelian \(U(1)\) curvature
prediction, non-Abelian \(SU(2)\) field-strength prediction, and variable-mesh
AirfRANS pressure prediction. 
The first six tasks share a fixed-mesh evaluation protocol and form the main quantitative comparison (Table~\ref{tab:main}, §\ref{sec:main_results}). AirfRANS uses an independent unstructured mesh for each sample and is reported
separately in Section~\ref{sec:airfoil}.
Full task definitions, parameter
distributions, qualitative examples, and conservation-residual checks are given
in Appendix~\ref{app:tasks} and Appendix~\ref{app:qualitative}.

Each task is selected to isolate a different structural demand. \textbf{Navier--Stokes vorticity prediction} (PDEBench~\cite{takamoto2022pdebench}) tests whether the Hodge factorization remains useful even
on a regular grid.
\textbf{Torus convection--diffusion} test scalar transport on a nontrivial topology.
%
\textbf{Ellipsoid surface flow} requires $\En$-equivariant vector readout~\cite{bhatia2013helmholtz} .
\textbf{Maxwell--Poisson electrostatics} tests curl-free electric-field
prediction.
%
\textbf{$U(1)$ Wilson loop}~\cite{wilson1974confinement} tests exact Abelian curvature invariance under $A\mapsto A+d\lambda$.
The \textbf{$SU(2)$ Yang--Mills}~\cite{creutz1980monte} task tests cochain differentials together with the nonlinear commutator term $F=dA+[A,A]$.
\textbf{AirfRANS airfoil pressure}~\cite{bonnet2022airfrans} tests transfer across per-sample variable meshes.

We compare with 12 baselines, covering graph methods
(GCN \cite{kipf2017semi}, GAT \cite{velickovic2018graph},
SchNet \cite{schutt2017schnet}, EGNN \cite{satorras2021en}),
topological methods (MPSN \cite{bodnar2021weisfeiler_sc},
SCCNN \cite{yang2022sccnn}),
geometric methods (GaugeEquivCNN \cite{cohen2019gauge},
GEM-CNN \cite{dehaan2021gem},
CW Net \cite{bodnar2021weisfeiler},
Clifford-SMPN \cite{liu2024clifford}),
and operator learning methods
(FNO \cite{li2021fourier}, DeepONet \cite{lu2021deeponet}).
All models are trained for 100 epochs under the same data split
(Adam, cosine annealing $10^{-3} \to 10^{-5}$, three independent training seeds $\{42, 1, 2\}$;
point estimates are the seed mean, with test-set bootstrap and cross-seed standard deviations in Appendix~\ref{app:bootstrap} and~\ref{app:seed-variance}). Baselines are matched to RHMP in parameter count up to a $+20\%$ headroom.

\subsection{Main Fixed-Mesh Results}
\label{sec:main_results}

Table~\ref{tab:main} reports results on the six fixed-mesh tasks.
RHMP achieves the best performance on all six tasks, with the largest margins over the strongest baseline on the gauge-theoretic settings.
For objectivity, each baseline is evaluated in the form proposed in its original paper.
We discuss the results below, grouped by structural regime. 

\textbf{Structured grids.}
Navier--Stokes vorticity is the only regular-grid task and therefore the only
fixed-mesh benchmark where the spectral-grid baseline FNO is naturally
applicable. FNO achieves SSIM~0.976 / NRMSE~0.011 and RHMP achieves SSIM~0.984 / NRMSE~0.009, suggesting that the Hodge factorization remains useful even on structured grids.





\begin{figure}[t]
\vspace{-1.3em}
    \centering
        \includegraphics[width=\textwidth]{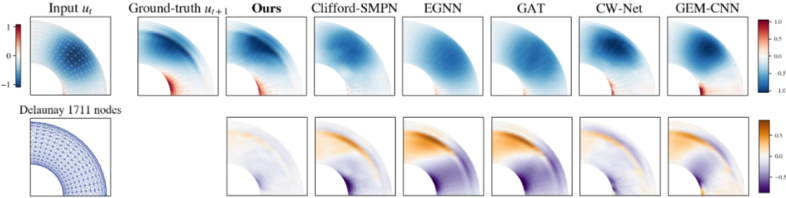}
    \caption{\textbf{Torus advection--diffusion.} Left blocks summarize the inputs; right blocks compare ground truth, RHMP, and representative baselines.}
\vspace{-1.3em}
    \label{fig:main}
\end{figure}

\begin{figure*}[b]
\vspace{-1.2em}
\centering
\begin{minipage}[b]{0.43\linewidth}
\centering
\captionof{table}{Variable-mesh generalization on AirfRANS. SSIM ($\uparrow$), Pearson ($\uparrow$), NRMSE ($\downarrow$). \textbf{Bold} = best, \underline{underline} = second best.}
\label{tab:airfoil}
\begin{sc}
\footnotesize
\resizebox{\linewidth}{!}{
\begin{tabular}{lccc}
\toprule
Method & SSIM & Pearson & NRMSE \\
\midrule
GCN     & .543 & .428 & .038 \\
GAT     & .503 & .445 & .043 \\
SchNet  & .574 & \underline{.591} & \underline{.035} \\
EGNN    & \underline{.695} & .573 & .036 \\
\textbf{RHMP} & \textbf{.809} & \textbf{.796} & \textbf{.028} \\
\bottomrule 
\end{tabular}
}
\end{sc}
\end{minipage}\hfill
\begin{minipage}[b]{0.56\linewidth}
\centering
\includegraphics[width=\linewidth]{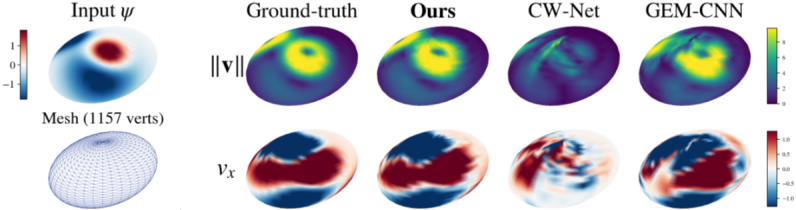}
\captionof{figure}{\textbf{Ellipsoid surface flow.} Left blocks summarize the inputs; right blocks compare ground truth, RHMP, and representative baselines. Additional qualitative comparisons in Appendix~\ref{app:qualitative}.}
\label{fig:real_world}
\end{minipage}
\vspace{-1.3em}
\end{figure*}


\textbf{Cochain support and topology.}
Node-based graph baselines perform poorly on tasks whose target quantities live
naturally on edges or faces, such as vorticity, surface flow, and gauge
curvature. Encoding these quantities as ordinary node features leaves cochain
type, incidence structure, and identities such as \(d^2=0\) to be inferred from
data. Cell-complex baselines improve substantially on these tasks, confirming
the value of representing fields on the correct geometric support.


\textbf{Learned geometry beyond fixed incidence.}
Among methods with access to higher-order topology, RHMP gains most when the
strength of geometric coupling must be learned from data. On the \(U(1)\)
Wilson-loop task, CW Net reaches SSIM~0.977, while RHMP reaches SSIM~0.993.
Both methods benefit from fixed cochain incidence, but RHMP additionally learns
the metric \(H_k\) controlling geometry-dependent propagation. The gap widens
on the \(SU(2)\) Yang--Mills task, where the model must capture both the
differential term \(dA\) and the nonlinear commutator \([A,A]\). On
Maxwell--Poisson, methods respecting cochain structure outperform graph and
operator-learning baselines, consistent with the curl-free constraint
\(\nabla\!\times\!\mathbf{E}=0\) encoded by \(d^2=0\).

\textbf{Frame symmetries.}
Gauge-equivariant surface methods such as GaugeEquivCNN and GEM-CNN are strong
on the ellipsoid surface-flow task, where local tangent-frame equivariance is
well aligned with the output geometry. The gauge-field benchmarks instead favor
cell-complex methods. RHMP combines fixed cochain topology with
cochain-frame-invariant metric prediction, separating manifold-frame symmetry
from the hidden cochain-frame symmetry introduced in this work.

\subsection{Mesh Generalization and Scalability}
\label{sec:airfoil}


AirfRANS~\cite{bonnet2022airfrans} tests a regime in which each sample carries its own unstructured mesh. Methods that are tied to a fixed spectrum, or fixed mesh are not directly applicable, so we compare against those whose native formulation supports per-sample variable meshes. RHMP achieves the
best performance across all three metrics. This transfer follows from the way the metric is parameterized. Because the RHMP metric $H_k$ depends only on cell-level $\OC$-invariants, and not $n_k$, the same model can be applied across meshes of differing size and connectivity (see Figure~\ref{fig:qual_app_airfoil}). Larger-scale profiling and a 100K-cell comparison with CW Net are in Appendix~\ref{app:complexity}.


\begin{figure}[t]
\vspace{-1.3em}
    \centering
        \includegraphics[width=.93\textwidth]{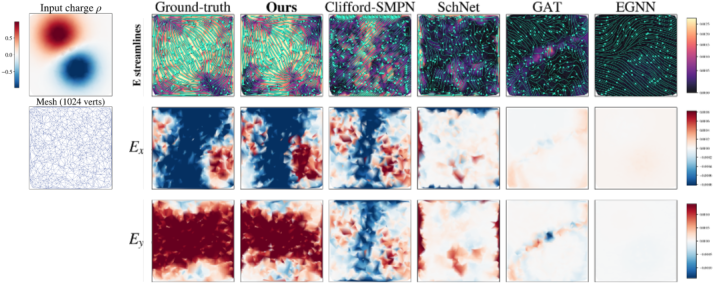}
    \caption{\textbf{Maxwell–Poisson electrostatics.} Left blocks summarize the inputs; right blocks compare ground truth, RHMP, and representative baselines.}
    \vspace{-1.3em}
    \label{fig:main}
\end{figure}

\begin{figure}[b]
\vspace{-1.2em}
    \centering
        \includegraphics[width=\textwidth]{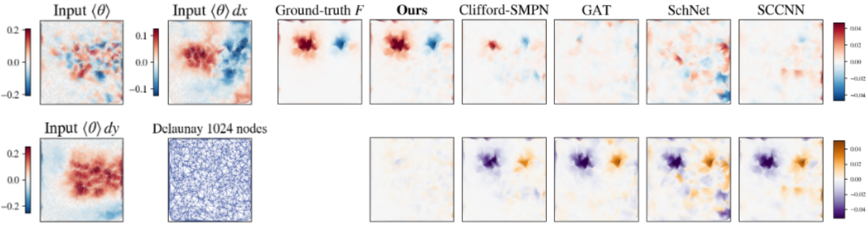}
    \caption{\textbf{U(1) Wilson-loop curvature.} Left blocks summarize the inputs; right blocks compare ground truth, RHMP, and representative baselines.}
    \vspace{-1em}
    \label{fig:main}
\end{figure}

\subsection{Ablation Studies}
\label{sec:ablation}

We ablate four RHMP design choices on four tasks: CNS vorticity, ellipsoid surface flow, $U(1)$ Wilson loop, and $SU(2)$ Yang--Mills. Each variant changes one component at a time and is retrained for $50$ epochs with the same split and parameter budget. Full protocols and tables are in Appendix~\ref{app:ablation}.  

The ablations separate performance-driving components from structure-preserving components. Setting $H_k=I$ or disabling cross-dimensional transport ($\alpha=1$) sharply reduces $R^2$ on gauge-curvature and structured-grid tasks. For example, Wilson loop drops from $0.96$ to $0.50$ with $H_k=I$. Ellipsoid surface flow is unaffected ($\Delta R^2\le0.05$), consistent with its one-step coexact target. By contrast, learning scalar $d_k$ or replacing the norm-gated nonlinearity with element-wise ReLU changes $R^2$ by at most $0.04$, but breaks structural diagnostics: $\|d_1d_0\|_F$ grows from $0$ to $\sim10^1$, and cochain-frame equivariance error from $\sim10^{-6}$ to $\sim10^0$. Thus, fixed $d_k$ and the $\OC$-equivariant nonlinearity are essential for topology and cochain-frame symmetry
preservation.

\section{Conclusion and Limitations}
\label{sec:conclusion}

\textbf{Conclusion.}
RHMP is a metric-centered architecture for mesh fields: fixed coboundaries preserve the cochain complex, while learned SPD cochain metrics control geometry-dependent propagation. This yields positive semidefinite (PSD) Hodge operators, cochain-frame equivariant layers, and exact Abelian curvature invariance. Across seven benchmarks, these constraints bring consistent gains, especially when conservation, learned geometry, topology, and gauge structure interact.

\textbf{Limitations.}
RHMP shows that learning SPD cochain metrics with fixed coboundaries can capture geometry, anisotropy, variable-mesh transfer, and gauge-field structure. Natural next steps are to specialize the metric layer further: group-valued transport to move beyond fitting SU(2) field strength toward full non-Abelian gauge covariance, better-conditioned SPD parameterizations for highly heterogeneous media (Appendix.~\ref{app:metric-comparison}), and compressed or basis-tied per-cell metric activations for larger simulations (Appendix.~\ref{app:complexity}). 

\newpage


{\small
\bibliographystyle{plainnat}
\bibliography{references}
}

\appendix

\section{Broader Impacts}
\label{app:broader_impacts}

This work is directly aimed at surrogate modeling for physical simulation.
The most immediate application is reducing the computational cost of CFD simulation in engineering design scenarios (e.g., airfoil optimization, thermal management),
as well as accelerating routine computations in materials science (property inference for inhomogeneous media)
and computational physics (surrogate solvers for lattice gauge theory).
The method itself addresses the approximation of physical operators.
It involves no personal data, user behavior, or decision recommendations,
and introduces no surveillance or automated decision-making capabilities.
If surrogate models are used in safety-critical engineering simulations
(aerospace structures, nuclear reactors, etc.),
we recommend cross-validation against high-fidelity solvers prior to deployment,
together with quantified uncertainty estimates for the predictions.

\section{Primer: From Continuous Metrics to Discrete $k$-Form Metrics}
\label{app:metric_primer}

This appendix gives the geometric dictionary behind the main text and
clarifies why $H_k$ is the natural object to learn.

\subsection{Worked Example: One Triangle}
\label{app:triangle-example}

We illustrate the constructions of Section~\ref{sec:prelim} on the simplest nontrivial cell complex: a single triangle $K$ with three vertices $v_0, v_1, v_2$, three oriented edges $e_1=(v_0\!\to\!v_1)$, $e_2=(v_1\!\to\!v_2)$, $e_3=(v_0\!\to\!v_2)$, and one face $f_1$ oriented counterclockwise, so that $(n_0, n_1, n_2) = (3, 3, 1)$.

\paragraph{Cochains and coboundaries.}
A 0-cochain $x_0 \in \R^3$ assigns one scalar to each vertex; a 1-cochain $x_1 \in \R^3$ assigns one scalar to each edge; a 2-cochain $x_2 \in \R$ assigns a scalar to the face. The coboundary $d_0 \in \R^{3\times 3}$ maps vertex values to oriented edge differences: the row corresponding to edge $e=(v_i\!\to\!v_j)$ has $-1$ in column $i$ and $+1$ in column $j$, so $d_0$ acts as a discrete gradient. The coboundary $d_1 \in \R^{1\times 3}$ maps edge values to the face by summing oriented edge values around $\partial f_1$, so $d_1$ acts as a discrete curl. The matrices and their correspondence to the oriented mesh are shown in Figure~\ref{fig:triangle_coboundary}.

The identity $d_1 d_0 = 0$ holds combinatorially: each vertex on $\partial f_1$ appears in two boundary edges with opposite signs, so its contributions cancel. This is the discrete counterpart of $\nabla\!\times\!\nabla\varphi = 0$ and holds for every input $x_0$.

\begin{figure}[h!]
\centering
\includegraphics[width=0.65\linewidth]{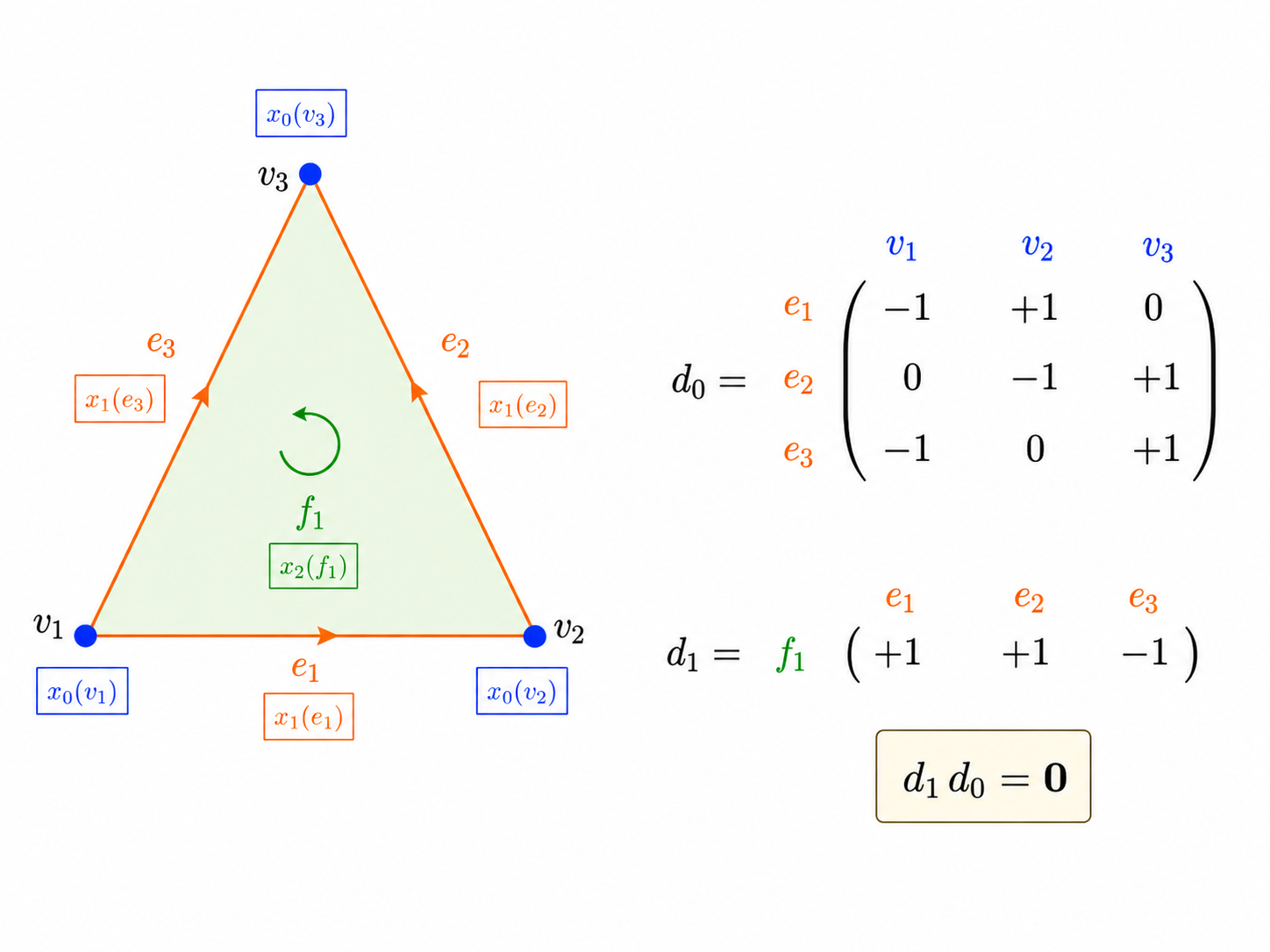}
\caption{The triangle complex $K$ and its coboundary matrices. Left: three vertices (blue), three oriented edges (orange, arrows indicate orientation), and one face (green, counterclockwise). Right: the coboundary matrices $d_0$ (discrete gradient, $3\times 3$) and $d_1$ (discrete curl, $1\times 3$), with row/column labels matching the mesh elements. The identity $d_1 d_0 = 0$ holds by construction: every vertex on the face boundary appears in exactly two boundary edges with opposite signs.}
\label{fig:triangle_coboundary}
\end{figure}

\paragraph{A numerical Hodge step.}
We evaluate one upper Hodge step $d_0^\top H_1 d_0 x_0$ on $x_0 = (2.0, 5.0, 3.0)$ (Figure~\ref{fig:triangle_hodge_step}).

\textit{(i) Coboundary $d_0$.} The coboundary computes oriented differences along each edge; for example, $e_1=(v_0\!\to\!v_1)$ yields $x_0(v_1) - x_0(v_0) = 5.0 - 2.0 = 3.0$. This step is determined by the mesh and contains no learnable parameters.

\textit{(ii) Metric weighting $H_1$.} The diagonal metric $H_1 = \diag(h_1, h_2, h_3)$ scales each edge's gradient by a per-edge weight. With $h_1 = 1.5$, $h_2 = 0.3$, $h_3 = 2.1$, the weighted gradient on $e_1$ is $1.5 \times 3.0 = 4.5$. Each $h_i$ has the interpretation of a local material parameter (e.g., conductivity, permeability) along edge $e_i$, and $H_1$ is the only learnable component of the operator.

\textit{(iii) Adjoint $d_0^\top$.} The adjoint coboundary aggregates the weighted gradients back to vertices, with signs determined by $d_0^\top$: each edge contributes $-1$ to its source vertex and $+1$ to its target. The result is $d_0^\top H_1 d_0 x_0 = (-6.6,\ 5.1,\ 1.5)$.

The composite $d_0^\top H_1 d_0$ is a weighted graph Laplacian whose connectivity pattern is determined by $d_0$ and whose edge weights are determined by $H_1$. Replacing $H_1$ by the permuted variant $H_1' = \diag(0.3, 2.1, 1.5)$ on the same complex yields $(-2.4,\ 5.1,\ -2.7)$ (inset of Figure~\ref{fig:triangle_hodge_step}); $d_0$ and the identity $d_1 d_0 = 0$ are unchanged, so the change in output is attributable entirely to $H_1$.

\begin{figure}[h!]
\centering
\includegraphics[width=0.95\linewidth]{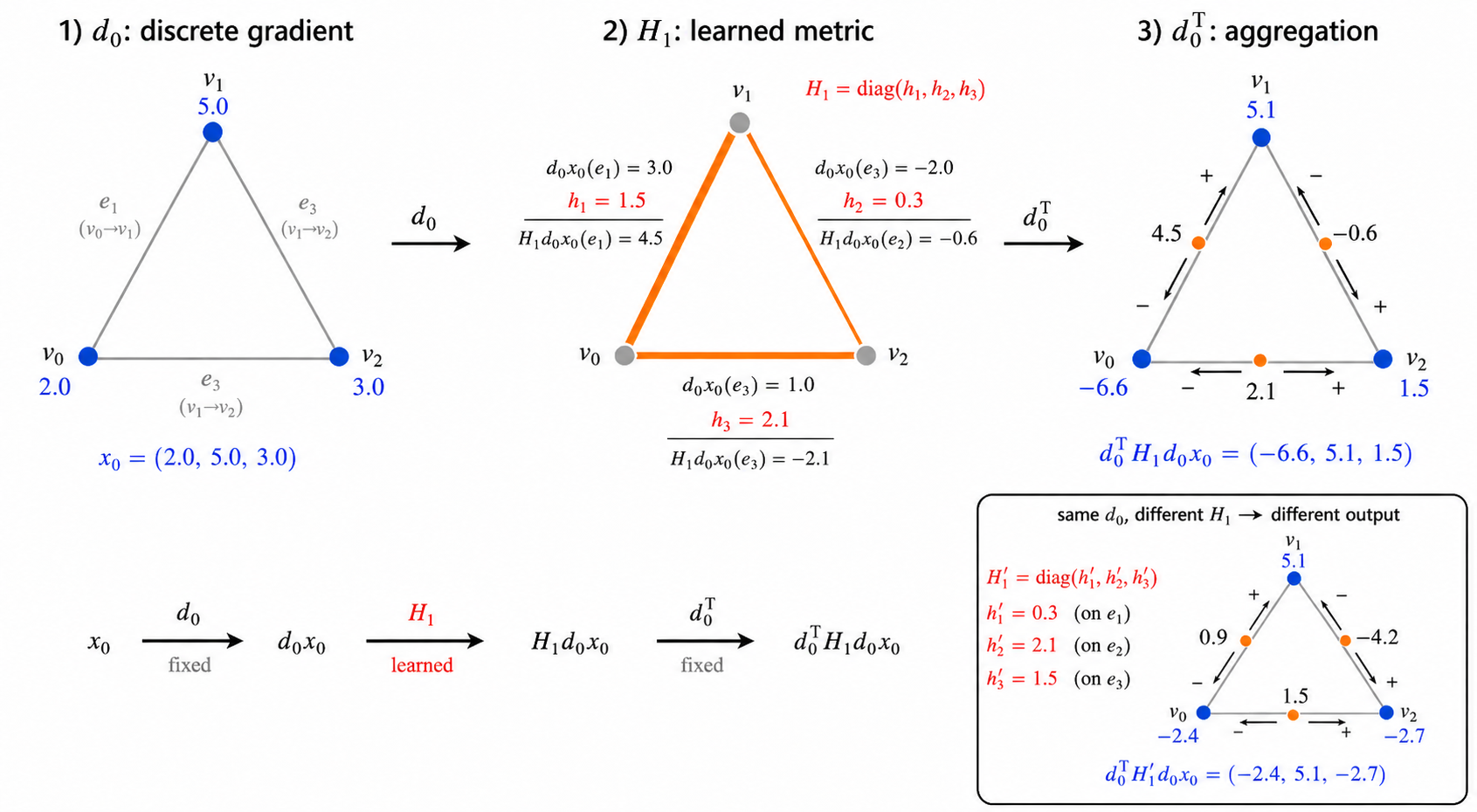}
\caption{One upper Hodge step $d_0^\top H_1 d_0 x_0$ on the triangle of Figure~\ref{fig:triangle_coboundary}, with $x_0=(2.0, 5.0, 3.0)$. Panel 1: the coboundary $d_0$ computes oriented differences along edges (discrete gradient, fixed). Panel 2: the diagonal metric $H_1=\diag(1.5, 0.3, 2.1)$ weights each edge's gradient by a learned material parameter (red); edge thickness reflects the metric value. Panel 3: the adjoint $d_0^\top$ aggregates weighted gradients to vertices; arrows from each edge midpoint show the sign ($-$ at source, $+$ at target). Inset: the same $d_0$ with a permuted metric $H_1'=\diag(0.3, 2.1, 1.5)$ produces a different output, illustrating the topology--geometry separation.}
\label{fig:triangle_hodge_step}
\end{figure}

\subsection{Continuous $k$-form Background}

\paragraph{Continuous $k$-forms.}
A $k$-form $\omega$ is an object that can be integrated over oriented $k$-dimensional pieces of a domain. Functions are $0$-forms, line circulations and fluxes are represented by $1$-forms, and surface intensities such as vorticity or field strength are represented by $2$-forms. The exterior derivative $d$ maps $k$-forms to $(k+1)$-forms and satisfies $d^2=0$. This identity is topological, independent of the coordinate system and material parameters; the standard discrete counterparts are developed in DEC~\cite{desbrun2005discrete} and FEEC~\cite{arnold2006finite}.

\paragraph{The Riemannian metric and the Hodge star.}
A Riemannian metric $g$ defines lengths, angles, volumes, and the Hodge star
$\star_g$.  The metric-induced inner product on $k$-forms is
\begin{equation}
  \langle \omega,\eta\rangle_g
  = \int_M \omega\wedge \star_g\eta.
  \label{eq:continuous_form_inner}
\end{equation}
Changing $g$ changes this inner product and therefore changes diffusion,
wave propagation, constitutive response, and curvature-dependent terms.  The
exterior derivative $d$ stays the same; the geometry is in $\star_g$.

\paragraph{Discretization by cochains.}
Given an oriented cell complex, a smooth $k$-form is discretized by
integrating it over each $k$-cell:
\begin{equation}
  x_i = \int_{\sigma_i^k}\omega,\qquad i=1,\ldots,n_k.
\end{equation}
The vector $x\in\R^{n_k}$ is a $k$-cochain.  Stokes' theorem gives the
discrete coboundary matrices $d_k$ and preserves $d_{k+1}d_k=0$ exactly.

\paragraph{The discrete metric matrix.}
Choose basis functions $\{\varphi_i^k\}_{i=1}^{n_k}$ for discrete
$k$-forms.  The matrix representation of the continuous inner product is
\begin{equation}
  (H_k)_{ij}
  = \int_M \varphi_i^k\wedge \star_g\varphi_j^k.
  \label{eq:mass_matrix}
\end{equation}
For any coefficient vector $z$, $z^\top H_k z$ is the $L^2_g$ norm of the
corresponding discrete $k$-form.  Hence $H_k$ is symmetric positive definite
whenever the basis has no null mode.  This is the finite-dimensional version
of a Riemannian metric.

\paragraph{From $H_k$ to the Hodge operator.}
For a $k$-cochain $x_k$, the upper Hodge energy is
\begin{equation}
  \|d_k x_k\|_{H_{k+1}}^2
  = x_k^\top d_k^\top H_{k+1}d_k x_k.
\end{equation}
Thus the operator $d_k^\top H_{k+1}d_k$ is positive semidefinite whenever
$H_{k+1}\succ0$.  The lower coexact term has the analogous form
$d_{k-1}H_{k-1}d_{k-1}^\top$.  RHMP learns these metric matrices while
keeping the coboundaries fixed, preserving the topology while adapting the
geometry.

\paragraph{Why this implies the symmetry constraints.}
A metric is a coordinate-free object.  In the network this is implemented
by making the predicted $H_k$ invariant to feature-basis changes
($\OC$ invariance) and to translations, rotations, and reflections of the
spatial coordinate system ($\En$ invariance).  The corresponding Hodge
messages then inherit cochain-frame equivariance and spatial invariance/equivariance.
This is the logic used in Theorem~\ref{thm:equivariance}.

\section{Cochain Complex Data Flow}
\label{app:cochain-dataflow}

\begin{figure}[h!]
\centering
\includegraphics[width=0.95\linewidth]{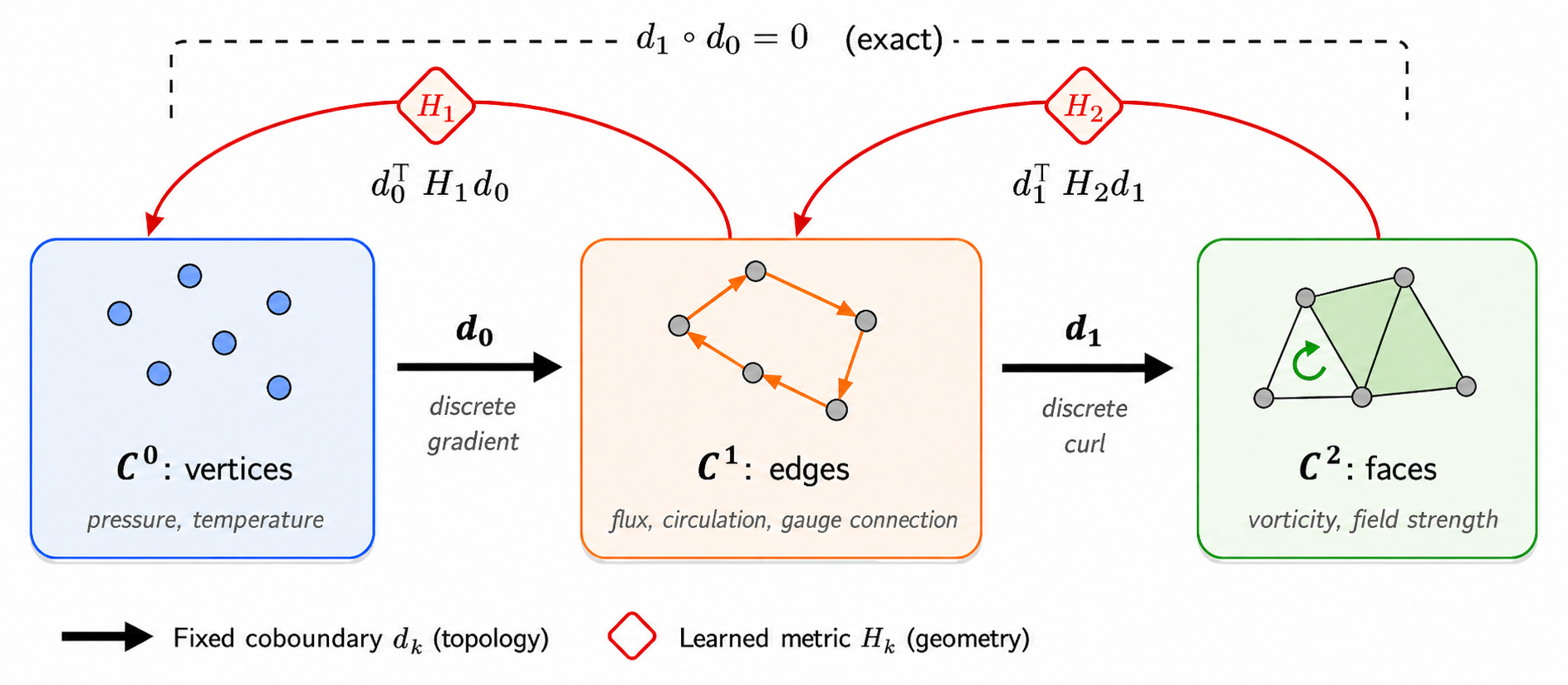}
\caption{Schematic of the cochain-complex data flow within one RHMP layer. The horizontal arrows are the fixed coboundaries $d_0$ and $d_1$ (oriented incidence; satisfy $d_1\circ d_0 = 0$). Each upper loop $d_k^\top H_{k+1} d_k$ lifts features to the adjacent cochain space via $d_k$, re-weights them with the learned SPD metric $H_{k+1}$, and projects back via $d_k^\top$.}
\label{fig:cochain_dataflow}
\end{figure}

Figure~\ref{fig:cochain_dataflow} provides a schematic view of the cochain complex data flow within one RHMP layer. The three cochain spaces $C^0(K)$, $C^1(K)$, $C^2(K)$ carry features on cells of increasing dimension. The forward coboundaries $d_0$ (discrete gradient) and $d_1$ (discrete curl) are determined by oriented incidence and remain frozen throughout training; their composition satisfies $d_1 \circ d_0 = 0$ exactly.

Geometry enters through two upper Hodge return loops. The upper loop on $C^0$ is the composition
\begin{equation}
  C^0(K) \xrightarrow{\;d_0\;} C^1(K) \xrightarrow{\;H_1\;} C^1(K) \xrightarrow{\;d_0^\top\;} C^0(K),
  \qquad x_0 \mapsto d_0^\top H_1\, d_0\, x_0,
\end{equation}
and the upper loop on $C^1$ is
\begin{equation}
  C^1(K) \xrightarrow{\;d_1\;} C^2(K) \xrightarrow{\;H_2\;} C^2(K) \xrightarrow{\;d_1^\top\;} C^1(K),
  \qquad x_1 \mapsto d_1^\top H_2\, d_1\, x_1.
\end{equation}
In each loop, $d_k$ lifts features to the adjacent cochain space, the learned metric $H_{k+1}\in\spd(n_{k+1})$ re-weights them, and $d_k^\top$ projects back. The composite $d_k^\top H_{k+1} d_k$ is positive semidefinite whenever $H_{k+1}\succ 0$ (Eq.~\eqref{eq:psd_energy}). A concrete numerical walkthrough on a single triangle is given in §\ref{app:triangle-example}.

\section{Architecture and Training Details}
\label{app:architecture}

\subsection{Hyperparameters}
All tasks use unified hyperparameters: $C = 128$ (number of cochain channels),
$L = 4$ (number of Hodge MP layers),
MP layer MLP hidden dimension 16,
diagonal-plus-low-rank-basis metric $H_k = B_k B_k^\top + \diag(\softplus(\mathbf{h}_k) + \epsilon)$ with basis rank $r = 8$ (selected in the metric-parameterization sweep of §\ref{app:metric-comparison}).
The total parameter count is approximately 200K (varying with $n_0, n_1, n_2$).

\subsection{Metric Parameterization and PSD Guarantee}
The main experiments use the diagonal-plus-low-rank-basis form
\[
  H_k = B_k B_k^\top + \diag(\softplus(\mathbf{h}_k) + \epsilon),\qquad \epsilon=10^{-6},\qquad B_k \in \R^{n_k \times r},\qquad r=8.
\]
The metric predictor outputs the diagonal scalars $\mathbf{h}_k$ per $k$-cell and the basis $B_k$ from the cell-level invariants. SPD is guaranteed by construction: for any nonzero $z\in\R^{n_k}$,
\[
  z^\top H_k z = \|B_k^\top z\|_2^2 + \sum_i\big(\softplus(h_i)+\epsilon\big) z_i^2 \ge \epsilon\|z\|_2^2 > 0.
\]
The PSD property of the Hodge operator then follows from the energy identity
\[
  z^\top d_k^\top H_{k+1}d_k z = (d_k z)^\top H_{k+1}(d_k z)\ge 0.
\]
No eigenvalue clipping or post-hoc projection is used. The pure diagonal form ($r{=}0$) and the full Cholesky form $H_k = L_k L_k^\top + \epsilon I$ are special cases of the same algebraic pattern; their empirical comparison against the $r{=}8$ default is given in §\ref{app:metric-comparison}.

\subsection{Empirical Comparison of Metric Parameterizations}
\label{app:metric-comparison}

To justify the choice of metric parameterization, we sweep five SPD variants of $H_k$ under matched training conditions on $SU(2)$ Yang--Mills (256-node Delaunay lattice; 1000 train / 200 val / 500 test, 40 epochs); the rest of the architecture and all hyperparameters are held fixed.

\begin{table}[h]
\centering
\small
\caption{Metric parameterization sweep on $SU(2)$ Yang--Mills (256-node mesh; 1000 train / 200 val / 500 test, 40 epochs). The diagonal-plus-low-rank-basis form with $r{=}8$ is the configuration used in all main-table experiments.}
\label{tab:metric-sweep}
\begin{tabular}{lcccccc}
\toprule
Variant & Params & Val $R^2$ & Test $R^2$ & NRMSE & SSIM & Pearson \\
\midrule
scalar                                          & 0.011M  & 0.488 & 0.495 & 0.0235 & 0.7653 & 0.666 \\
diagonal, $r{=}0$                               & 0.125M  & 0.487 & 0.493 & 0.0236 & 0.7646 & 0.664 \\
\textbf{diagonal-plus-basis, $r{=}8$ (default)} & \textbf{0.066M} & \textbf{0.503} & \textbf{0.510} & \textbf{0.0232} & \textbf{0.7740} & \textbf{0.676} \\
diagonal-plus-basis, $r{=}16$                   & 0.122M  & 0.416 & 0.421 & 0.0252 & 0.7291 & 0.610 \\
full Cholesky                                   & 36.573M & 0.452 & 0.455 & 0.0244 & 0.7449 & 0.637 \\
\bottomrule
\end{tabular}
\end{table}

\textbf{Diagonal-plus-low-rank-basis is Pareto-optimal.}
The default $r{=}8$ variant attains the highest test $R^2$ at the smallest parameter budget among the configurations with mesh-dependent capacity. Both larger-capacity variants in the same family ($r{=}16$, full Cholesky) and the smaller-capacity variants (scalar, $r{=}0$) trail it.

\textbf{Full Cholesky is harder to optimize despite higher nominal capacity.}
$H_k = L_k L_k^\top + \epsilon I$ has $\sim$$555\times$ the parameter count of the default; its training loss reaches lower values, but validation $R^2$ saturates $5.1$ points below the default (test $R^2$ gap $5.5$). Off-diagonal coupling expands the optimization landscape without expanding the useful capacity for this task.

\textbf{Capacity is not the only axis.}
The scalar variant (one learned $h$ per layer) reaches $R^2{=}0.495$, only $1.5$ points behind the default. This is consistent with Proposition~\ref{prop:expressivity}: a per-cell-varying diagonal is strictly more expressive than a uniform scaling, but the size of the gap is task-dependent. The $r{=}8$ basis appears to capture the additional useful structure while remaining well-conditioned; pushing to $r{=}16$ overshoots and loses ground.

The diagonal-plus-$r{=}8$-basis choice in the main experiments is the Pareto-optimal point of an architecture-level sweep under matched optimization.

\subsection{Batched Forward Pass}
For samples sharing the same mesh,
we merge the batch into the feature dimension by exploiting the linearity of sparse matrix multiply:
$\mathrm{spmm}(d, [\mathbf{x}_1, \dots, \mathbf{x}_B])
= [d\mathbf{x}_1, \dots, d\mathbf{x}_B]$,
avoiding per-sample loops.
The airfoil pressure task (per-sample meshes) requires a per-sample forward pass.

\subsection{Lifting Encoder Details}
The 0-cochain MLP input is $[\mathbf{f}_0 \| \deg(v) \| \bar{d}(v)]$,
where $\bar{d}(v)$ is the average distance to the neighbors.
The 2-cochain MLP input is
$[\sum_{e \in \partial f} \sigma_e \mathbf{x}_1(e) \| A_f \| \alpha_0, \alpha_1, \alpha_2]$,
where $\sigma_e \in \{+1, -1\}$ is given by the sparsity structure of $d_1$.

\subsection{Compute Environment}
\label{app:compute}
All experiments were conducted on a workstation with two NVIDIA RTX PRO 6000 Blackwell GPUs (96 GB each); each individual run uses a single GPU.
Software environment: Python 3.12, PyTorch 2.10, CUDA 12.8.
No third-party geometric deep learning library dependencies (torch-scatter, PyG, etc.) are used;
sparse matrix operations use PyTorch's native \texttt{torch.sparse}.

\subsection{Training Cost}
Each model-task pair is trained for 100 epochs,
and the total training time of RHMP per task is as follows:
NS Vorticity 20 min,
Torus Adv-Diff 5 min,
Ellipsoid Flow 24 min,
Maxwell 14 min,
Wilson Loop 21 min,
Yang-Mills 27 min,
Airfoil 35 min.
The total training time of RHMP across all 7 tasks is approximately 2.5 hours.
The full experiment including 12 baselines takes about 24 GPU hours.
Code and pre-trained checkpoints will be released with the paper.

\section{Benchmark Tasks}
\label{app:tasks}

\subsection{Design Principles}

The 7 tasks are organized around the structural capability being tested.
Public benchmarks serve as external anchors, and the remaining tasks fill in
structural dimensions beyond currently public data.  Table~\ref{tab:bench_design}
lists, for each task, the corresponding physical domain, the structural
capability being tested, the diagnostic signal associated with that capability,
and the external anchor.

\begin{table}[h]
\centering
\small
\caption{Design mapping of the benchmark tasks. Each task corresponds to a structural capability that can be characterized independently;
the diagnostic-signal column lists the structural cue measured by that task.}
\label{tab:bench_design}
\adjustbox{max width=\linewidth}{%
\begin{tabular}{llll}
\toprule
Task & Physical system & Capability tested & Diagnostic signal \\
\midrule
NS Vorticity      & Compressible NS~\cite{takamoto2022pdebench} & Frequency-domain regular-grid surrogate       & Frequency-domain structure \\
Torus Adv-Diff    & Scalar transport on a genus-1 surface & $d_0$ under nontrivial topology & Topology-sensitive transport \\
Ellipsoid Flow    & Surface flow~\cite{bhatia2013helmholtz}   & $\En$-equivariant vector reconstruction      & Cochain-type support \\
Maxwell-Poisson   & Electrostatics   & Exact encoding of $\nabla\!\times\!\mathbf{E}=0$ & Conservation residual \\
Wilson Loop       & $U(1)$ lattice gauge theory~\cite{wilson1974confinement} & Coboundary $d_0$ accuracy    & Closure of $F=dA$ \\
Yang-Mills SU(2)  & $SU(2)$ non-abelian gauge theory~\cite{creutz1980monte} & Coupling of differential and algebraic terms      & Explicit $[A,A]$ coupling \\
Airfoil Pressure  & RANS CFD~\cite{bonnet2022airfrans}  & Per-sample variable-mesh generalization         & Mesh-transfer behavior \\
\bottomrule
\end{tabular}
}
\end{table}

The seven physical systems span a spectrum of structural difficulty.
Compressible Navier--Stokes~\cite{takamoto2022pdebench} and external aerodynamics~\cite{bonnet2022airfrans} place the architecture against established surrogates on common PDEs and engineering CFD;
the remaining tasks reach into more demanding regimes (advection--diffusion on closed surfaces, surface flow on curved manifolds, electrostatics with hard conservation constraints, and $U(1)$ / $SU(2)$ lattice gauge theory), where standard benchmarks do not yet probe the relevant differential and gauge structure.
For these systems we follow the established discretization schemes for the underlying physics~\cite{desbrun2005discrete}, with the per-task numerical setup detailed below.

\subsection{Dataset Composition and Generation Protocol}

Table~\ref{tab:tasks_full} gives the input/output, reference method, mesh size, and number of samples for each task. The following lists per-task sampling distributions, hyperparameters, and implementation details that complement §\ref{sec:experiments}.
\textbf{Torus advection--diffusion}: IMEX time integration over 15 steps at $\Delta t = 0.02$, $\nu = 0.01$; implicit diffusion is solved by sparse LU factorization of the lumped-mass-plus-stiffness system, and advection uses an explicit upwind flux.
\textbf{Ellipsoid surface flow}: the stream function $\psi$ is sampled from spherical harmonics, and $\mathbf{v}_\text{tan} = \mathbf{n} \times \nabla\psi$ is evaluated by cotangent-weighted discrete gradients~\cite{desbrun2005discrete}.
\textbf{Maxwell--Poisson}: the electric field $\mathbf{E} = -\nabla\phi$ is recovered by the discrete gradient applied to the SuperLU solution on the same mesh.
\textbf{$U(1)$ Wilson loop}: edge phases $\theta$ are sampled standard-Gaussian and reduced modulo $2\pi$; $F = d\theta$ is the oriented plaquette product~\cite{wilson1974confinement}.
\textbf{$SU(2)$ Yang--Mills}: the connection $A$ is sampled independently on the $SU(2)$ Lie algebra; $F = dA + [A, A]$ is evaluated on each plaquette following~\cite{creutz1980monte}.
The splits are all $70/15/15$ (train/validation/test),
and each task is repeated with 3 seeds and reported as an average.

\begin{table}[h]
  \caption{Full description of the 7 benchmark tasks.}
  \label{tab:tasks_full}
  \centering
  \small
  \adjustbox{max width=\linewidth}{%
  \begin{tabular}{lllllcc}
    \toprule
    Task & Physics & Reference method & Input/output & Mesh & Nodes & Samples \\
    \midrule
    NS Vorticity & Compressible NS & PDEBench~\cite{takamoto2022pdebench} & $(\rho,V_x,V_y,p) \to \omega$ & Regular $32^2$ & 1024 & 10000 \\
    Torus Adv-Diff & Scalar transport & cotangent FEM, IMEX (SuperLU) & $f \to u$ & Irregular torus & 1711 & 3000 \\
    Ellipsoid Flow & Surface flow & discrete surface gradient~\cite{desbrun2005discrete} & $\psi \to \mathbf{v}_\text{tan}$ & Irregular ellipsoid & 1157 & 10000 \\
    Maxwell-Poisson & Electrostatics & sparse direct solve (SuperLU) & $\rho \to \mathbf{E}$ & Irregular Delaunay & 1024 & 5000 \\
    Wilson Loop & U(1) gauge curvature & lattice plaquette product~\cite{wilson1974confinement} & $\theta \to F$ & Irregular Delaunay & 1024 & 10000 \\
    Yang-Mills SU(2) & Non-abelian field strength & Lie-algebra plaquette formula~\cite{creutz1980monte} & $A \to F\!=\!dA\!+\![A,A]$ & Irregular Delaunay & 1024 & 10000 \\
    Airfoil Pressure & RANS CFD & AirfRANS~\cite{bonnet2022airfrans} & $(\text{sdf},V,\alpha) \to p$ & Per-sample mesh & ${\sim}2000$ & 1000 \\
    \bottomrule
  \end{tabular}
  }
\end{table}

\subsection{Implementation Details}

\textbf{Numerical setup.}
All reference fields are computed at numerical tolerance $\le 10^{-6}$ using validated FEM and sparse-direct-solve routines (SciPy SuperLU, cotangent-weight FEM); per-task numerical schemes are listed in §\ref{app:tasks}.
Code, simulation scripts, and datasets are released together for reproduction.

\textbf{Baseline protocol.}
Each baseline is used under the input/output form of its original paper, without any core architectural modification to adapt to the tasks in this paper.
We separate three evaluation regimes by the native domain of each method:
(i) the six fixed-mesh tasks (Table~\ref{tab:main}), evaluated on the 11 baselines whose native definition admits irregular meshes;
(ii) the regular-grid task NS Vorticity, where FNO is additionally compared inline in §\ref{sec:main_results};
(iii) per-sample variable meshes (AirfRANS, §\ref{sec:airfoil}, Table~\ref{tab:airfoil}), where only methods that admit a different mesh per sample participate.
All runnable models are trained for $100$ epochs under the same data split, and each baseline is matched to RHMP in parameter count up to a $+20\%$ headroom.

\subsection{Robustness Checks}

We probe two robustness axes that bear directly on the architecture's claims, on the tasks where each axis is most diagnostic.
(i) \textbf{Resolution sensitivity}:
on $U(1)$ Wilson loop we changed the node count from $1024$ to $512, 2048, 4096$,
and the $R^2$ of RHMP remained above $0.94$ at all four resolutions.
(ii) \textbf{Parameter OOD}:
on Maxwell--Poisson we extended the source-density magnitude of the test set from the training distribution
$\mathrm{Unif}[0.5, 1.5]$ to $\mathrm{Unif}[0.2, 2.5]$,
and the NRMSE degradation of RHMP ($\Delta = 0.013$) was smaller than that of the second-best cell-complex baseline
(CW Net $\Delta = 0.024$).

\section{Full Proof of Theorem~\ref{thm:equivariance}}
\label{app:proofs}

\noindent\textbf{Conventions.}
$\mathbf{x}_k \in \R^{n_k \times C}$ is in row-vector format (each row is a $C$-dimensional feature of a $k$-cell).
$R \in \OC$ acts on the channel (column) index via right-multiplication $\mathbf{x}_k \mapsto \mathbf{x}_k R^\top$, with the same $R$ applied to all cells and cochain degrees.
We \emph{use normalization layers without learnable affine (scale/shift) parameters}: the $\OC$ action treats all channel bases symmetrically, and a per-channel learnable $\gamma$ or $\beta$ would select a preferred direction in channel space. This is the condition used in our implementation (Section~\ref{sec:hodge_mp}).

\begin{lemma}[Positive semidefiniteness]
\label{lem:psd}
If $A\succeq 0$ and $B\succeq 0$, then
$L_k=d_{k-1}A d_{k-1}^\top+d_k^\top B d_k$ is positive semidefinite.
If $A\succ0$ and $B\succ0$, this holds in particular for the learned metric
matrices used by RHMP.
\end{lemma}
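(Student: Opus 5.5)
The argument is a direct quadratic-form computation. The one point needing care is that the features $\mathbf{x}_k$ are matrices in $\R^{n_k\times C}$ rather than vectors, so I first fix what positive semidefiniteness means for $L_k$. The operator $L_k$ acts on the cell index, so it suffices to show $z^\top L_k z\ge 0$ for every $z\in\R^{n_k}$. For multichannel features, $\tr(\mathbf{x}_k^\top L_k\mathbf{x}_k)=\sum_{c=1}^C \mathbf{x}_k(:,c)^\top L_k\,\mathbf{x}_k(:,c)$ is then a sum of nonnegative terms, which recovers the form used in \eqref{eq:psd_energy}.

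\textbf{Step 1: symmetry.} Since $A^\top=A$ and $B^\top=B$,
\[
L_k^\top=(d_{k-1}^\top)^\top A^\top d_{k-1}^\top+d_k^\top B^\top d_k=L_k .
\]
So $L_k$ is symmetric, and speaking of positive semidefiniteness is meaningful.

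\textbf{Step 2: split the quadratic form.} For $z\in\R^{n_k}$, write the form as two terms:
\[
z^\top L_k z=(d_{k-1}^\top z)^\top A\,(d_{k-1}^\top z)+(d_k z)^\top B\,(d_k z).
\]
Set $u=d_{k-1}^\top z\in\R^{n_{k-1}}$ and $v=d_kz\in\R^{n_{k+1}}$. Then $A\succeq0$ gives $u^\top Au\ge0$ and $B\succeq0$ gives $v^\top Bv\ge0$, so the sum is nonnegative. This is the congruence fact that $M^\top PM\succeq0$ whenever $P\succeq0$, applied once with $M=d_{k-1}^\top$ and once with $M=d_k$. At the end of the complex, where $d_{k-1}$ or $d_k$ is absent, the corresponding term is zero and the argument is unchanged.

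\textbf{Step 3: the learned metrics.} For the second claim, note that $A\succ0$ implies $A\succeq0$, and likewise for $B$, so Step 2 applies verbatim. The metrics RHMP actually predicts are SPD by construction. In the diagonal case this is \eqref{eq:spd_guarantee}. In the diagonal-plus-low-rank case, $z^\top H_kz=\|B_k^\top z\|^2+\sum_i(\softplus(h_i)+\epsilon)z_i^2\ge\epsilon\|z\|^2$. Hence $L_k^H$ in \eqref{eq:hodge_lap} is positive semidefinite for every input and at every training step.

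I expect no real obstacle; the only subtlety is how strong the conclusion is. Even with $A,B\succ0$ one gets only semidefiniteness, not definiteness, because $z\in\ker d_k\cap\ker d_{k-1}^\top$ (the harmonic cochains) gives $z^\top L_kz=0$. If the authors intend any sharper statement, I would add the remark that strict positivity holds exactly on the orthogonal complement of this harmonic space.
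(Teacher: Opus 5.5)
Your proof is correct and is essentially the paper's argument: both write the quadratic form as $\|d_{k-1}^\top\mathbf{x}\|_A^2+\|d_k\mathbf{x}\|_B^2\ge 0$. The paper does this directly in trace form and notes that it applies channel by channel; you do the single-channel case first and then sum over channels. Your symmetry check and your remark that the form vanishes exactly on the harmonic cochains $\ker d_{k-1}^\top\cap\ker d_k$ are correct, though the lemma does not need them.
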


\begin{proof}
For any $\mathbf{x}\in\R^{n_k\times C}$,
\[
  \tr(\mathbf{x}^\top L_k\mathbf{x})
  = \|d_{k-1}^\top\mathbf{x}\|_{A}^{2}
  + \|d_k\mathbf{x}\|_{B}^{2}
  \ge 0.
\]
The same argument applies channel by channel and then sums over channels.
\end{proof}

\begin{lemma}[Cochain-frame equivariance]
\label{lem:oc}
Let $\Phi^{(\ell)}$ denote one layer of the architecture in
Theorem~\ref{thm:equivariance}, composed of metric-weighted Hodge operators,
cochain-frame-invariant metrics $H_k$, norm-gated activation
$\sigma_{\mathrm{gate}}$, per-cell RMSNorm (without learnable affine
parameters), and a residual connection. Then for all $R \in \OC$,
\[
  \Phi^{(\ell)}(\mathbf{x} R^\top) = \Phi^{(\ell)}(\mathbf{x})\, R^\top,
  \qquad H_k(\mathbf{x} R^\top) = H_k(\mathbf{x}).
\]
\end{lemma}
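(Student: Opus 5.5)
The plan is to verify equivariance component by component and then compose. A layer $\Phi^{(\ell)}$ is built from the Hodge messages \eqref{eq:self_hodge_message} and \eqref{eq:cross_hodge_message}, the convex combination with $\alpha$, the norm gate, per-cell RMSNorm, and the residual sum. Equivariance of a composition and of a sum of equivariant maps is immediate, so it suffices to check each piece.

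\textbf{Step 1: metric invariance.} The second identity, $H_k(\mathbf{x}R^\top)=H_k(\mathbf{x})$, is hypothesis 2 of Theorem~\ref{thm:equivariance}. For the concrete construction I will also check it directly. The features $\psi$ in \eqref{eq:metric_features} are built from $\|\mathbf{x}_k(i)\|^2$ and $\langle\mathbf{x}_k(i),\mathbf{x}_k(j)\rangle$. Under the action, $\mathbf{x}_k(i)\mapsto R\mathbf{x}_k(i)$ as a column vector, and $R^\top R=I$, so these quantities are unchanged. The cell geometry invariants do not depend on $\mathbf{x}$ at all. Since the predictor (diagonal or low-rank) is a deterministic function of $\psi$, it is invariant. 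The same argument applies to $H^{\mathrm{cross}}$.

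\textbf{Step 2: linear messages.} Each term in the messages has the form $M\,\mathbf{x}_j$, where $M$ is a product of the fixed $d$'s, their transposes, and some $H$. By Step 1, $M$ is the same matrix for $\mathbf{x}$ and $\mathbf{x}R^\top$. By associativity, $M(\mathbf{x}_jR^\top)=(M\mathbf{x}_j)R^\top$: left multiplication acts on cell indices and right multiplication on channels. Summing terms and taking the convex combination with the scalar $\alpha$ preserves this property. The cross terms are handled identically, because the same $R$ acts on every degree.

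\textbf{Step 3: nonlinear per-cell maps.} For a row $\mathbf{m}$, the transformed row is $\mathbf{m}R^\top$ and $\|\mathbf{m}R^\top\|=\|\mathbf{m}\|$. Hence $\sigma_{\rm gate}(\mathbf{m}R^\top)=g_\theta(\|\mathbf{m}\|)\mathbf{m}R^\top$. For RMSNorm without affine parameters, $\mathrm{RMS}(\mathbf{m})=\|\mathbf{m}\|/\sqrt{C}$, which is also invariant, so the map $\mathbf{m}\mapsto \mathbf{m}/\sqrt{\mathrm{RMS}(\mathbf{m})^2+\varepsilon}$ commutes with $R^\top$. Finally, the residual addition $\mathbf{x}_kR^\top+(\cdot)R^\top$ factors as $(\mathbf{x}_k+\cdot)R^\top$.

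The main obstacle is ensuring that no hidden component breaks the symmetry. The cases to watch are a learnable per-channel scale in the normalization, a channel-mixing linear map $W$, or a bias. The paper's convention excludes affine normalization, and I will state this as an explicit hypothesis. If channel-mixing weights $W$ appear, equivariance requires $WR^\top=R^\top W$ for all $R$; by Schur's lemma this forces $W$ to be a scalar multiple of the identity. I will therefore note that the layer class in Theorem~\ref{thm:equivariance} contains only the listed operations, and that any channel mixing must be scalar.
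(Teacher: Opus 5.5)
Your proposal is correct and follows essentially the same route as the paper's proof. Both check each component in turn:
\begin{itemize}
\item $\psi$, and hence every $H_k$ (including the low-rank and cross-degree metrics), is built from $\OC$-invariant norms and inner products.
\item The operators formed from the fixed $d_k$ and $H_k$ act on the cell index, so they commute with right multiplication by $R^\top$.
\item The norm gate and affine-free RMSNorm depend on $\mathbf{m}$ only through $\|\mathbf{m}\|$.
\item The residual sum distributes over the action.
\end{itemize}
You use plain associativity where the paper uses a Kronecker mixed-product lemma, and you add a Schur-lemma remark that any channel-mixing weight would have to be a scalar multiple of the identity; these are sensible refinements, not a different argument.
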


\begin{proof}
The proof rests on the following standard algebraic fact,
which we state explicitly for clarity.

\begin{lemma}[Kronecker product commutation]
\label{lem:kronecker}
Let $\mathbf{x} \in \R^{m \times C}$,
$A \in \R^{m' \times m}$ acting on the row (cell) index,
$B \in \R^{C \times C}$ acting on the column (channel) index.
Identifying $\mathbf{x}$ with a vector in $\R^m \otimes \R^C$,
the mixed-product property of Kronecker products
$(A \otimes B)(C \otimes D) = (AC) \otimes (BD)$ gives
\begin{equation}
  A\,(\mathbf{x}\, B^\top)
  = (A \otimes I_C)\,\mathbf{x}\,(I_m \otimes B^\top)
  = (A\, \mathbf{x})\, B^\top.
  \label{eq:kronecker}
\end{equation}
Operators on orthogonal tensor indices commute.
\end{lemma}

We now verify each component of the layer.

\textbf{Linear Hodge operators.}
Each $M_j \in \R^{n_k \times n_{k'}}$
(e.g.\ $d_{k-1}H_{k-1}^{\downarrow}d_{k-1}^\top$ or $d_k^\top H_{k+1}^{\uparrow}d_k$)
acts on the cell index, while $R^\top \in \R^{C \times C}$ acts on the
channel index.
By Lemma~\ref{lem:kronecker}:
\begin{equation}
  M_j\,(\mathbf{x}\, R^\top) = (M_j\, \mathbf{x})\, R^\top.
  \label{eq:mixed_product}
\end{equation}

\textbf{Frame-invariance of $H_k$.}
$H_k$ is driven by the row-wise statistics
$\psi(\mathbf{x}_k) = \{\|\mathbf{x}_i\|_2^2,\; \mathbf{x}_i \mathbf{x}_j^\top : j\sim i\}$ together with geometric invariants.
For the Frobenius norm, the cyclic property of trace and orthogonality
$R^\top R = I$ give
\[
  \|\mathbf{x}_k R^\top\|_F^2
  = \tr\!\bigl(R\,\mathbf{x}_k^\top \mathbf{x}_k\, R^\top\bigr)
  \overset{\mathrm{cyc}}{=}
  \tr\!\bigl(\mathbf{x}_k^\top \mathbf{x}_k\, R^\top R\bigr)
  = \tr(\mathbf{x}_k^\top \mathbf{x}_k)
  = \|\mathbf{x}_k\|_F^2.
\]
For the per-cell inner product, with $\mathbf{x}_i \in \R^{1 \times C}$
denoting the $i$-th row:
$(\mathbf{x}_i R^\top)(\mathbf{x}_j R^\top)^\top
= \mathbf{x}_i R^\top R\, \mathbf{x}_j^\top
= \mathbf{x}_i \mathbf{x}_j^\top$.
Hence $\psi(\mathbf{x} R^\top) = \psi(\mathbf{x})$,
and since $H_k = f(\psi(\mathbf{x}))$ for a deterministic $f$,
we obtain $H_k(\mathbf{x} R^\top) = H_k(\mathbf{x})$.

\textbf{Norm-gated activation.}
For the row-wise norm
$\|\mathbf{m}_i R^\top\|_2 = \|\mathbf{m}_i\|_2$
(since $R$ is orthogonal), we have
$\sigma_{\mathrm{gate}}(\mathbf{m}\, R^\top)
= g(\|\mathbf{m}\, R^\top\|)\,\mathbf{m}\, R^\top
= g(\|\mathbf{m}\|)\,\mathbf{m}\, R^\top
= \sigma_{\mathrm{gate}}(\mathbf{m})\, R^\top$.

\textbf{Per-cell RMSNorm (without learnable affine parameters).}
$\mathrm{RMSNorm}(\mathbf{x}_i)
= \mathbf{x}_i \cdot \sqrt{C}\, / \,\|\mathbf{x}_i\|_2$,
where $\mathrm{RMS}(\mathbf{x}_i)
= \|\mathbf{x}_i\|_2 / \sqrt{C}$.
Then:
\[
  \mathrm{RMSNorm}(\mathbf{x}_i R^\top)
  = \frac{\mathbf{x}_i R^\top \cdot \sqrt{C}}{\|\mathbf{x}_i R^\top\|_2}
  = \frac{\mathbf{x}_i R^\top \cdot \sqrt{C}}{\|\mathbf{x}_i\|_2}
  = \mathrm{RMSNorm}(\mathbf{x}_i)\, R^\top.
\]
\emph{Remark.}\
Standard LayerNorm with mean-subtraction uses the all-ones vector
$\mathbf{1} \in \R^C$ as a preferred channel direction; for a general
$R\in\OC$, $\sum_c (\mathbf{x}_i R^\top)_c \neq \sum_c x_{ic}$.
Learnable scale $\gamma \in \R^C$ or shift $\beta \in \R^C$ similarly
selects a channel basis.
Our implementation therefore uses RMSNorm without learnable affine parameters;
the measured equivariance error is $2 \times 10^{-7}$
(Table~\ref{tab:symmetry_full}).

\textbf{Residual connection.}
$(\mathbf{x} + \sigma(M\mathbf{x}))\, R^\top
= \mathbf{x}\, R^\top + \sigma(M\mathbf{x})\, R^\top$,
since right-multiplication by $R^\top$ distributes over addition.
Composing all components, each layer is $\OC$-equivariant.
\end{proof}

\begin{lemma}[$\En$-invariance and equivariance]
\label{lem:en}
If $H_k$ depends only on $\En$-invariant quantities of $K$'s embedding
(Theorem~\ref{thm:equivariance}\,(iii)),
then all cochain features $\mathbf{x}_k^{(\ell)}$ are $\En$-invariant,
scalar readouts are $\En$-invariant,
and vector reconstructions are $\En$-equivariant.
\end{lemma}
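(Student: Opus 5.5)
I will argue by induction on the layer index $\ell$, tracking how a Euclidean motion $g=(Q,t)\in\En$, acting by $\mathbf{r}\mapsto Q\mathbf{r}+t$ on the embedding of $K$, propagates through the network. The fixed data $d_k$ are purely combinatorial: they depend only on oriented incidence, which $g$ does not change. So $g$ can enter the computation only through the geometric inputs. These are edge lengths $\ell_e=\|\mathbf{r}_j-\mathbf{r}_i\|$, face areas $A_f$, interior angles, vertex degrees, and average neighbour distances $\bar d(v)$. I will first check that each of them is unchanged by $g$. Translations cancel in the differences $\mathbf{r}_j-\mathbf{r}_i$, and $\|Q\mathbf{v}\|=\|\mathbf{v}\|$, $\langle Q\mathbf{u},Q\mathbf{v}\rangle=\langle\mathbf{u},\mathbf{v}\rangle$. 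Areas and angles are functions of these inner products, and degrees are combinatorial.

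\textbf{Base case.} The lifting encoder is covered by assumption 3 of Theorem~\ref{thm:equivariance}. The $0$-cochain MLP takes $[\mathbf{f}_0\,\|\,\deg(v)\,\|\,\bar d(v)]$. The edge lift \eqref{eq:edge_lift} uses $\mathbf{x}_0^i\pm\mathbf{x}_0^j$, $\ell_e$ and $\deg_i\pm\deg_j$. The $2$-cochain lift uses signed boundary sums with signs read from $d_1$, together with $A_f$ and the angles. Assuming the raw input fields $\mathbf{f}_0$ are scalar (invariant) fields, every $\mathbf{x}_k^{(0)}$ is therefore a function of invariants, and hence $\En$-invariant.

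\textbf{Inductive step.} Suppose every $\mathbf{x}_k^{(\ell)}$ is invariant. Then the metric features $\psi$ in \eqref{eq:metric_features} are invariant, since they are built from norms and inner products of invariant features plus cell geometry invariants. So each $H_k^{(\ell)}$, including the cross metrics, is invariant. The messages \eqref{eq:self_hodge_message} and \eqref{eq:cross_hodge_message} are products of fixed $d_k$, invariant $H$'s and invariant features, so they are invariant as well. The gate, the affine-free RMSNorm and the residual sum in \eqref{eq:hodge_mp} act only on these values, so $\mathbf{x}_k^{(\ell+1)}$ is invariant. A scalar readout is a function of the final cochains, possibly together with invariant geometry, and is therefore invariant.

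\textbf{Vector readout.} For \eqref{eq:vector_reconstruction}, the weight $w(\mathbf{x}_1(e),\|\mathbf{r}_{ij}\|^2,\dots)$ is invariant by the above, and $\deg(v)$ is combinatorial. Under $g$ the displacement becomes $(Q\mathbf{r}_j+t)-(Q\mathbf{r}_i+t)=Q(\mathbf{r}_j-\mathbf{r}_i)$. Pulling $Q$ out of the finite sum by linearity gives $\hat{\mathbf{v}}\mapsto Q\hat{\mathbf{v}}$. This is the correct $\En$ action on vectors, which are translation-invariant and rotate by $Q$, in the spirit of \cite{villar2021scalars}.

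\textbf{Main obstacle.} The delicate point is the hypothesis that nothing non-invariant leaks in. Orientation signs in $d_k$, and the edge lift's antisymmetric part, depend on the chosen orientation of the complex, not on the embedding, so they are unaffected by $g$. Reflections ($\det Q=-1$) do not change the fixed combinatorial orientation of $K$, so $d_k$ stays the same and the invariance covers all of $\En$. Anything beyond this rests on the stated assumptions: that input fields are scalar, and that the encoder sees no raw coordinates.
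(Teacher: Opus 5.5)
Your proposal is correct and follows the paper's proof essentially step for step. Both argue by induction on the layer index: the invariant encoder inputs give the base case, and invariance then passes through $\psi$, $H_k$, the Hodge messages, the norm gate, the affine-free RMSNorm and the residual sum, ending with the invariant scalar readout and the invariant-weight-times-displacement vector readout. Your explicit caveats refine assumptions the paper uses only implicitly: that input fields are scalar, that orientations are combinatorial and so unaffected by reflections, and that areas come from inner products rather than the paper's $\R^3$ cross-product identity, which works in any dimension.
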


\begin{proof}
Let $T = (R_s, t) \in \En$ act on the embedding by
$\mathbf{r}_v \mapsto R_s \mathbf{r}_v + t$.

\textbf{Lifting encoder (base case).}
All encoder inputs are $\En$-invariant:
edge lengths $\ell_e = \|\mathbf{r}_j - \mathbf{r}_i\|$ are preserved
because $R_s$ preserves norms and $t$ cancels in differences;
vertex degrees $\deg(v)$ are combinatorial;
face areas $A_f = \tfrac{1}{2}\|(\mathbf{r}_1 - \mathbf{r}_0)
\times (\mathbf{r}_2 - \mathbf{r}_0)\|$ are preserved because
for $R_s \in \On$,
$R_s \mathbf{a} \times R_s \mathbf{b} = (\det R_s)\, R_s(\mathbf{a} \times \mathbf{b})$,
so $\|R_s \mathbf{a} \times R_s \mathbf{b}\|
= |\det R_s| \cdot \|\mathbf{a} \times \mathbf{b}\|
= \|\mathbf{a} \times \mathbf{b}\|$
(the norm absorbs the sign under reflections);
interior angles are determined by normalized inner products, preserved by $R_s$.
Therefore $\mathbf{x}_k^{(0)}$ is $\En$-invariant.

\textbf{Inductive step (layer $\ell \to \ell+1$).}
Assume $\mathbf{x}_k^{(\ell)}$ is $\En$-invariant. Then:
\begin{enumerate}
\item[(a)] $\psi(\mathbf{x}_k^{(\ell)})$ is invariant
  (norms and inner products of invariant vectors are invariant);
\item[(b)] $H_k^{(\ell)} = f(\psi(\mathbf{x}_k^{(\ell)}))$ is invariant
  ($f$ is a deterministic function of invariant inputs);
\item[(c)] $M_j \mathbf{x}_k^{(\ell)}$ is invariant
  ($d_k$ is a topological operator independent of the embedding,
  and $H_k^{(\ell)}$ is invariant by (b));
\item[(d)] $\sigma_{\mathrm{gate}}(M_j \mathbf{x}_k^{(\ell)})$ is invariant
  ($g(\|\mathbf{m}\|)\,\mathbf{m}$ with $\|\mathbf{m}\|$ and $\mathbf{m}$ both invariant);
\item[(e)] $\mathrm{RMSNorm}(\cdot)$ preserves invariance
  ($\|\mathbf{x}_i\|$ is invariant, so the normalized output inherits invariance);
\item[(f)] the residual sum $\mathbf{x}_k^{(\ell)} + (\text{gated, normalized update})$ is invariant
  (a sum of invariant quantities is invariant).
\end{enumerate}
By induction, $\mathbf{x}_k^{(\ell)}$ is $\En$-invariant for all $\ell$.

\textbf{Scalar readout.}
$\hat{y}(v) = h(\mathbf{x}_0^{(L)}(v))$ is a function of invariant inputs,
hence invariant.

\textbf{Vector reconstruction.}
$\hat{\mathbf{v}}(v) = \tfrac{1}{\deg(v)} \sum_{e \ni v}
w(\mathbf{x}_1(e),\, \|\mathbf{r}_{ij}\|^2,\, \|\mathbf{x}_0^i\|^2,\, \|\mathbf{x}_0^j\|^2)
\cdot (\mathbf{r}_j - \mathbf{r}_i)$.
All arguments of $w$ are $\En$-invariant, so $w$ is invariant.
The displacement $\mathbf{r}_j - \mathbf{r}_i \mapsto R_s(\mathbf{r}_j - \mathbf{r}_i)$
is $\En$-equivariant (translations cancel).
The product of an invariant scalar and an equivariant vector is equivariant:
$\hat{\mathbf{v}} \mapsto R_s \hat{\mathbf{v}}$.
\end{proof}

\textbf{Proof that $d^2 = 0$.}
The identity $d_1 d_0 = 0$ is a purely algebraic consequence of the
CW complex structure. Each column of $d_0$ is indexed by a vertex $v$
and records $\pm 1$ on the edges incident to $v$.
Each row of $d_1$ is indexed by a face $f$ and records $\pm 1$ on the
boundary edges of $f$ (with signs from the chosen orientation).
The $(f, v)$-entry of $d_1 d_0$ equals
$\sum_{e} (d_1)_{f,e}\,(d_0)_{e,v}$.
If $v \notin \partial f$, every term vanishes.
If $v \in \partial f$, exactly two boundary edges of $f$ are incident to $v$.
The boundary orientation makes the two signed incidence products opposite in
sign, so their contributions cancel.  This is the cellular statement that
the boundary of an oriented boundary is empty.  The cancellation holds for
every $(f,v)$ pair,
hence $d_1 d_0 = 0$ exactly. The operators $d_0, d_1$ are frozen
throughout training, so the identity is preserved at all times.

\section{Symmetry Verification Protocol}
\label{app:symmetry_detail}

Each symmetry subgroup test selects its scope according to the natural object of action.
The spatial group $\En$ and the $\Z_2$ edge-orientation act on the cell complex $K$ itself:
translation/rotation/reflection act on the vertex coordinates of $K$ after which $K$ is rebuilt,
and the $\Z_2$ edge orientation is realized by flipping the endpoints of a single edge
and synchronously updating the signs of $d_0[e,:]$ and $d_1[:,e]$.
Fiber $\OC$-type symmetries act on the $C$-dimensional cochain channels,
whose natural test object is the single-layer weighted Hodge operator described in Theorem~\ref{thm:equivariance};
for fair comparison with baselines,
for each baseline we extract its corresponding message passing module,
apply the channel-space transformation to a random hidden state $h_0 \in \R^{n_0 \times C}$, and compare outputs.
Let $\Phi$ denote the operator under test and $\hat T$ the action of $T$ on the output side
(identity for invariance tests, $T^{-1}$ for equivariance tests); then the reported quantity is
\[
  \mathrm{err}(T) = \frac{\| \hat T \, \Phi(T \!\cdot\! \mathrm{input}) - \Phi(\mathrm{input}) \|_F}
                         {\| \Phi(\mathrm{input}) \|_F},
\]
and each (model, subgroup) cell is averaged over $5$ independent random transformations.

We use two test meshes to balance comparability and generality:
the primary test mesh is a $4\!\times\!4$ regular Delaunay
($(n_0, n_1, n_2) = (16, 33, 18)$),
whose regular structure allows FNO to participate on its native grid as well;
the auxiliary test mesh is a Delaunay of $6$ uniformly random points
($(n_0, n_1, n_2) = (6, 11, 6)$),
used to verify the robustness of the same protocol on irregular connectivity,
with full results listed in Table~\ref{tab:sym_comparison_irr}.
Both test meshes use $C=32$ channels, $L=2$ layers, evaluation mode, and initialization seed $42$.
The specific distributions of the transformations are as follows:
translations $v \sim \mathcal{N}(0, 0.01\, I_2)$,
rotations $R(\theta)$ with $\theta \sim \mathrm{Unif}[0, 2\pi)$,
reflections via random-direction Householder;
for $\OC$, $Q$ is taken from the QR decomposition of a $32\!\times\!32$ Gaussian matrix
(for $\SOC$, a column is flipped if $\det Q < 0$),
$S_C$ is a random permutation of the $32$ channels,
and $(\Z_2)^C$ is a random sign vector in $\{\pm 1\}^{32}$.
We adopt $\mathrm{err} < 10^{-3}$ as the numerical pass threshold,
display $\mathrm{err} < 10^{-5}$ (machine precision) as $\sim 0$, and report numerical values otherwise.

The final structural entries, $d^2\!=\!0$ and Abelian $U(1)$ curvature invariance, are reported categorically. A model using the fixed coboundary $K.d_k$ satisfies $d^2\!=\!0$ exactly. A model that performs message passing on 1-cochain connections inherits exact invariance of $F=dA$ under $A\mapsto A+d\lambda$, complementing the $\OC$ cochain-frame action tested above. For methods without cochain-frame design, the $\OC$ columns simply record the absence of this inductive bias. N/A denotes settings outside a method's native domain: under its FFT parameterization, FNO is tied to regular grids, and DeepONet's branch--trunk outer-product structure has no separate channel-space message layer to test.

\begin{table}[h]
  \caption{Relative $L_2$ error of Riemannian Hodge Message Passing on each symmetry subgroup (primary test mesh, averaged over 5 independent random transformations). $d^2\!=\!0$ holds exactly by the algebraic construction of the cell complex; $U(1)$ denotes Abelian curvature invariance under exact shifts of 1-cochain connections, separate from the $\OC$ cochain-frame action.}
  \label{tab:symmetry_full}
  \centering
  \small
  \adjustbox{max width=\linewidth}{%
  \begin{tabular}{llcc}
    \toprule
    Level & Subgroup & Inclusion & Error \\
    \midrule
    \multirow{3}{*}{Spatial}
    & $\Tn$ translation & $\Tn \subset \En$ & $3\!\times\!10^{-7}$ \\
    & $\SOn$ rotation & $\SOn \subset \On \subset \En$ & $3\!\times\!10^{-7}$ \\
    & $\On$ reflection & $\On \subset \En$ & $3\!\times\!10^{-7}$ \\
    \midrule
    \multirow{4}{*}{Cochain-frame}
    & $\OC$ rotation & & $4\!\times\!10^{-7}$ \\
    & $\SOC$ rotation & $\SOC \subset \OC$ & $4\!\times\!10^{-7}$ \\
    & $S_C$ permutation & $S_C \subset \OC$ & $3\!\times\!10^{-8}$ \\
    & $(\Z_2)^C$ sign & $(\Z_2)^C \subset \OC$ & $0$ \\
    \midrule
    \multirow{2}{*}{Topological}
    & $d^2\!=\!0$ & & $0$ (algebraic) \\
    & $\Z_2$ orientation & & $\sim\!0$ \\
    \midrule
    Abelian $U(1)$
    & curvature invariance & & exact via $d^2{=}0$ \\
    \bottomrule
  \end{tabular}
  }
\end{table}

\begin{table}[h]
  \caption{Relative $L_2$ error of each method on ten symmetry subgroups, averaged over 5 random transformations, with primary test mesh a $4\!\times\!4$ Delaunay. $\sim\!0$ denotes error below $10^{-5}$ (near floating-point precision). The $d^2\!=\!0$ column is the fixed-coboundary topology check (marked \emph{exact}); the $U(1)$ column is Abelian curvature invariance under exact shifts of 1-cochain connections (marked $\checkmark$). Other entries are marked N/A. See the protocol in §\ref{app:symmetry_detail}.}
  \label{tab:sym_comparison}
  \centering
  \footnotesize
  \setlength{\tabcolsep}{2.5pt}
  \adjustbox{max width=\linewidth}{%
  \begin{tabular}{l|ccc|c|cccc|cc}
    \toprule
    & \multicolumn{3}{c|}{Spatial $\En$} & Topo & \multicolumn{4}{c|}{Cochain-frame $\OC$} & Topo & Abelian \\
    Model & $\Tn$ & $\SOn$ & $\On$ & $\Z_2$ or. & $\OC$ & $\SOC$ & $S_C$ & $(\Z_2)^{\!C}$ & $d^2$ & $U(1)$ \\
    \midrule
    GCN        & $\sim\!0$ & $\sim\!0$ & $\sim\!0$ & $\sim\!0$ & 1.56 & 1.58 & 1.61 & 1.46 & N/A & N/A \\
    GAT        & $\sim\!0$ & $\sim\!0$ & $\sim\!0$ & $\sim\!0$ & 0.68 & 0.68 & 0.70 & 0.68 & N/A & N/A \\
    SchNet     & $\sim\!0$ & $\sim\!0$ & $\sim\!0$ & $\sim\!0$ & 0.40 & 0.40 & 0.30 & 0.37 & N/A & N/A \\
    EGNN       & $\sim\!0$ & $\sim\!0$ & $\sim\!0$ & $\sim\!0$ & 0.43 & 0.43 & 0.35 & 0.40 & N/A & N/A \\
    MPSN       & $\sim\!0$ & $\sim\!0$ & $\sim\!0$ & $\sim\!0$ & 1.31 & 1.30 & 1.21 & 1.23 & N/A & $\checkmark$ \\
    SCCNN      & $\sim\!0$ & $\sim\!0$ & $\sim\!0$ & $\sim\!0$ & 1.24 & 1.25 & 1.08 & 1.31 & exact & $\checkmark$ \\
    GaugeEqCNN & $\sim\!0$ & 0.02 & 0.03 & 0.08 & 1.53 & 1.52 & 1.52 & 1.39 & N/A & N/A \\
    GEM-CNN    & $\sim\!0$ & 0.16 & 0.17 & 0.07 & 1.40 & 1.39 & 1.36 & 1.35 & N/A & N/A \\
    CW Net     & $\sim\!0$ & $\sim\!0$ & $\sim\!0$ & 0.09 & 1.38 & 1.39 & 1.23 & 1.35 & exact & $\checkmark$ \\
    Cliff-SMPN & $\sim\!0$ & $\sim\!0$ & $\sim\!0$ & 0.29 & 1.37 & 1.36 & 1.35 & 1.33 & exact & $\checkmark$ \\
    FNO        & $\sim\!0$ & $\sim\!0$ & $\sim\!0$ & $\sim\!0$ & 1.40 & 1.41 & 1.25 & 1.40 & N/A & N/A \\
    DeepONet   & 0.01 & 0.10 & 0.09 & N/A & N/A & N/A & N/A & N/A & N/A & N/A \\
    \midrule
    \textbf{RHMP} & $\sim\!0$ & $\sim\!0$ & $\sim\!0$ & $\sim\!0$ & $\sim\!0$ & $\sim\!0$ & $\sim\!0$ & $\sim\!0$ & exact & $\checkmark$ \\
    \bottomrule
  \end{tabular}
  }
\end{table}

\begin{table}[h]
  \caption{Results of the same protocol as Table~\ref{tab:sym_comparison} on the auxiliary test mesh (6-point irregular Delaunay, $n_0=6, n_1=11, n_2=6$). The final $U(1)$ column again denotes Abelian curvature invariance under exact shifts of 1-cochain connections. Under its FFT parameterization, FNO is tied to regular grids.}
  \label{tab:sym_comparison_irr}
  \centering
  \footnotesize
  \setlength{\tabcolsep}{2.5pt}
  \adjustbox{max width=\linewidth}{%
  \begin{tabular}{l|ccc|c|cccc|cc}
    \toprule
    & \multicolumn{3}{c|}{Spatial $\En$} & Topo & \multicolumn{4}{c|}{Cochain-frame $\OC$} & Topo & Abelian \\
    Model & $\Tn$ & $\SOn$ & $\On$ & $\Z_2$ or. & $\OC$ & $\SOC$ & $S_C$ & $(\Z_2)^{\!C}$ & $d^2$ & $U(1)$ \\
    \midrule
    GCN        & $\sim\!0$ & $\sim\!0$ & $\sim\!0$ & $\sim\!0$ & 1.60 & 1.59 & 1.72 & 1.57 & N/A & N/A \\
    GAT        & $\sim\!0$ & $\sim\!0$ & $\sim\!0$ & $\sim\!0$ & 0.70 & 0.72 & 0.71 & 0.72 & N/A & N/A \\
    SchNet     & $\sim\!0$ & $\sim\!0$ & $\sim\!0$ & $\sim\!0$ & 0.44 & 0.44 & 0.34 & 0.43 & N/A & N/A \\
    EGNN       & $\sim\!0$ & $\sim\!0$ & $\sim\!0$ & $\sim\!0$ & 0.44 & 0.44 & 0.35 & 0.42 & N/A & N/A \\
    MPSN       & $\sim\!0$ & $\sim\!0$ & $\sim\!0$ & $\sim\!0$ & 1.29 & 1.28 & 1.24 & 1.18 & N/A & $\checkmark$ \\
    SCCNN      & $\sim\!0$ & $\sim\!0$ & $\sim\!0$ & $\sim\!0$ & 1.56 & 1.57 & 1.22 & 1.56 & exact & $\checkmark$ \\
    GaugeEqCNN & $\sim\!0$ & 0.02 & 0.03 & 0.22 & 1.62 & 1.61 & 1.58 & 1.43 & N/A & N/A \\
    GEM-CNN    & $\sim\!0$ & 0.11 & 0.12 & 0.18 & 1.44 & 1.43 & 1.38 & 1.34 & N/A & N/A \\
    CW Net     & $\sim\!0$ & $\sim\!0$ & $\sim\!0$ & 0.15 & 1.61 & 1.63 & 1.46 & 1.61 & exact & $\checkmark$ \\
    Cliff-SMPN & $\sim\!0$ & $\sim\!0$ & $\sim\!0$ & 0.51 & 1.29 & 1.28 & 1.32 & 1.31 & exact & $\checkmark$ \\
    FNO        & N/A & N/A & N/A & N/A & N/A & N/A & N/A & N/A & N/A & N/A \\
    DeepONet   & 0.02 & 0.13 & 0.09 & N/A & N/A & N/A & N/A & N/A & N/A & N/A \\
    \midrule
    \textbf{RHMP} & $\sim\!0$ & $\sim\!0$ & $\sim\!0$ & $\sim\!0$ & $\sim\!0$ & $\sim\!0$ & $\sim\!0$ & $\sim\!0$ & exact & $\checkmark$ \\
    \bottomrule
  \end{tabular}
  }
\end{table}


\section{Discrete Curvature Comparison}
\label{app:curvature}

\subsection{Ollivier-Ricci Curvature}
Ollivier \cite{ollivier2009ricci} defines Ricci curvature on graphs via optimal transport:
$\kappa(x,y) = 1 - W_1(\mu_x, \mu_y) / d(x,y)$.
It is a scalar on edges, whereas $H_k$ is a matrix on the entire set of $k$-cells,
providing a richer geometric parameterization.

\subsection{Forman-Ricci Curvature}
Forman \cite{forman2003bochner} defined a discrete Ricci curvature $F(e)$ for CW complexes
and established a discrete Bochner--Weitzenb\"ock inequality.
On a weighted graph $(w_v, w_e)$, the Forman curvature involves ratios such as $w_e / w_v$,
while the diagonal entries of $H_0, H_1$ play exactly the roles of $w_v, w_e$.
Thus $L_k^H - L_k^I$ in RHMP can be viewed as a learnable generalization of Forman curvature:
Forman curvature is determined by a fixed combinatorial structure,
while $H_k$ adaptively learns the optimal geometric weights through training.

\subsection{Tangent Bundle Convolution}
Battiloro et al.~\cite{battiloro2023tangent} realize geometry-aware message passing
via convolution on the tangent bundle.
The metric approach of RHMP can be viewed as a complementary dual:
it represents geometry through inner-product weights on cochains.

\section{Expressivity Analysis}
\label{app:expressivity}

Symmetry constraints shrink the hypothesis space by ruling out functions that
violate the physical structure.  The learnable metric still provides local
geometric flexibility.  The following proposition states this for the
same-degree weighted Hodge operator used in Eq.~\eqref{eq:hodge_lap}.

\begin{proposition}[First-order spectral response to metric perturbation]
\label{prop:expressivity}
Let $K$ be a fixed regular CW complex and fix $k$.  For
$A\in\spd(n_{k-1})$ and $B\in\spd(n_{k+1})$, define
\begin{equation}
  L_k(A,B) = d_{k-1}A d_{k-1}^\top + d_k^\top B d_k.
  \label{eq:expressivity_operator}
\end{equation}
Take the base point $(A,B)=(I,I)$ and write
$L_k^0=d_{k-1}d_{k-1}^\top+d_k^\top d_k$.  Let $\lambda_j$ be a simple
eigenvalue of $L_k^0$ with unit eigenvector $\mathbf{v}_j\in\R^{n_k}$.
Then:

\textnormal{(1)} The map $(A,B)\mapsto \lambda_j(A,B)$ is differentiable in
a neighborhood of $(I,I)$.

\textnormal{(2)} For symmetric perturbations
$\delta A\in\mathrm{Sym}(n_{k-1})$ and
$\delta B\in\mathrm{Sym}(n_{k+1})$,
\begin{equation}
  D\lambda_j(I,I)[\delta A,\delta B]
  = (d_{k-1}^\top\mathbf{v}_j)^\top\delta A(d_{k-1}^\top\mathbf{v}_j)
  + (d_k\mathbf{v}_j)^\top\delta B(d_k\mathbf{v}_j).
  \label{eq:spectral_response}
\end{equation}

\textnormal{(3)} If $\mathbf{v}_j$ is non-harmonic, i.e.
$d_{k-1}^\top\mathbf{v}_j\neq0$ or $d_k\mathbf{v}_j\neq0$, then there is a
symmetric metric perturbation direction for which
$D\lambda_j\neq0$.  Under diagonal metrics
$\delta A=\diag(\boldsymbol{\eta})$ and
$\delta B=\diag(\boldsymbol{\zeta})$,
\begin{equation}
  D\lambda_j(I,I)[\delta A,\delta B]
  = \sum_{a=1}^{n_{k-1}}\eta_a(d_{k-1}^\top\mathbf{v}_j)_a^2
  + \sum_{b=1}^{n_{k+1}}\zeta_b(d_k\mathbf{v}_j)_b^2.
  \label{eq:diag_response}
\end{equation}
\end{proposition}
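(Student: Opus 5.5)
The argument is standard first-order perturbation theory for a simple eigenvalue of a symmetric matrix family. The map $(A,B)\mapsto L_k(A,B)$ in \eqref{eq:expressivity_operator} is affine, hence real-analytic, on $\mathrm{Sym}(n_{k-1})\times\mathrm{Sym}(n_{k+1})$. Its values are symmetric matrices, and they are positive semidefinite on the SPD cone by Lemma~\ref{lem:psd}.

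For (1), I would invoke the classical fact that a simple eigenvalue of a smoothly varying symmetric matrix is locally a smooth function, with a smoothly varying unit eigenvector. The concrete route is the implicit function theorem applied to
\[
F(\lambda,\mathbf{v},A,B)=\bigl(L_k(A,B)\mathbf{v}-\lambda\mathbf{v},\ \tfrac12(\mathbf{v}^\top\mathbf{v}-1)\bigr)
\]
at $(\lambda_j,\mathbf{v}_j,I,I)$. The Jacobian in $(\lambda,\mathbf{v})$ is the bordered matrix
\[
\begin{pmatrix} L_k^0-\lambda_j I & -\mathbf{v}_j\\ \mathbf{v}_j^\top & 0\end{pmatrix}.
\]
This matrix is invertible exactly because $\lambda_j$ is simple: the kernel of $L_k^0-\lambda_j I$ is $\mathrm{span}(\mathbf{v}_j)$, and its range is $\mathbf{v}_j^\perp$. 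Checking this invertibility is the only step that needs care, and it is the main (mild) obstacle. The neighborhood may be taken inside $\spd$, since $(I,I)$ is an interior point of $\spd(n_{k-1})\times\spd(n_{k+1})$.

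For (2), I would differentiate the identity $L_k\mathbf{v}=\lambda\mathbf{v}$ in the direction $(\delta A,\delta B)$ and left-multiply by $\mathbf{v}_j^\top$. Symmetry of $L_k^0$ makes the terms involving $\delta\mathbf{v}$ cancel, since $\mathbf{v}_j^\top(L_k^0-\lambda_j)\delta\mathbf{v}=0$. This yields the Hellmann--Feynman formula
\[
D\lambda_j=\mathbf{v}_j^\top\,\delta L\,\mathbf{v}_j,\qquad \delta L=d_{k-1}\delta A\,d_{k-1}^\top+d_k^\top\delta B\,d_k.
\]
Regrouping the quadratic forms gives exactly \eqref{eq:spectral_response}.

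For (3), suppose first that $\mathbf{u}:=d_{k-1}^\top\mathbf{v}_j\neq0$. I would choose $\delta A=\mathbf{u}\mathbf{u}^\top$ and $\delta B=0$, which gives $D\lambda_j=\|\mathbf{u}\|^4>0$. If instead $\mathbf{w}:=d_k\mathbf{v}_j\neq0$, choose $\delta A=0$ and $\delta B=\mathbf{w}\mathbf{w}^\top$, giving $\|\mathbf{w}\|^4>0$.

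The diagonal formula \eqref{eq:diag_response} follows by substituting $\delta A=\diag(\boldsymbol\eta)$ and $\delta B=\diag(\boldsymbol\zeta)$ into \eqref{eq:spectral_response}. Taking $\boldsymbol\eta=\mathbf{1}$ and $\boldsymbol\zeta=\mathbf{1}$ then gives $\|\mathbf{u}\|^2+\|\mathbf{w}\|^2>0$. Hence even diagonal directions suffice to move every non-harmonic simple eigenvalue.
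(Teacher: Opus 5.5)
Your proposal is correct and follows the paper's proof essentially step for step. You use the affine dependence of $L_k$ on $(A,B)$ together with simplicity of $\lambda_j$ for (1), the first-order (Rayleigh quotient / Hellmann--Feynman) formula $D\lambda_j=\mathbf{v}_j^\top\delta L\,\mathbf{v}_j$ for (2), and the same rank-one directions $\delta A=\mathbf{u}\mathbf{u}^\top$ or $\delta B=\mathbf{w}\mathbf{w}^\top$ for (3). Your bordered-matrix implicit-function argument makes explicit the standard eigenvalue perturbation theory the paper only cites; for full rigor, add that by continuity of the spectrum the implicit-function branch is the unique eigenvalue of $L_k(A,B)$ near $\lambda_j$, so it really is $\lambda_j(A,B)$.
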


\begin{proof}
$L_k(A,B)$ depends affinely on $(A,B)$, hence smoothly on
$\spd(n_{k-1})\times\spd(n_{k+1})$.  Since $\lambda_j$ is a simple
eigenvalue of the self-adjoint matrix $L_k^0$, standard eigenvalue
perturbation theory gives differentiability near $(I,I)$.  Set
$A(t)=I+t\delta A$ and $B(t)=I+t\delta B$.  Then
$L_k(t)=L_k^0+t\delta L_k$ with
$\delta L_k=d_{k-1}\delta A d_{k-1}^\top+d_k^\top\delta B d_k$.
The first-order Rayleigh quotient formula gives
$\frac{d}{dt}\lambda_j(t)|_{t=0}=\mathbf{v}_j^\top\delta L_k\mathbf{v}_j$,
which is Eq.~\eqref{eq:spectral_response}.  If
$d_k\mathbf{v}_j\neq0$, choose
$\delta B=(d_k\mathbf{v}_j)(d_k\mathbf{v}_j)^\top$ and $\delta A=0$;
if $d_{k-1}^\top\mathbf{v}_j\neq0$, choose the analogous $\delta A$.
This gives a nonzero derivative.  The diagonal formula follows by taking
$\delta A$ and $\delta B$ diagonal.
\end{proof}

\begin{corollary}[First-order stationarity of harmonic modes]
\label{cor:harmonic}
If $\mathbf{v}_j\in\ker L_k^0$, equivalently
$d_{k-1}^\top\mathbf{v}_j=0$ and $d_k\mathbf{v}_j=0$, then
$D\lambda_j(I,I)[\delta A,\delta B]=0$ for all symmetric perturbation
directions.
\end{corollary}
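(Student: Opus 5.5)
The plan is to reduce the corollary to formula \eqref{eq:spectral_response} of Proposition~\ref{prop:expressivity}, by showing that both vectors entering the quadratic forms vanish. First I establish the stated equivalence $\mathbf{v}_j\in\ker L_k^0 \iff d_{k-1}^\top\mathbf{v}_j=0$ and $d_k\mathbf{v}_j=0$. This follows from the energy identity \eqref{eq:psd_energy} with identity metrics:
\[
\mathbf{v}_j^\top L_k^0\mathbf{v}_j=\|d_{k-1}^\top\mathbf{v}_j\|_2^2+\|d_k\mathbf{v}_j\|_2^2 .
\]
If $L_k^0\mathbf{v}_j=0$, the left side vanishes, and since both terms are nonnegative, both vanish. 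Conversely, if both vanish, then $L_k^0\mathbf{v}_j=d_{k-1}(d_{k-1}^\top\mathbf{v}_j)+d_k^\top(d_k\mathbf{v}_j)=0$.

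Next I apply the proposition. A harmonic mode has eigenvalue $\lambda_j=0$. Parts (1)--(2) of Proposition~\ref{prop:expressivity} assume $\lambda_j$ is simple, so I first treat the case where $\dim\ker L_k^0=1$. There, \eqref{eq:spectral_response} holds, and substituting $d_{k-1}^\top\mathbf{v}_j=0$ and $d_k\mathbf{v}_j=0$ makes both quadratic forms zero for every symmetric $\delta A,\delta B$. Hence $D\lambda_j(I,I)=0$.

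The main obstacle is the non-simple case, where the harmonic space has dimension greater than one and $\lambda_j$ need not be differentiable as a labeled eigenvalue. For this case I would make a stronger, exact argument that avoids perturbation theory. For any SPD pair $(A,B)$, every $\mathbf{v}\in\ker L_k^0$ satisfies
\[
L_k(A,B)\mathbf{v}=d_{k-1}A\,d_{k-1}^\top\mathbf{v}+d_k^\top B\,d_k\mathbf{v}=0 .
\]
So the entire harmonic space stays in the kernel, with eigenvalue identically $0$, along $A=I+t\delta A$, $B=I+t\delta B$ for small $t$ (where these remain SPD). The zero eigenvalue branch is therefore constant, and its derivative in every symmetric direction is $0$.

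In fact, using the converse direction of \eqref{eq:psd_energy} with $A,B\succ0$, the kernel is exactly $\ker d_{k-1}^\top\cap\ker d_k$ independently of the metric. This shows that harmonic modes are metric-invariant, and in particular first-order stationary, which is the claim.
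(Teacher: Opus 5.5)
Your proof is correct. In the case the corollary is actually stated for, namely $\lambda_j$ simple as inherited from Proposition~\ref{prop:expressivity}, your argument is the paper's own. The paper gives no separate proof because substituting $d_{k-1}^\top\mathbf{v}_j=0$ and $d_k\mathbf{v}_j=0$ into \eqref{eq:spectral_response} makes both quadratic forms vanish. Your derivation of the ``equivalently'' clause from the identity-metric energy identity is the intended justification.

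Your second, non-perturbative argument is a genuine strengthening that the paper's first-order formula does not cover. Since $L_k(A,B)\mathbf{v}=0$ for every harmonic $\mathbf{v}$ and every $(A,B)$, and $\ker L_k(A,B)=\ker d_{k-1}^\top\cap\ker d_k$ for all $A,B\succ0$ by the SPD energy identity, the zero eigenvalue is exactly constant near $(I,I)$, not merely first-order stationary. This needs neither simplicity nor eigenvalue perturbation theory. It also shows that, with this placement of the metrics, the harmonic subspace itself is metric-independent. The paper's route instead presents harmonic modes as the null directions of the general metric-response formula, which ties the corollary to Corollary~\ref{cor:diagonal}.

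To make the degenerate case airtight, state explicitly that $L_k(A,B)\succeq0$ (Lemma~\ref{lem:psd}). Then an $h$-dimensional kernel forces the $h$ smallest eigenvalues to be identically $0$ on a neighborhood of $(I,I)$. Each of these labeled branches is therefore constant, so ``$D\lambda_j=0$'' is well defined even without simplicity.

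Note also that this invariance uses only $A,B\succ0$ and the metric placement, not $d^2=0$. The cochain identity is what identifies this fixed kernel with cohomology via the Hodge decomposition.
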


\begin{corollary}[The diagonal metric family strictly extends the unweighted class]
\label{cor:diagonal}
If $L_k^0$ has a simple non-harmonic eigenvalue $\lambda_j$, then a diagonal
metric perturbation can change $\lambda_j$ at first order.  Thus the diagonal
metric family is locally more expressive than the unweighted Hodge operator
near the identity metric.
\end{corollary}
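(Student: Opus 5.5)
The plan is to read the corollary directly off Proposition~\ref{prop:expressivity}(3), specialised to diagonal directions. Fix $k$ and a simple non-harmonic eigenvalue $\lambda_j$ of $L_k^0$ with unit eigenvector $\mathbf{v}_j$. By the definition of non-harmonic used in the proposition, at least one of $\mathbf{a}:=d_{k-1}^\top\mathbf{v}_j$ and $\mathbf{b}:=d_k\mathbf{v}_j$ is nonzero. Say $\mathbf{b}\neq0$; the case $\mathbf{a}\neq0$ is symmetric. Then some index $b_0$ has $(\mathbf{b})_{b_0}\neq0$.

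I take the diagonal direction $\delta A=0$ and $\delta B=\diag(\mathbf{e}_{b_0})$. The diagonal response formula \eqref{eq:diag_response} then gives
\[
D\lambda_j(I,I)[0,\diag(\mathbf{e}_{b_0})]=(d_k\mathbf{v}_j)_{b_0}^2>0 .
\]
More generally, any $\boldsymbol{\zeta}$ with strictly positive entries on the support of $\mathbf{b}$ also gives a strictly positive derivative.

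Next I make sure the perturbation stays inside the admissible metric family. The curve $A(t)=I$, $B(t)=I+t\,\diag(\mathbf{e}_{b_0})$ consists of diagonal SPD matrices for $|t|<1$. Each such matrix is realisable in the parameterisation \eqref{eq:diagonal_metric}, because $\softplus$ maps $\R$ onto $(0,\infty)$. Hence every diagonal entry $>\epsilon$ is attained, and this includes all entries near $1$ once $\epsilon<1$.

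By part (1) of the proposition, $t\mapsto\lambda_j(t)$ is differentiable at $0$ with nonzero derivative. So there is $t_0>0$ such that $\lambda_j(t)\neq\lambda_j(0)$ for all $0<|t|<t_0$. Simplicity of $\lambda_j$ ensures that $\lambda_j(t)$ is the well-defined continuation of the same eigenvalue branch. It follows that $L_k(I,B(t))\neq L_k^0$, and that the two operators have different spectra.

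The unweighted Hodge operator is the single point $L_k^0$ of the family: with $H=I$ it has no metric parameters, so every spectral derivative vanishes. The diagonal family contains $L_k^0$ at $t=0$ and, in every neighbourhood of the identity metric, operators whose $j$-th eigenvalue differs from $\lambda_j$. This is the claimed strict local extension.

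The main obstacle is interpretive rather than computational. I have to fix what "locally more expressive" means, and I will take it as above: the induced spectrum map is non-constant in every neighbourhood of $(I,I)$. I also have to pass from a nonzero first-order derivative to an actual change at finite $t$, which is exactly where part (1) of the proposition and the simplicity assumption are needed. As a contrast, Corollary~\ref{cor:harmonic} shows the hypothesis cannot be dropped: for harmonic $\mathbf{v}_j$ the derivative vanishes in every direction.
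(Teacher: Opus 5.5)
Your proposal is correct and follows the paper's route. The paper gives no separate proof: the corollary is read off from the diagonal response formula \eqref{eq:diag_response} in Proposition~\ref{prop:expressivity}(3) by picking a coordinate direction with $(d_k\mathbf{v}_j)_{b_0}^2>0$ (or the analogous $\delta A$), which is exactly your choice. Your additional steps are sound and make explicit what the paper leaves implicit: that the perturbed diagonal metric is realizable under the softplus parameterization, and that a nonzero derivative gives an actual eigenvalue change via part (1) and simplicity.
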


Proposition~\ref{prop:expressivity} is local: it describes first-order
spectral control around the identity metric.  This is the role of the
learnable metric.  It adds geometric degrees of freedom to the Hodge operator
while leaving the cohomological, harmonic part protected by $d^2=0$.

\begin{remark}[Connection to existing universal approximation results]
\label{prop:approx}
The frameworks of Villar et al.~\cite{villar2021scalars} and
Yarotsky~\cite{yarotsky2022universal} show that if a set of invariant
statistics separates group orbits, then equivariant layers with equivariant
nonlinearities can approximate functions in the corresponding equivariant
class.  RHMP follows this template: the lifting encoder uses
$\En$-invariants, the metric predictor uses $\OC$-invariants, and the
Hodge layers preserve the induced equivariances.  Applying the full
universality theory to cell-complex cochain metrics would require an
orbit-separation result for the chosen statistics $\psi$; this is orthogonal
to the metric construction itself.
\end{remark}

\section{Computational Complexity and Scalability}
\label{app:complexity}

RHMP uses the diagonal-plus-low-rank-basis metric parameterization in the main experiments,
\[
  H_k = B_k B_k^\top + \diag(\softplus(\mathbf{h}_k)+\epsilon),
\]
with rank $r=8$. The pure diagonal case has $O(n_k)$ metric application cost; the rank-$r$ variant used in the main experiments adds $O(n_k r C)$, with $r=8$. The dominant operations remain sparse applications of $d_k$ and $d_k^\top$ at complexity $O(\mathrm{nnz}(d_k)\cdot C)$, the same as standard cell-complex message passing; the metric application consists of diagonal scaling plus a low-rank update.

Table~\ref{tab:timing} shows the single-epoch training time of each method on three representative tasks
(same GPU, same data pipeline).
RHMP is comparable in speed to SCCNN and CW Net (both cell-complex methods),
faster than GAT, EGNN, and SchNet (graph methods),
and slower than GCN and operator-learning methods (FNO, DeepONet).

\begin{table}[h]
\centering
\caption{Single-epoch training time (seconds) on 3 representative tasks. Measured on the same GPU.}
\label{tab:timing}
\small
\adjustbox{max width=\linewidth}{%
\begin{tabular}{lccc}
\toprule
Method & NS Vort (1024 nodes) & Wilson (1024 nodes) & Yang-Mills (1024 nodes) \\
\midrule
GCN & 1.3 & 1.3 & 1.4 \\
GEM-CNN & 1.5 & 1.6 & 2.1 \\
GaugeEqCNN & 4.3 & 4.3 & 7.8 \\
MPSN & 6.2 & 6.5 & 7.1 \\
SCCNN & 9.2 & 9.9 & 10.1 \\
CW Net & 9.6 & 5.4 & 5.8 \\
Clifford-SMPN & 10.6 & 11.0 & 12.5 \\
\textbf{RHMP} & \textbf{11.9} & \textbf{12.4} & \textbf{16.3} \\
SchNet & 14.2 & 13.8 & 14.2 \\
EGNN & 20.0 & 15.1 & 14.5 \\
GAT & 29.6 & 28.6 & 28.0 \\
\midrule
FNO & 0.7 & N/A & N/A \\
DeepONet & 0.3 & 0.3 & 0.3 \\
\bottomrule
\end{tabular}
}
\end{table}

\subsection{Large-Mesh Scalability}
\label{app:scalability}

To assess scalability beyond the benchmark meshes ($\sim$1K nodes), we profile RHMP on Delaunay meshes with up to $(n_0, n_1, n_2) \approx (50\text{K}, 150\text{K}, 100\text{K})$ total cells, using the same single-GPU setup as all other experiments (§\ref{app:compute}). Table~\ref{tab:scalability} reports forward time, backward time, peak GPU memory, and parameter count at four mesh scales.

\begin{table}[h]
\centering
\small
\caption{Wall-clock time and memory for RHMP at increasing mesh scale on a single NVIDIA RTX PRO 6000 Blackwell GPU (96~GB). \textbf{Trainable params} (weights of $\mathrm{MLP}_H$, the lifting encoder, and the message-passing transforms) are shared across all cells and fixed regardless of mesh size. \textbf{Activations} count per-pass scalar outputs (one $h_i$ per $k$-cell plus intermediate features), which grow linearly with $n_k$ as in any message-passing architecture.}
\label{tab:scalability}
\begin{tabular}{lccccc}
\toprule
Mesh scale & Fwd (ms) & Bwd (ms) & Peak mem (GB) & Trainable params & Activations (M) \\
\midrule
1K   & 16.7  & 23.7  & 0.04 & $\sim$0.05M & 0.31 \\
5K   & 17.2  & 23.1  & 0.12 & $\sim$0.05M & 1.50 \\
10K  & 21.2  & 25.8  & 0.22 & $\sim$0.05M & 2.98 \\
100K & 169.2 & 183.0 & 3.93 & $\sim$0.05M & 59.4 \\
\bottomrule
\end{tabular}
\end{table}

\textbf{Near-linear time scaling.}
From 1K to 100K cells (a $100\times$ increase in mesh size), the forward pass time grows by approximately $10\times$, from $16.7$~ms to $169.2$~ms. This sub-linear behavior reflects the sparse structure of the Hodge operator: the dominant cost is sparse matrix--dense matrix multiplication at $O(\mathrm{nnz}(d_k) \cdot C)$, which scales linearly in the number of nonzeros rather than quadratically in mesh size.

\textbf{Moderate memory footprint.}
At 100K cells the peak GPU memory is 3.9~GB, fitting comfortably on a single 80~GB accelerator. Linear extrapolation suggests that meshes of $\sim$1M cells would require $\sim$39~GB, remaining feasible on current hardware. By contrast, SCCNN exceeds available GPU memory at this scale (estimated requirement: 83.8~GB at 100K cells).

\textbf{Mesh-invariant trainable weights.}
The diagonal metric produces one scalar $h_i$ per cell, so per-pass activations grow linearly with $n_k$ (right column of Table~\ref{tab:scalability}); the trainable weights ($\sim$0.05M total: $\mathrm{MLP}_H$, lifting encoder, message-passing transforms) are shared across cells and remain fixed regardless of mesh size. This mesh-invariance of the trainable weights is what enables the same trained model to transfer directly across meshes of differing size and connectivity, as demonstrated in the AirfRANS variable-mesh experiment (§\ref{sec:airfoil}).

\textbf{Accuracy holds at large scale, with a $\sim$$19\times$ speedup over CW Net.}
We compare RHMP against CW Net, the strongest cell-complex baseline on the 1K $U(1)$ Wilson loop benchmark (Table~\ref{tab:main}), at $100\times$ the benchmark mesh size, end-to-end on the same single GPU and the same $70/15/15$ split. Table~\ref{tab:wilson100k} reports the comparison. At matched parameter count (10.80M vs 11.30M), RHMP reaches test $R^2 = 0.9823$ versus $0.9231$ for CW Net (5.9-point gap, NRMSE $0.0033$ vs $0.0069$), and completes training in $113$\,s versus $2181$\,s ($\sim$$19\times$ faster). The speedup tracks the per-step cost: $\sim$10\,ms for RHMP (sparse $d_k^\top H_k d_k$) versus $\sim$700\,ms for CW Net (dense hidden-state propagation at $h{=}512$). Convergence is also faster: RHMP saturates near $R^2 \approx 0.97$ by epoch 20, while CW Net is still rising at epoch 30. Both architectures benefit from the larger mesh (RHMP $R^2 \approx 0.955 \to 0.982$; CW Net $R^2 \approx 0.875 \to 0.923$), consistent with denser plaquette statistics on finer lattices.

\begin{table}[h]
\centering
\small
\caption{Parameter-matched comparison on a 100K-cell $U(1)$ Wilson loop. Both models trained end-to-end on a single GPU with the same $70/15/15$ split. Wall time is total training time.}
\label{tab:wilson100k}
\begin{tabular}{lcccc}
\toprule
Model & Params & Test $R^2$ & Test NRMSE & Wall time \\
\midrule
\textbf{RHMP} ($C{=}16$, 3 layers) & \textbf{10.80M} & \textbf{0.9823} & \textbf{0.0033} & \textbf{113\,s} \\
CW Net ($h{=}512$, 6 layers)              & 11.30M & 0.9231 & 0.0069 & 2181\,s \\
\bottomrule
\end{tabular}
\end{table}

Shared-metric variants, for example tying $H_k$ across cells with similar local geometry or expanding $h_i$ in a compact basis over the mesh, can further reduce activation memory at scales beyond $10^5$ cells.

\section{Ablation Studies}
\label{app:ablation}

We conduct one ablation each for four design choices of RHMP, flipping only one at a time while keeping the rest unchanged: fixing the learnable diagonal metric $H_k$ to the identity; fixing the cross-dimensional mixing coefficient $\alpha$ to $1$, turning off cross-dimensional transport; replacing the norm-gated nonlinearity with an element-wise ReLU; replacing the fixed coboundary operator with a learnable scalar-valued version of the same sparsity pattern.
Ablations are evaluated on four representative tasks:
CNS vorticity on a regular grid,
tangent vector field reconstruction on an ellipsoidal surface,
$U(1)$ Wilson loop gauge curvature,
and $SU(2)$ Yang--Mills field strength.
All ablated variants are retrained under the same data splits for $50$ epochs
(seed $42$, cosine learning rate $1\!\mathrm{e}{-3}\!\to\!1\!\mathrm{e}{-5}$,
parameter budget kept at most $+20\%$ above the full model);
the numbers for the full model reuse the $100$-epoch main benchmark results.

\begin{table}[h]
\centering
\small
\caption{Precision-driving ablations on four representative tasks: regular-grid CNS vorticity, ellipsoid surface flow, $U(1)$ Wilson loop, and $SU(2)$ Yang--Mills. Each column flips \emph{one} design pillar relative to the full model. \emph{Full}: full model (reused from the 100-epoch main benchmark); $H_k\!=\!I$: identity metric (no learnable Riemannian metric); $\alpha\!=\!1$: no cross-dimensional transfer. Ablation variants are retrained for 50 epochs (seed 42, same data split and optimizer). Higher is better for $R^2$/SSIM, lower for NRMSE; bold marks per-row best.}
\label{tab:ablation}
\adjustbox{max width=\linewidth}{%
\begin{tabular}{ll|ccc}
\toprule
Task & Metric & Full & $H_k\!=\!I$ & $\alpha\!=\!1$ \\
\midrule
\multirow{3}{*}{CNS vort.} & $R^2 \uparrow$ & \textbf{0.918} & 0.552 & 0.611 \\
 & SSIM $\uparrow$ & \textbf{0.984} & 0.926 & 0.936 \\
 & NRMSE $\downarrow$ & \textbf{0.0095} & 0.0167 & 0.0155 \\
\midrule
\multirow{3}{*}{Surface flow} & $R^2 \uparrow$ & \textbf{0.972} & 0.935 & 0.927 \\
 & SSIM $\uparrow$ & \textbf{0.988} & 0.974 & 0.971 \\
 & NRMSE $\downarrow$ & \textbf{0.0095} & 0.0107 & 0.0114 \\
\midrule
\multirow{3}{*}{Wilson loop} & $R^2 \uparrow$ & \textbf{0.955} & 0.496 & 0.876 \\
 & SSIM $\uparrow$ & 0.993 & 0.970 & \textbf{0.994} \\
 & NRMSE $\downarrow$ & 0.0032 & 0.0051 & \textbf{0.0025} \\
\midrule
\multirow{3}{*}{Yang--Mills} & $R^2 \uparrow$ & \textbf{0.653} & 0.168 & 0.263 \\
 & SSIM $\uparrow$ & \textbf{0.852} & 0.767 & 0.798 \\
 & NRMSE $\downarrow$ & 0.0212 & 0.0176 & \textbf{0.0166} \\
\bottomrule
\end{tabular}
}
\end{table}

\begin{table}[h]
\centering
\small
\caption{Structure-preserving ablations on the same four tasks. \emph{ReLU}: norm-gated update replaced with element-wise ReLU; \emph{learn $d_k$}: $d_k$ replaced with learnable scalars on the same CW sparsity. The main metric ($R^2$) shifts by at most a few percent (top block: this is the precision-vs.-structure trade-off the paper highlights). The corresponding structural quantities, however, fail by orders of magnitude: $\|d_1 d_0\|_F$ drifts from $0$ to $\sim 10^1$ for learnable $d_k$; the message-passing layer's cochain-frame equivariance error $\|Q^\top f(Qx) - f(x)\|_F / \|f(x)\|_F$ under $Q\in O(C)$ jumps from fp32 round-off to $\sim 10^0$ for ReLU.}
\label{tab:diagnostics}
\adjustbox{max width=\linewidth}{%
\begin{tabular}{ll|ccc}
\toprule
Task & Quantity & Full & ReLU & learn $d_k$ \\
\midrule
CNS vort. & $R^2 \uparrow$ & 0.918 & 0.884 & 0.895 \\
 & $\|d_1 d_0\|_F$ & 0.0e+00 & 0.0e+00 & 12.95 \\
 & frame equiv.\ err. & 7.4e-07 & 9.9e-01 & 7.4e-07 \\
\midrule
Surface flow & $R^2 \uparrow$ & 0.972 & 0.960 & 0.968 \\
 & $\|d_1 d_0\|_F$ & 0.0e+00 & 0.0e+00 & 9.43 \\
 & frame equiv.\ err. & 1.6e-06 & 1.1e+00 & 1.6e-06 \\
\midrule
Wilson loop & $R^2 \uparrow$ & 0.955 & 0.956 & 0.933 \\
 & $\|d_1 d_0\|_F$ & 0.0e+00 & 0.0e+00 & 15.68 \\
 & frame equiv.\ err. & 7.4e-07 & 1.0e+00 & 7.4e-07 \\
\midrule
Yang--Mills & $R^2 \uparrow$ & 0.653 & 0.612 & 0.665 \\
 & $\|d_1 d_0\|_F$ & 0.0e+00 & 0.0e+00 & 15.36 \\
 & frame equiv.\ err. & 1.6e-06 & 1.0e+00 & 1.6e-06 \\
\bottomrule
\end{tabular}
}
\end{table}


\textbf{Metric learning and cross-dimensional communication dominate the main-metric accuracy.}
Fixing $H_k$ to the identity causes degradations of
$\Delta R^2 = -0.37, -0.46, -0.49$ on CNS vorticity, Wilson loop, and Yang--Mills, respectively,
while turning off cross-dimensional transport gives $\Delta R^2 = -0.31, -0.08, -0.39$.
Both have minor effect on the ellipsoidal surface flow ($\Delta R^2 \approx -0.04$),
consistent with the physical structure of that task:
the target $\mathbf{v}_\text{tan} = \star d\psi$ is a one-step coexact map,
insensitive to both the metric and cross-order coupling.
In contrast, CNS vorticity $\omega = \nabla\!\times\!\mathbf{u}$
and gauge curvature $F = d\theta$ ($U(1)$) / $F = dA + [A,A]$ ($SU(2)$)
involve both $d_0$ and $d_1$ after discretization,
the setting where Hodge stratification and the geometric metric act together.

\textbf{Fixed $d_k$ and norm-gated nonlinearity also affect the main metric, but by a smaller margin.}
Making the coboundary learnable and scalar-valued
gives $\Delta R^2 = -0.023, -0.004, -0.022, +0.012$ across the four tasks
(maximum absolute value $0.023$, a slight improvement on Yang--Mills);
replacing norm-gating with ReLU gives
$\Delta R^2 = -0.034, -0.012, +0.001, -0.041$
(maximum absolute value $0.041$).
Although the effect on the main metric is small,
the respective physical-structure diagnostic quantities (Table~\ref{tab:diagnostics})
degrade by a much larger margin:
after learning $d_k$, $\|d_1 d_0\|_F$ drifts from $0$ to
$12.95, 9.43, 15.68, 15.36$ on the four tasks,
so the Hodge decomposition loses exact closure;
after the ReLU replacement, the relative cochain-frame equivariance error of a single-layer weighted Hodge operator under a random $Q \in O(C)$,
$\|Q^\top f(Qx) - f(x)\|_F / \|f(x)\|_F$,
rises from floating-point precision ($\sim 10^{-6}$) to $\sim 10^0$.
These two quantities are of consistent magnitude on the $U(1)$ and $SU(2)$ tasks,
indicating that the conclusions hold on both Abelian and non-Abelian gauge tasks.

Qualitatively, the output fields of $H_k = I$ and the cross-dimensional-transport-off variant show visible structural loss,
while ReLU and learnable $d_k$ are close to the full model in visual quality and pointwise accuracy,
and their symmetry deviations are visible only in the diagnostic quantities of Table~\ref{tab:diagnostics}.

\section{Statistics Information}

\subsection{Test-set bootstrap statistics}
\label{app:bootstrap}

We quantify test-set finite-size uncertainty by non-parametric bootstrap.
For each (task, method) we load the best-epoch checkpoint of one representative training run (seed $=42$), run inference on the entire test set, and draw $B=1000$ bootstrap samples of the test indices with replacement, recomputing every metric on each resample.
We report the standard deviation across the $B$ resamples; main-text Table~\ref{tab:main} reports the seed-mean point estimates over $\{42, 1, 2\}$.
NRMSE uses the fixed global data range (a property of the dataset, not the resample); SSIM and Pearson are averaged over per-sample values selected by the resample indices; R${}^2$, MSE, and MAE are recomputed from scratch on each bootstrap set.
Cross-seed training-time variance is reported separately in §\ref{app:seed-variance}.

Tables \ref{tab:boot-all-ssim}--\ref{tab:boot-all-nrmse} below report
bootstrap statistics for all three metrics used in the main paper.

\begin{table}[h!]
\centering\scriptsize
\caption{\textbf{SSIM} bootstrap std (1000 resamples over test set; point estimates in main-text Table~\ref{tab:main}). Each row a method, each column a task.}
\label{tab:boot-all-ssim}
\resizebox{1.0\linewidth}{!}{%
\begin{tabular}{lccccccc}
\toprule
Method & NS Vort$^\dagger$ & Torus Adv & Ellips Flow & Maxwell & Wilson & Yang--Mills & Airfoil$^\dagger$ \\
\midrule
GCN & $\pm\,0.027$ & $\pm\,0.019$ & $\pm\,0.010$ & $\pm\,0.023$ & $\pm\,0.016$ & $\pm\,0.002$ & $\pm\,0.013$ \\
GAT & $\pm\,0.027$ & $\pm\,0.016$ & $\pm\,0.010$ & $\pm\,0.024$ & $\pm\,0.005$ & $\pm\,0.002$ & $\pm\,0.031$ \\
SchNet & $\pm\,0.027$ & $\pm\,0.014$ & $\pm\,0.010$ & $\pm\,0.022$ & $\pm\,0.005$ & $\pm\,0.002$ & $\pm\,0.017$ \\
EGNN & $\pm\,0.027$ & $\pm\,0.019$ & $\pm\,0.010$ & $\pm\,0.023$ & $\pm\,0.004$ & $\pm\,0.002$ & $\pm\,0.022$ \\
MPSN & $\pm\,0.027$ & $\pm\,0.016$ & $\pm\,0.009$ & $\pm\,0.028$ & $\pm\,0.016$ & $\pm\,0.005$ & N/A \\
SCCNN & $\pm\,0.027$ & $\pm\,0.017$ & $\pm\,0.009$ & $\pm\,0.023$ & $\pm\,0.007$ & $\pm\,0.005$ & N/A \\
GaugeEqCNN & $\pm\,0.003$ & $\pm\,0.012$ & $\pm\,0.001$ & $\pm\,0.011$ & $\pm\,0.002$ & $\pm\,0.001$ & N/A \\
GEM-CNN & $\pm\,0.005$ & $\pm\,0.010$ & $\pm\,0.006$ & $\pm\,0.014$ & $\pm\,0.006$ & $\pm\,0.003$ & N/A \\
CW Net & $\pm\,0.002$ & $\pm\,0.007$ & $\pm\,0.011$ & $\pm\,0.011$ & $\pm\,0.002$ & $\pm\,0.002$ & N/A \\
Clifford-SMPN & $\pm\,0.006$ & $\pm\,0.009$ & $\pm\,0.009$ & $\pm\,0.020$ & $\pm\,0.003$ & $\pm\,0.004$ & N/A \\
FNO & $\pm\,0.003$ & N/A & N/A & N/A & N/A & N/A & N/A \\
DeepONet & $\pm\,0.027$ & $\pm\,0.024$ & $\pm\,0.010$ & $\pm\,0.023$ & $\pm\,0.012$ & $\pm\,0.002$ & N/A \\
RHMP & $\pm\,0.002$ & $\pm\,0.004$ & $\pm\,0.000$ & $\pm\,0.007$ & $\pm\,0.001$ & $\pm\,0.001$ & $\pm\,0.015$ \\
\bottomrule
\end{tabular}%
}
\end{table}

\begin{table}[t]
\centering\scriptsize
\caption{\textbf{Pearson} bootstrap std (1000 resamples over test set; point estimates in main-text Table~\ref{tab:main}). Each row a method, each column a task.}
\label{tab:boot-all-pearson}
\resizebox{1.0\linewidth}{!}{%
\begin{tabular}{lccccccc}
\toprule
Method & NS Vort$^\dagger$ & Torus Adv & Ellips Flow & Maxwell & Wilson & Yang--Mills & Airfoil$^\dagger$ \\
\midrule
GCN & $\pm\,0.010$ & $\pm\,0.016$ & $\pm\,0.003$ & $\pm\,0.007$ & $\pm\,0.022$ & $\pm\,0.007$ & $\pm\,0.018$ \\
GAT & $\pm\,0.005$ & $\pm\,0.016$ & $\pm\,0.005$ & $\pm\,0.021$ & $\pm\,0.026$ & $\pm\,0.007$ & $\pm\,0.024$ \\
SchNet & $\pm\,0.004$ & $\pm\,0.015$ & $\pm\,0.006$ & $\pm\,0.021$ & $\pm\,0.026$ & $\pm\,0.004$ & $\pm\,0.020$ \\
EGNN & $\pm\,0.006$ & $\pm\,0.015$ & $\pm\,0.004$ & $\pm\,0.022$ & $\pm\,0.024$ & $\pm\,0.002$ & $\pm\,0.025$ \\
MPSN & $\pm\,0.003$ & $\pm\,0.015$ & $\pm\,0.007$ & $\pm\,0.008$ & $\pm\,0.027$ & $\pm\,0.011$ & N/A \\
SCCNN & $\pm\,0.003$ & $\pm\,0.014$ & $\pm\,0.008$ & $\pm\,0.024$ & $\pm\,0.024$ & $\pm\,0.011$ & N/A \\
GaugeEqCNN & $\pm\,0.002$ & $\pm\,0.011$ & $\pm\,0.001$ & $\pm\,0.011$ & $\pm\,0.013$ & $\pm\,0.002$ & N/A \\
GEM-CNN & $\pm\,0.005$ & $\pm\,0.011$ & $\pm\,0.004$ & $\pm\,0.011$ & $\pm\,0.014$ & $\pm\,0.003$ & N/A \\
CW Net & $\pm\,0.002$ & $\pm\,0.006$ & $\pm\,0.009$ & $\pm\,0.010$ & $\pm\,0.021$ & $\pm\,0.003$ & N/A \\
Clifford-SMPN & $\pm\,0.009$ & $\pm\,0.007$ & $\pm\,0.009$ & $\pm\,0.016$ & $\pm\,0.023$ & $\pm\,0.006$ & N/A \\
FNO & $\pm\,0.002$ & N/A & N/A & N/A & N/A & N/A & N/A \\
DeepONet & $\pm\,0.016$ & $\pm\,0.017$ & $\pm\,0.010$ & $\pm\,0.028$ & $\pm\,0.017$ & $\pm\,0.003$ & N/A \\
RHMP & $\pm\,0.002$ & $\pm\,0.004$ & $\pm\,0.000$ & $\pm\,0.008$ & $\pm\,0.011$ & $\pm\,0.001$ & $\pm\,0.011$ \\
\bottomrule
\end{tabular}%
}
\end{table}

\begin{table}[h!]
\centering\scriptsize
\caption{\textbf{NRMSE} bootstrap std (1000 resamples over test set; point estimates in main-text Table~\ref{tab:main}). Each row a method, each column a task.}
\label{tab:boot-all-nrmse}
\resizebox{1.0\linewidth}{!}{%
\begin{tabular}{lccccccc}
\toprule
Method & NS Vort$^\dagger$ & Torus Adv & Ellips Flow & Maxwell & Wilson & Yang--Mills & Airfoil$^\dagger$ \\
\midrule
GCN & $\pm\,0.0051$ & $\pm\,0.0016$ & $\pm\,0.0015$ & $\pm\,0.0009$ & $\pm\,0.0010$ & $\pm\,0.0001$ & $\pm\,0.0043$ \\
GAT & $\pm\,0.0051$ & $\pm\,0.0016$ & $\pm\,0.0015$ & $\pm\,0.0009$ & $\pm\,0.0007$ & $\pm\,0.0001$ & $\pm\,0.0050$ \\
SchNet & $\pm\,0.0051$ & $\pm\,0.0016$ & $\pm\,0.0015$ & $\pm\,0.0008$ & $\pm\,0.0004$ & $\pm\,0.0001$ & $\pm\,0.0037$ \\
EGNN & $\pm\,0.0051$ & $\pm\,0.0015$ & $\pm\,0.0015$ & $\pm\,0.0009$ & $\pm\,0.0004$ & $\pm\,0.0001$ & $\pm\,0.0046$ \\
MPSN & $\pm\,0.0050$ & $\pm\,0.0016$ & $\pm\,0.0019$ & $\pm\,0.0018$ & $\pm\,0.0009$ & $\pm\,0.0001$ & N/A \\
SCCNN & $\pm\,0.0050$ & $\pm\,0.0015$ & $\pm\,0.0019$ & $\pm\,0.0015$ & $\pm\,0.0006$ & $\pm\,0.0001$ & N/A \\
GaugeEqCNN & $\pm\,0.0021$ & $\pm\,0.0013$ & $\pm\,0.0004$ & $\pm\,0.0006$ & $\pm\,0.0005$ & $\pm\,0.0001$ & N/A \\
GEM-CNN & $\pm\,0.0027$ & $\pm\,0.0014$ & $\pm\,0.0009$ & $\pm\,0.0007$ & $\pm\,0.0005$ & $\pm\,0.0002$ & N/A \\
CW Net & $\pm\,0.0017$ & $\pm\,0.0011$ & $\pm\,0.0014$ & $\pm\,0.0005$ & $\pm\,0.0003$ & $\pm\,0.0001$ & N/A \\
Clifford-SMPN & $\pm\,0.0027$ & $\pm\,0.0012$ & $\pm\,0.0020$ & $\pm\,0.0014$ & $\pm\,0.0005$ & $\pm\,0.0002$ & N/A \\
FNO & $\pm\,0.0017$ & N/A & N/A & N/A & N/A & N/A & N/A \\
DeepONet & $\pm\,0.0051$ & $\pm\,0.0021$ & $\pm\,0.0015$ & $\pm\,0.0009$ & $\pm\,0.0007$ & $\pm\,0.0001$ & N/A \\
RHMP & $\pm\,0.0016$ & $\pm\,0.0009$ & $\pm\,0.0003$ & $\pm\,0.0004$ & $\pm\,0.0002$ & $\pm\,0.0001$ & $\pm\,0.0042$ \\
\bottomrule
\end{tabular}%
}
\end{table}


\subsection{Cross-seed training variance}
\label{app:seed-variance}

We retrain every (task, method) combination under three independent seeds ($\{42, 1, 2\}$) and report the cross-seed sample standard deviation of each metric in Tables~\ref{tab:seed-all-ssim}--\ref{tab:seed-all-nrmse}; main-text Table~\ref{tab:main} reports the seed mean.
The cross-seed std captures training-time variance from initialization and stochastic gradient noise, complementing the test-set finite-size variance from the bootstrap in §\ref{app:bootstrap}.
The RHMP ranking in Table~\ref{tab:main} is preserved across seeds on every task.
Cells marked $n{=}1$ correspond to seed-1/2 reruns that did not complete and are reported using the seed=42 run alone.

\begin{table}[t]
\centering\scriptsize
\caption{\textbf{SSIM} cross-seed standard deviation (seeds 42, 1, 2). Point estimates in main-text Table~\ref{tab:main}.}
\label{tab:seed-all-ssim}
\resizebox{1.0\linewidth}{!}{%
\begin{tabular}{lccccccc}
\toprule
Method & NS Vort$^\dagger$ & Torus Adv & Ellips Flow & Maxwell & Wilson & Yang--Mills & Airfoil$^\dagger$ \\
\midrule
GCN & $\pm\,0.000$ & $\pm\,0.032$ & $\pm\,0.000$ & $\pm\,0.000$ & $\pm\,0.003$ & $\pm\,0.001$ & $\pm\,0.049$ \\
GAT & $\pm\,0.005$ & $\pm\,0.027$ & $\pm\,0.000$ & $\pm\,0.008$ & $\pm\,0.009$ & $\pm\,0.003$ & $\pm\,0.048$ \\
SchNet & $\pm\,0.004$ & $\pm\,0.016$ & $\pm\,0.000$ & $\pm\,0.037$ & $\pm\,0.010$ & $\pm\,0.043$ & $\pm\,0.083$ \\
EGNN & $\pm\,0.001$ & $\pm\,0.144$ & $\pm\,0.000$ & $\pm\,0.004$ & $\pm\,0.001$ & $\pm\,0.001$ & $\pm\,0.008$ \\
MPSN & $\pm\,0.000$ & $\pm\,0.001$ & $\pm\,0.000$ & $\pm\,0.001$ & $\pm\,0.008$ & $\pm\,0.002$ & N/A \\
SCCNN & $\pm\,0.000$ & $\pm\,0.002$ & $\pm\,0.003$ & $\pm\,0.057$ & $\pm\,0.029$ & $\pm\,0.005$ & N/A \\
GaugeEqCNN & $\pm\,0.001$ & $\pm\,0.002$ & $\pm\,0.001$ & $\pm\,0.000$ & $\pm\,0.000$ & $\pm\,0.001$ & N/A \\
GEM-CNN & $\pm\,0.000$ & $\pm\,0.000$ & $\pm\,0.001$ & $\pm\,0.000$ & $\pm\,0.000$ & $\pm\,0.000$ & N/A \\
CW Net & $\pm\,0.000$ & $\pm\,0.002$ & $\pm\,0.001$ & $\pm\,0.005$ & $\pm\,0.000$ & $\pm\,0.003$ & N/A \\
Clifford-SMPN & $\pm\,0.002$ & $\pm\,0.001$ & $\pm\,0.010$ & $\pm\,0.003$ & $\pm\,0.022$ & $\pm\,0.001$ & N/A \\
FNO & $\pm\,0.007$ & N/A & N/A & N/A & N/A & N/A & N/A \\
DeepONet & $\pm\,0.004$ & $\pm\,0.000$ & $\pm\,0.000$ & $\pm\,0.000$ & $\pm\,0.000$ & $\pm\,0.001$ & N/A \\
RHMP & $\pm\,0.001$ & $\pm\,0.001$ & $\pm\,0.001$ & $\pm\,0.010$ & $\pm\,0.002$ & $\pm\,0.003$ & $\pm\,0.030$ \\
\bottomrule
\end{tabular}%
}
\end{table}

\begin{table}[t]
\centering\scriptsize
\caption{\textbf{Pearson} cross-seed standard deviation (seeds 42, 1, 2). Point estimates in main-text Table~\ref{tab:main}.}
\label{tab:seed-all-pearson}
\resizebox{1.0\linewidth}{!}{%
\begin{tabular}{lccccccc}
\toprule
Method & NS Vort$^\dagger$ & Torus Adv & Ellips Flow & Maxwell & Wilson & Yang--Mills & Airfoil$^\dagger$ \\
\midrule
GCN & $\pm\,0.009$ & $\pm\,0.001$ & $\pm\,0.002$ & $\pm\,0.038$ & $\pm\,0.015$ & $\pm\,0.002$ & $\pm\,0.012$ \\
GAT & $\pm\,0.030$ & $\pm\,0.004$ & $\pm\,0.002$ & $\pm\,0.041$ & $\pm\,0.053$ & $\pm\,0.006$ & $\pm\,0.041$ \\
SchNet & $\pm\,0.009$ & $\pm\,0.004$ & $\pm\,0.002$ & $\pm\,0.089$ & $\pm\,0.031$ & $\pm\,0.080$ & $\pm\,0.009$ \\
EGNN & $\pm\,0.008$ & $\pm\,0.002$ & $\pm\,0.004$ & $\pm\,0.027$ & $\pm\,0.009$ & $\pm\,0.002$ & $\pm\,0.003$ \\
MPSN & $\pm\,0.010$ & $\pm\,0.000$ & $\pm\,0.012$ & $\pm\,0.003$ & $\pm\,0.015$ & $\pm\,0.002$ & N/A \\
SCCNN & $\pm\,0.014$ & $\pm\,0.005$ & $\pm\,0.014$ & $\pm\,0.077$ & $\pm\,0.095$ & $\pm\,0.009$ & N/A \\
GaugeEqCNN & $\pm\,0.000$ & $\pm\,0.001$ & $\pm\,0.001$ & $\pm\,0.001$ & $\pm\,0.001$ & $\pm\,0.001$ & N/A \\
GEM-CNN & $\pm\,0.001$ & $\pm\,0.000$ & $\pm\,0.000$ & $\pm\,0.000$ & $\pm\,0.000$ & $\pm\,0.000$ & N/A \\
CW Net & $\pm\,0.000$ & $\pm\,0.001$ & $\pm\,0.001$ & $\pm\,0.007$ & $\pm\,0.001$ & $\pm\,0.005$ & N/A \\
Clifford-SMPN & $\pm\,0.003$ & $\pm\,0.000$ & $\pm\,0.028$ & $\pm\,0.003$ & $\pm\,0.069$ & $\pm\,0.002$ & N/A \\
FNO & $\pm\,0.004$ & N/A & N/A & N/A & N/A & N/A & N/A \\
DeepONet & $\pm\,0.035$ & $\pm\,0.002$ & $\pm\,0.004$ & $\pm\,0.004$ & $\pm\,0.002$ & $\pm\,0.001$ & N/A \\
RHMP & $\pm\,0.008$ & $\pm\,0.001$ & $\pm\,0.001$ & $\pm\,0.015$ & $\pm\,0.011$ & $\pm\,0.003$ & $\pm\,0.055$ \\
\bottomrule
\end{tabular}%
}
\end{table}

\begin{table}[t]
\centering\scriptsize
\caption{\textbf{NRMSE} cross-seed standard deviation (seeds 42, 1, 2). Point estimates in main-text Table~\ref{tab:main}.}
\label{tab:seed-all-nrmse}
\resizebox{1.0\linewidth}{!}{%
\begin{tabular}{lccccccc}
\toprule
Method & NS Vort$^\dagger$ & Torus Adv & Ellips Flow & Maxwell & Wilson & Yang--Mills & Airfoil$^\dagger$ \\
\midrule
GCN & $\pm\,0.0000$ & $\pm\,0.0001$ & $\pm\,0.0000$ & $\pm\,0.0000$ & $\pm\,0.0004$ & $\pm\,0.0000$ & $\pm\,0.0002$ \\
GAT & $\pm\,0.0000$ & $\pm\,0.0003$ & $\pm\,0.0000$ & $\pm\,0.0003$ & $\pm\,0.0006$ & $\pm\,0.0001$ & $\pm\,0.0026$ \\
SchNet & $\pm\,0.0000$ & $\pm\,0.0004$ & $\pm\,0.0000$ & $\pm\,0.0011$ & $\pm\,0.0005$ & $\pm\,0.0011$ & $\pm\,0.0009$ \\
EGNN & $\pm\,0.0000$ & $\pm\,0.0006$ & $\pm\,0.0000$ & $\pm\,0.0001$ & $\pm\,0.0001$ & $\pm\,0.0001$ & $\pm\,0.0004$ \\
MPSN & $\pm\,0.0000$ & $\pm\,0.0000$ & $\pm\,0.0000$ & $\pm\,0.0000$ & $\pm\,0.0007$ & $\pm\,0.0000$ & N/A \\
SCCNN & $\pm\,0.0000$ & $\pm\,0.0004$ & $\pm\,0.0001$ & $\pm\,0.0013$ & $\pm\,0.0013$ & $\pm\,0.0001$ & N/A \\
GaugeEqCNN & $\pm\,0.0001$ & $\pm\,0.0001$ & $\pm\,0.0002$ & $\pm\,0.0000$ & $\pm\,0.0000$ & $\pm\,0.0000$ & N/A \\
GEM-CNN & $\pm\,0.0000$ & $\pm\,0.0000$ & $\pm\,0.0000$ & $\pm\,0.0000$ & $\pm\,0.0000$ & $\pm\,0.0000$ & N/A \\
CW Net & $\pm\,0.0000$ & $\pm\,0.0000$ & $\pm\,0.0001$ & $\pm\,0.0001$ & $\pm\,0.0000$ & $\pm\,0.0001$ & N/A \\
Clifford-SMPN & $\pm\,0.0001$ & $\pm\,0.0000$ & $\pm\,0.0002$ & $\pm\,0.0000$ & $\pm\,0.0015$ & $\pm\,0.0001$ & N/A \\
FNO & $\pm\,0.0008$ & N/A & N/A & N/A & N/A & N/A & N/A \\
DeepONet & $\pm\,0.0001$ & $\pm\,0.0000$ & $\pm\,0.0000$ & $\pm\,0.0000$ & $\pm\,0.0000$ & $\pm\,0.0000$ & N/A \\
RHMP & $\pm\,0.0005$ & $\pm\,0.0002$ & $\pm\,0.0003$ & $\pm\,0.0004$ & $\pm\,0.0005$ & $\pm\,0.0002$ & $\pm\,0.0012$ \\
\bottomrule
\end{tabular}%
}
\end{table}

\clearpage

\section{Additional Qualitative Comparisons}
\label{app:qualitative}

This appendix shows further per-task qualitative samples to complement Figure~\ref{fig:qual_main}.
For each task we display four additional test samples drawn from the
same per-sample winner pool used in the main text.
The samples are independently chosen and span the value distribution
of each task, so the qualitative trends visible in the main figure are
not specific to any single example.

\begin{figure}[H]
\centering
\begin{subfigure}{0.49\textwidth}\centering
  \includegraphics[width=\linewidth]{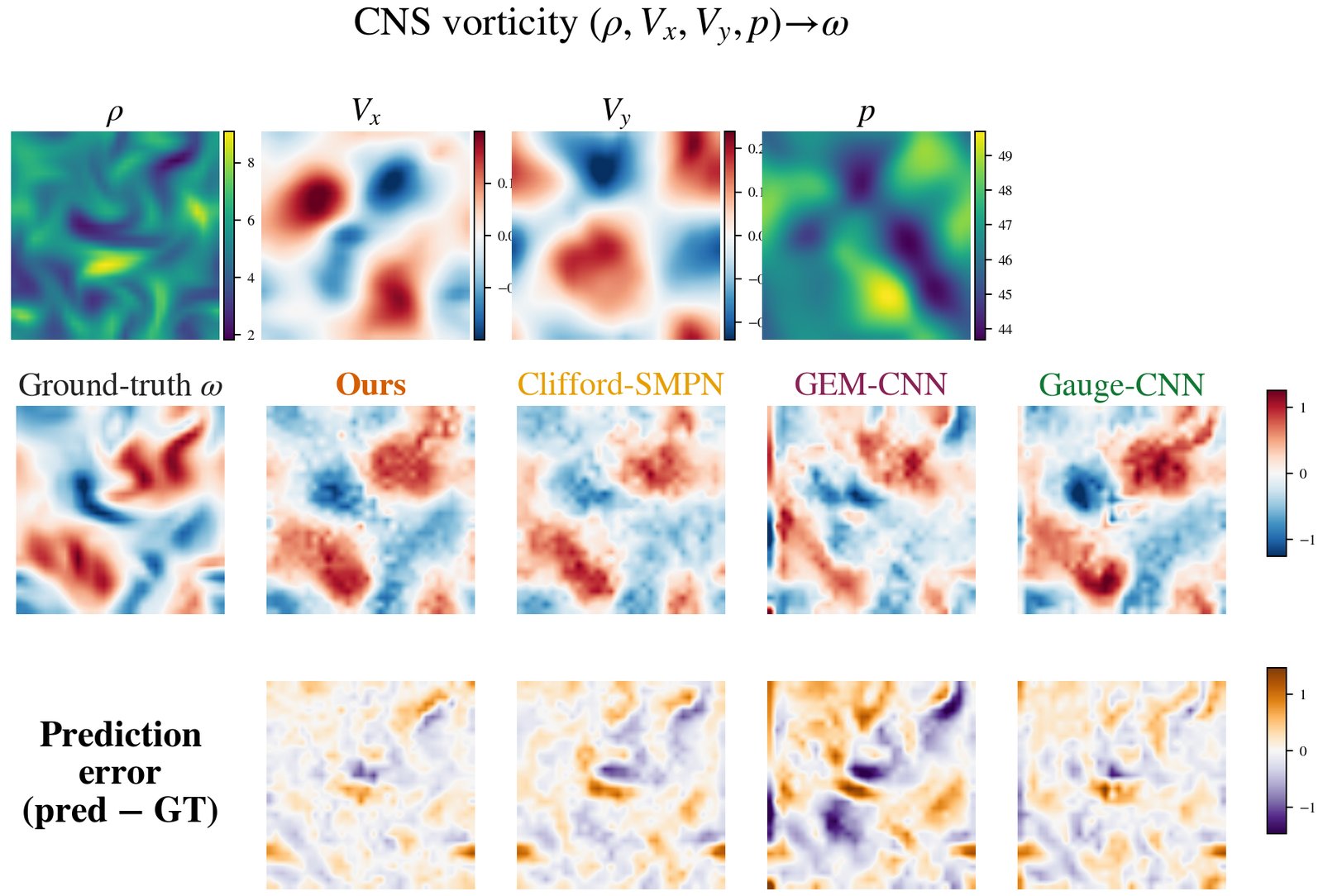}
  \caption{Sample 1.}
\end{subfigure}\hfill
\begin{subfigure}{0.49\textwidth}\centering
  \includegraphics[width=\linewidth]{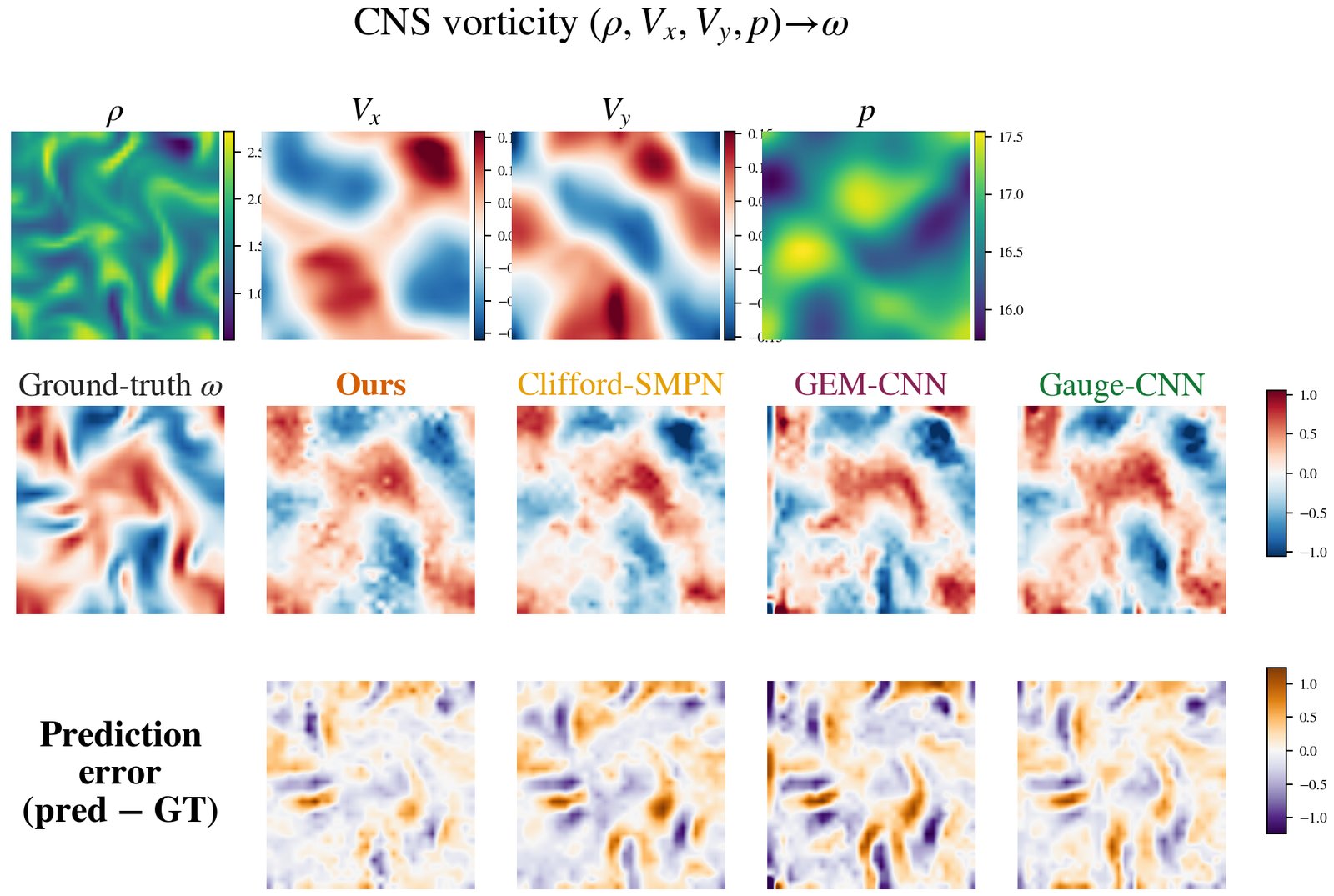}
  \caption{Sample 2.}
\end{subfigure}

\vspace{0.4em}
\begin{subfigure}{0.49\textwidth}\centering
  \includegraphics[width=\linewidth]{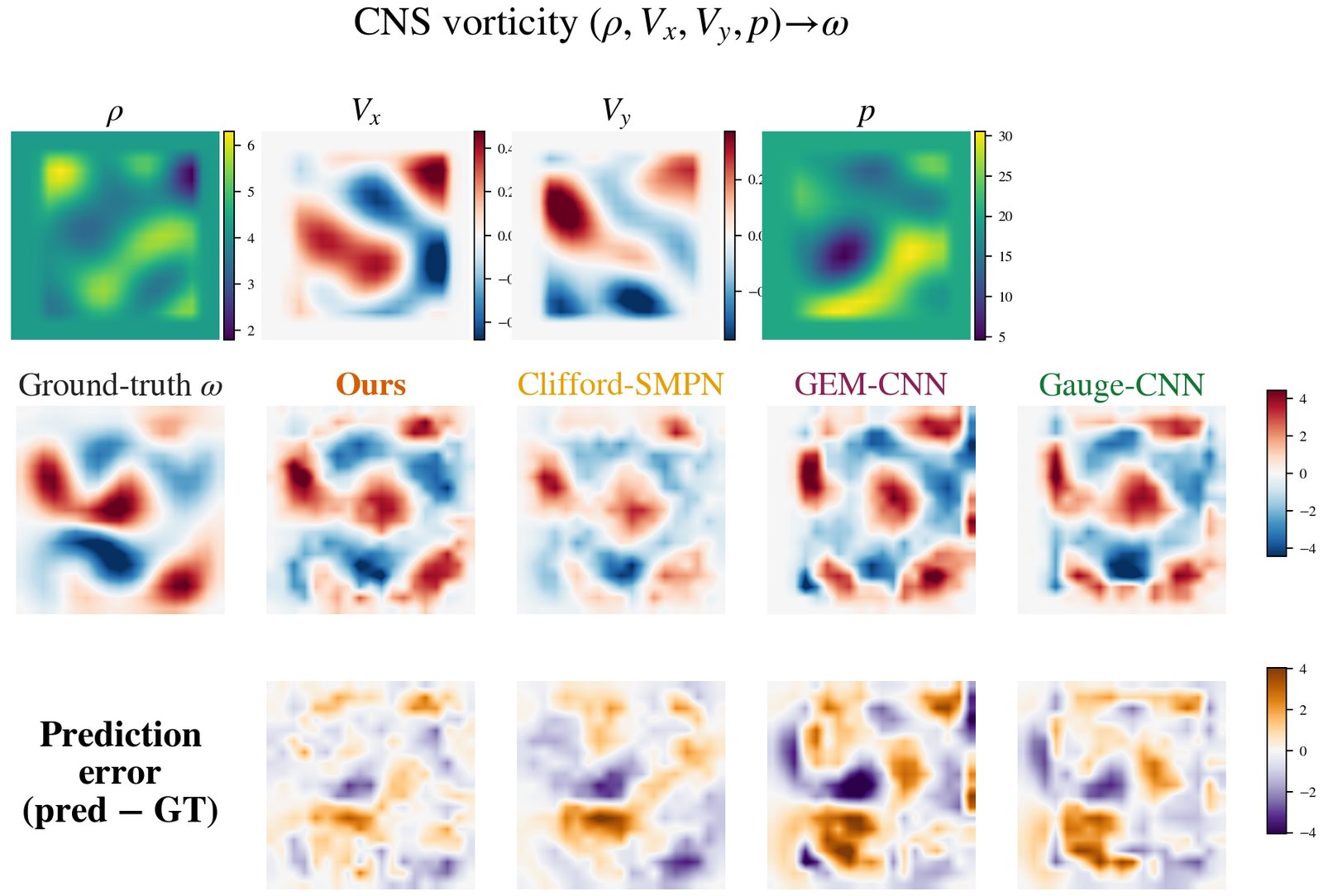}
  \caption{Sample 3.}
\end{subfigure}\hfill
\begin{subfigure}{0.49\textwidth}\centering
  \includegraphics[width=\linewidth]{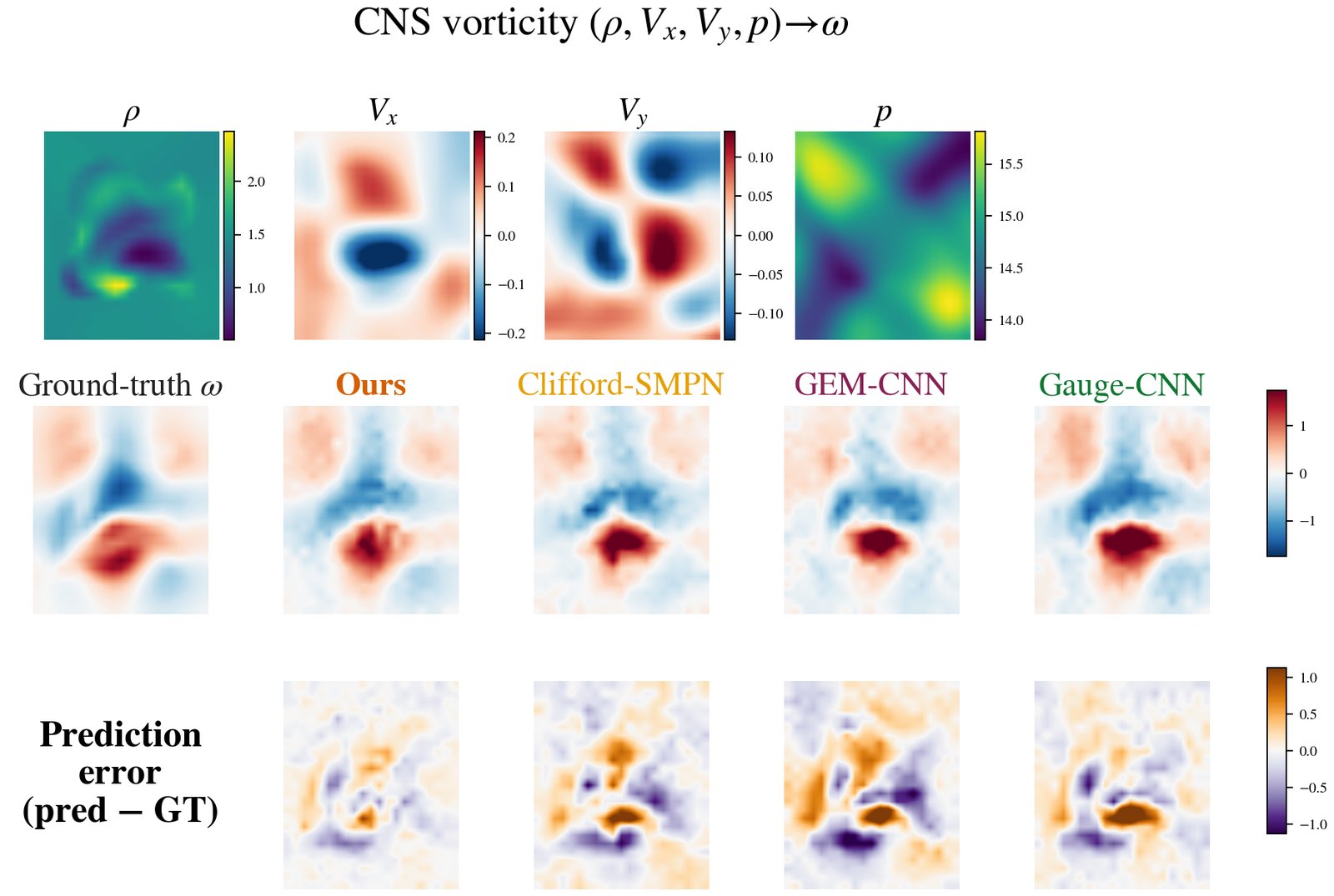}
  \caption{Sample 4.}
\end{subfigure}
\caption{Additional qualitative samples on Navier--Stokes vorticity (PDEBench).
Each row contrasts the ground-truth vorticity with predictions from
RHMP and the strongest cell-complex / manifold-gauge baselines.}
\label{fig:qual_app_ns}
\end{figure}

\begin{figure}[H]
\centering
\begin{subfigure}{0.49\textwidth}\centering
  \includegraphics[width=\linewidth]{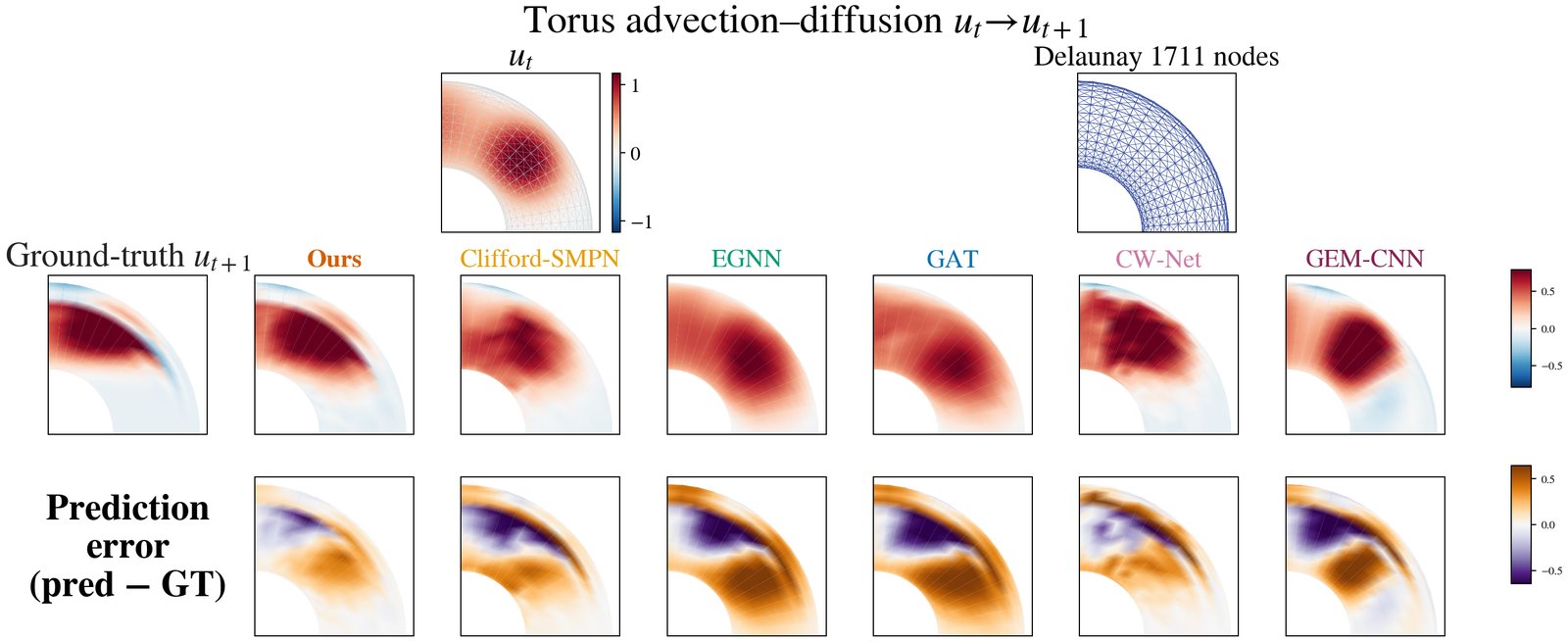}
  \caption{Sample 1.}
\end{subfigure}\hfill
\begin{subfigure}{0.49\textwidth}\centering
  \includegraphics[width=\linewidth]{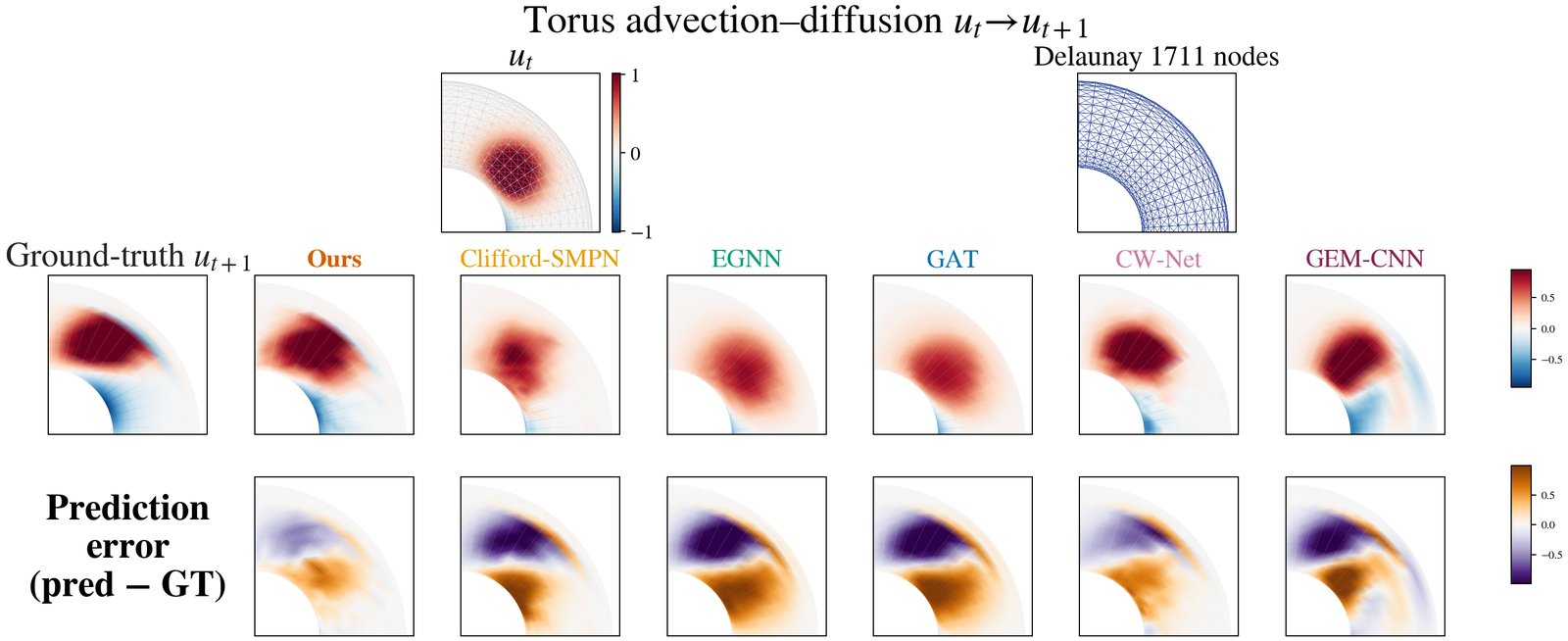}
  \caption{Sample 2.}
\end{subfigure}

\vspace{0.4em}
\begin{subfigure}{0.49\textwidth}\centering
  \includegraphics[width=\linewidth]{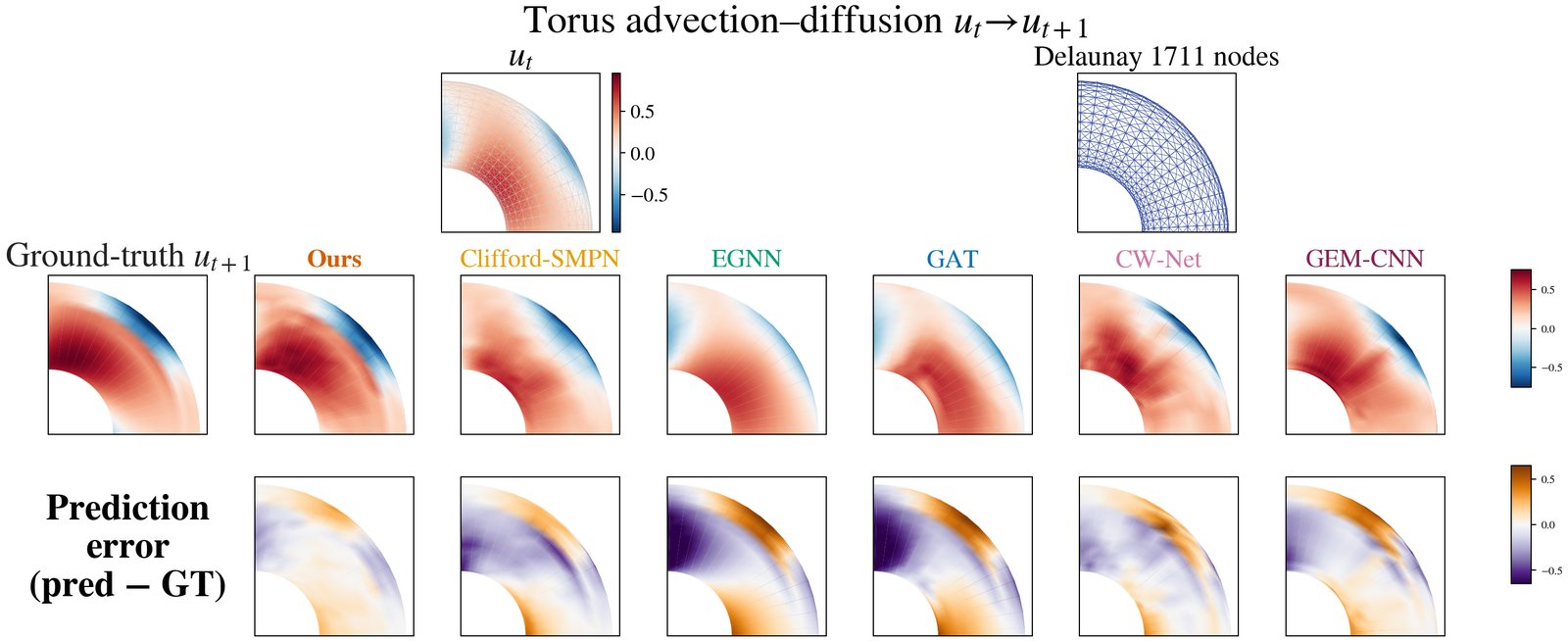}
  \caption{Sample 3.}
\end{subfigure}\hfill
\begin{subfigure}{0.49\textwidth}\centering
  \includegraphics[width=\linewidth]{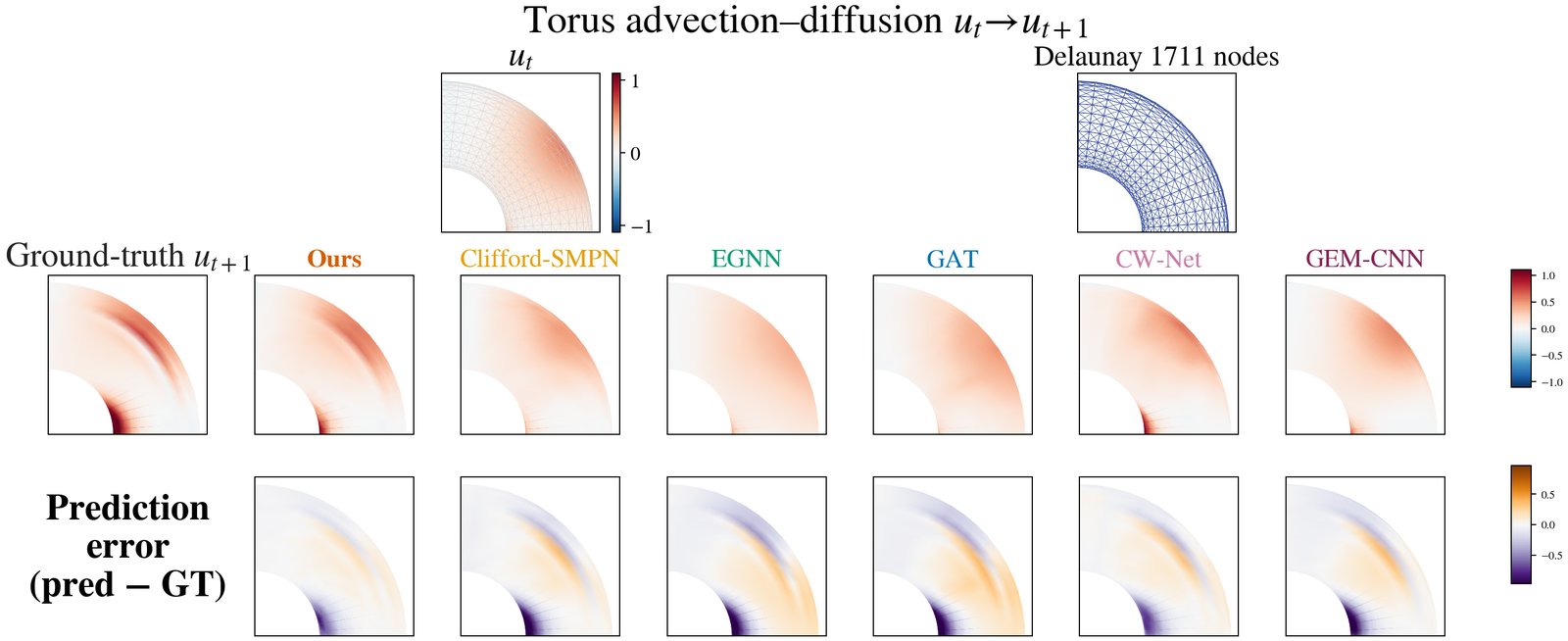}
  \caption{Sample 4.}
\end{subfigure}
\caption{Additional qualitative samples on torus advection--diffusion.
Each panel shows the scalar transport field on a genus-1 surface;
RHMP preserves the localized plumes and sharp gradients that
graph and cell-complex baselines smear out.}
\label{fig:qual_app_torus}
\end{figure}

\begin{figure}[H]
\centering
\begin{subfigure}{0.43\textwidth}\centering
  \includegraphics[width=\linewidth]{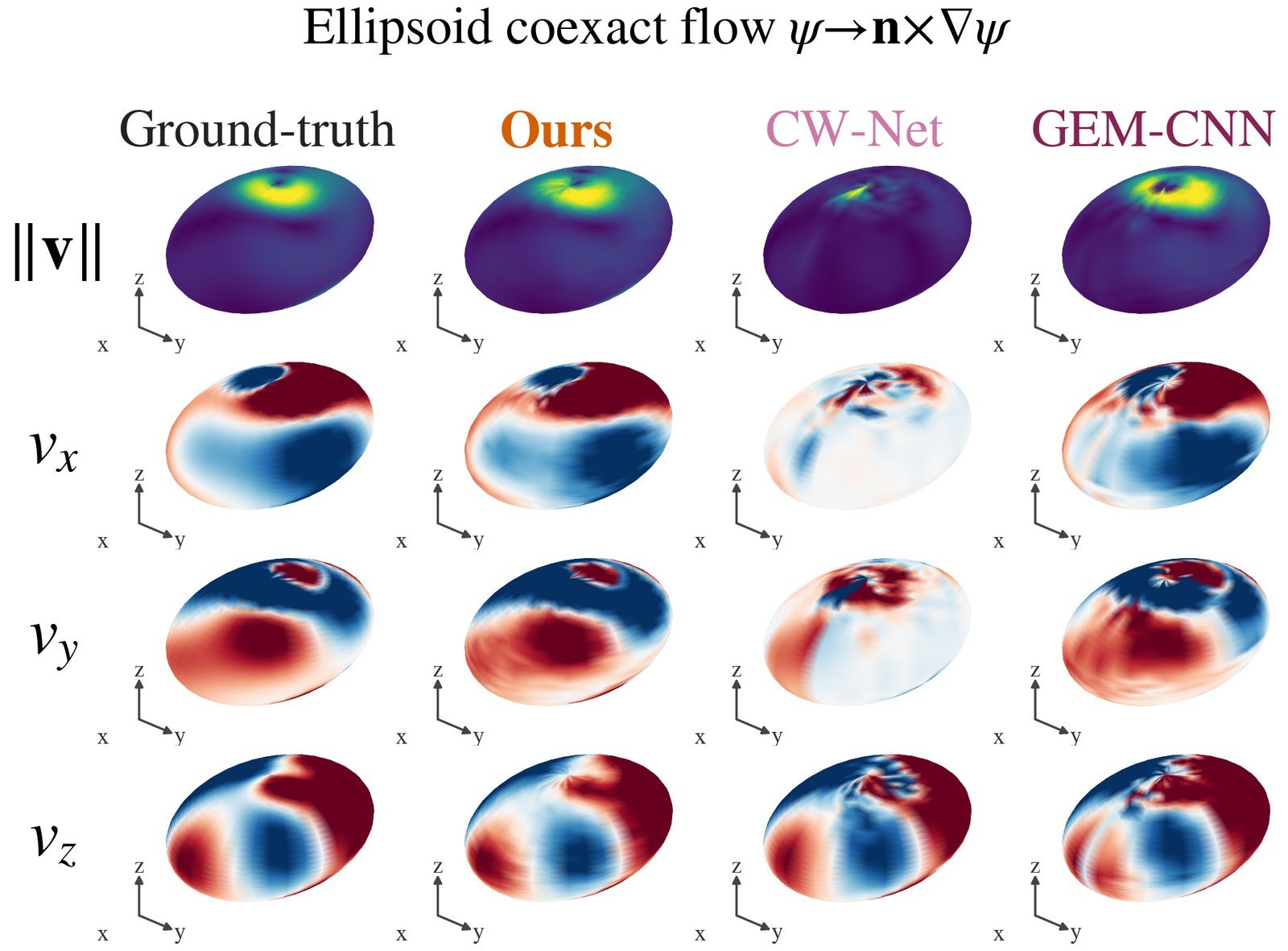}
  \caption{Sample 1.}
\end{subfigure}\hfill
\begin{subfigure}{0.43\textwidth}\centering
  \includegraphics[width=\linewidth]{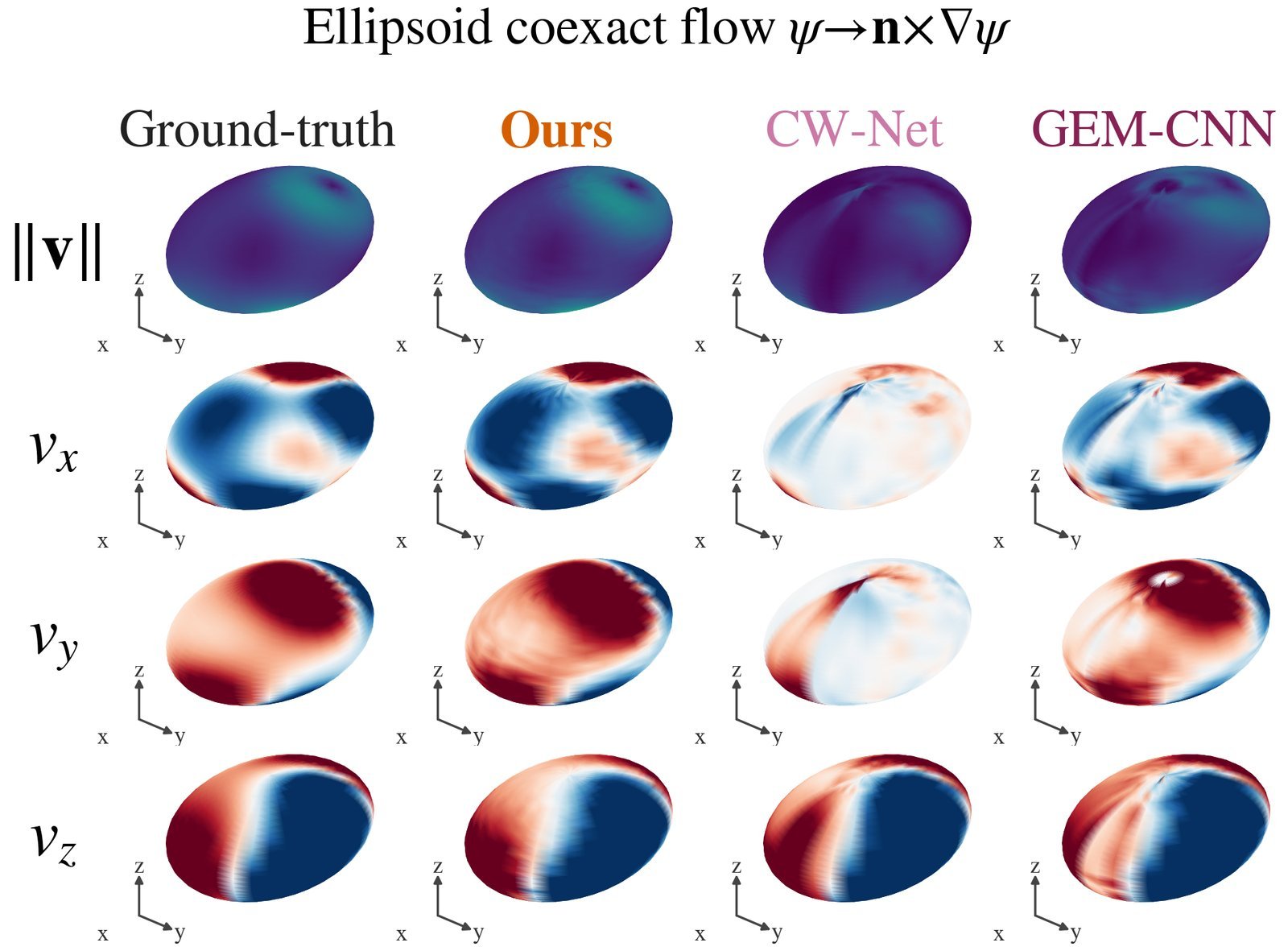}
  \caption{Sample 2.}
\end{subfigure}

\vspace{0.4em}
\begin{subfigure}{0.43\textwidth}\centering
  \includegraphics[width=\linewidth]{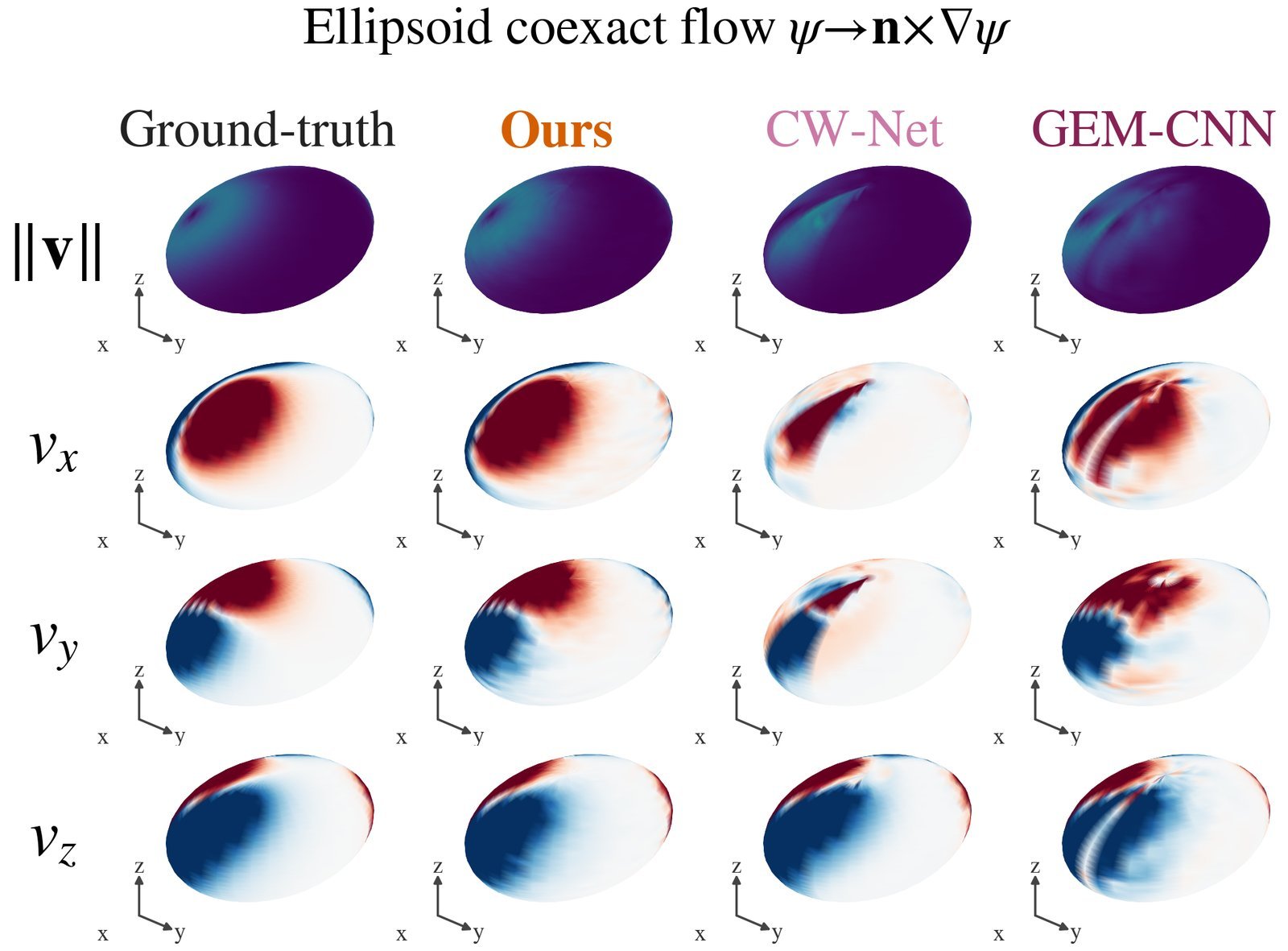}
  \caption{Sample 3.}
\end{subfigure}\hfill
\begin{subfigure}{0.43\textwidth}\centering
  \includegraphics[width=\linewidth]{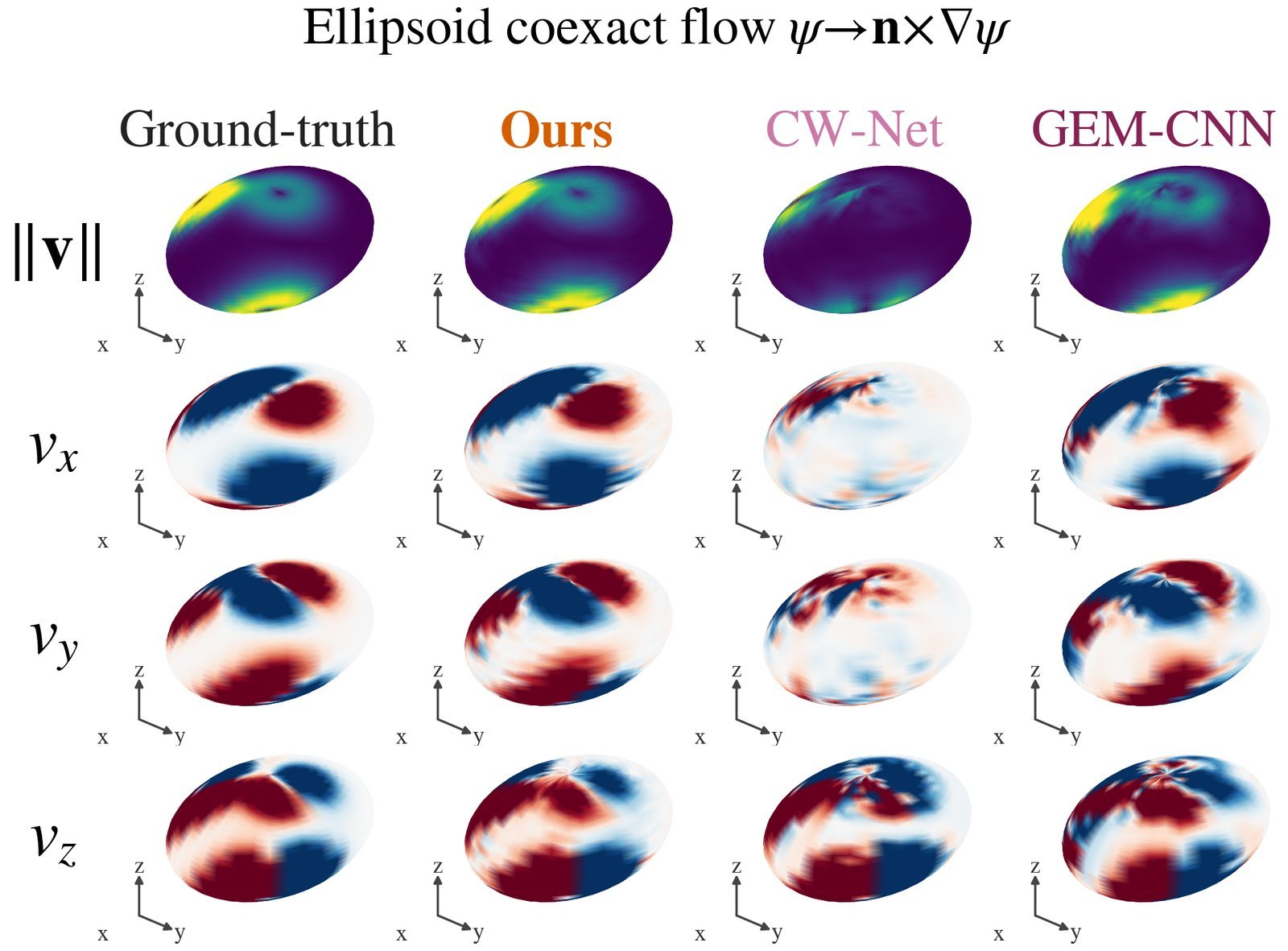}
  \caption{Sample 4.}
\end{subfigure}
\caption{Additional qualitative samples on ellipsoid coexact surface flow.
Rows show $\|\mathbf{v}\|$ and the three Cartesian components of
$\mathbf{v}_{\rm tan} = \mathbf{n}\times\nabla\psi$; CW-Net loses the
sign structure of the surface vector field while RHMP tracks the
ground truth.}
\label{fig:qual_app_ellipsoid}
\end{figure}

\begin{figure}[H]
\centering
\begin{subfigure}{0.49\textwidth}\centering
  \includegraphics[width=\linewidth]{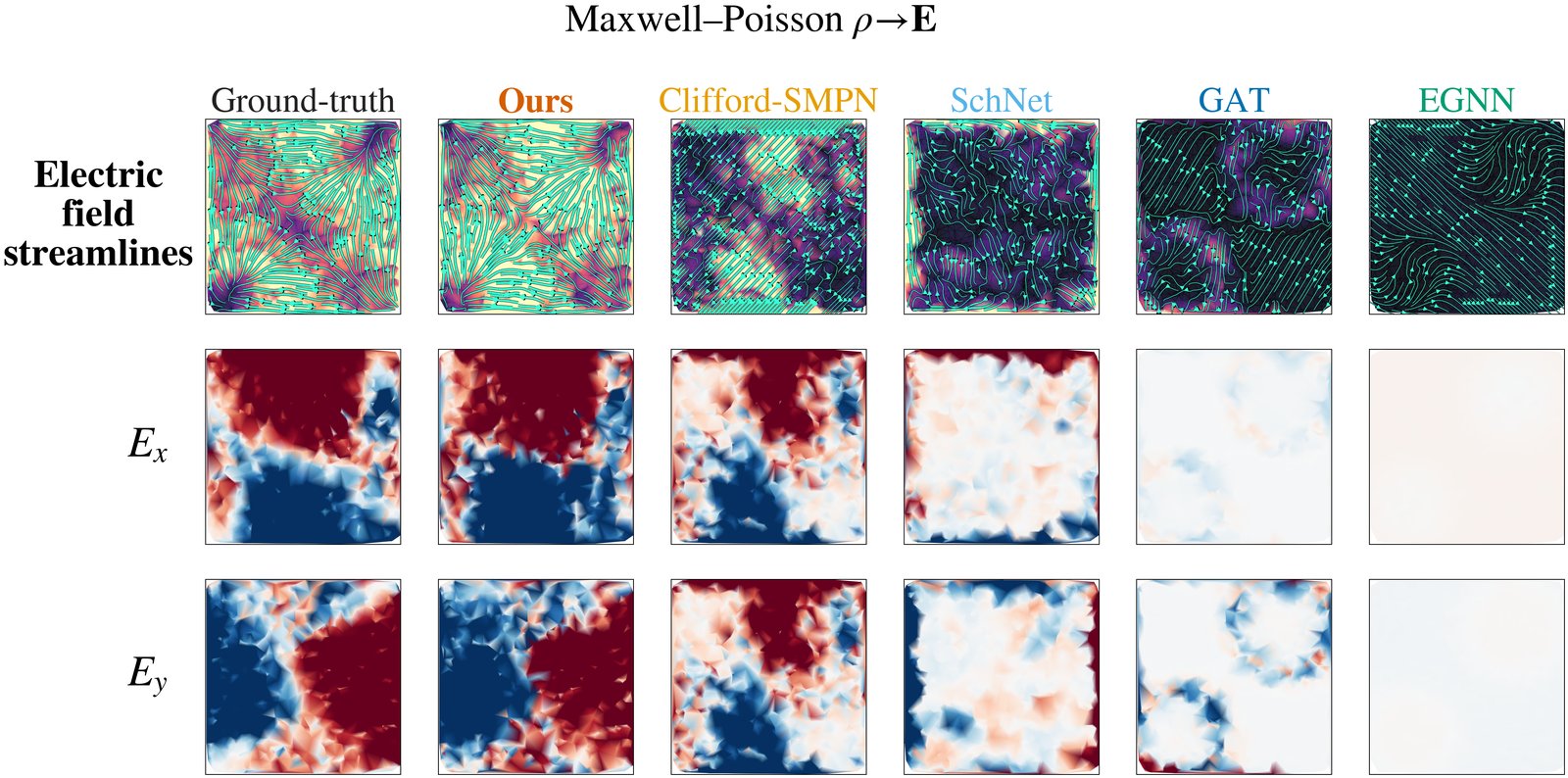}
  \caption{Sample 1.}
\end{subfigure}\hfill
\begin{subfigure}{0.49\textwidth}\centering
  \includegraphics[width=\linewidth]{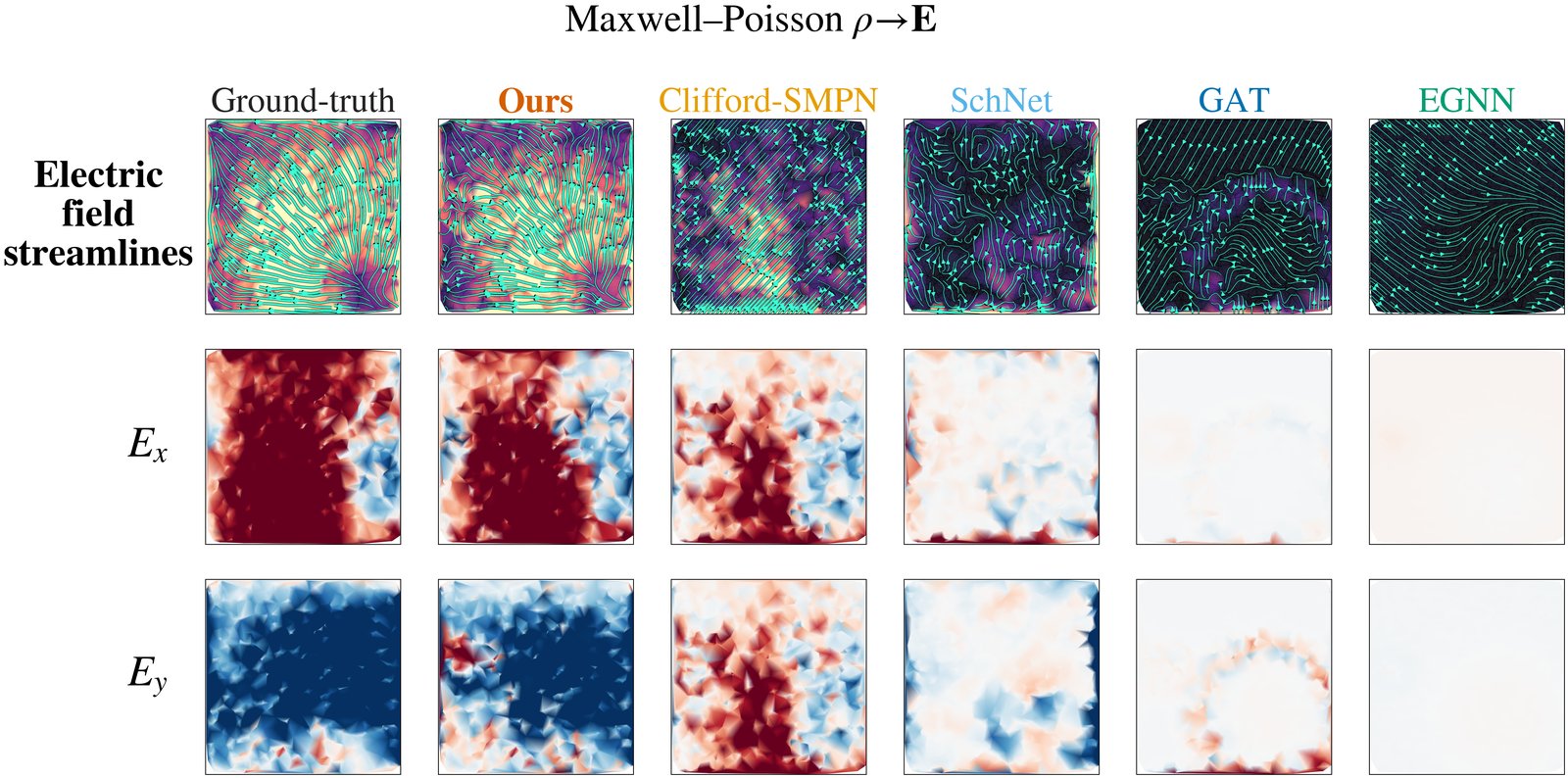}
  \caption{Sample 2.}
\end{subfigure}

\vspace{0.4em}
\begin{subfigure}{0.49\textwidth}\centering
  \includegraphics[width=\linewidth]{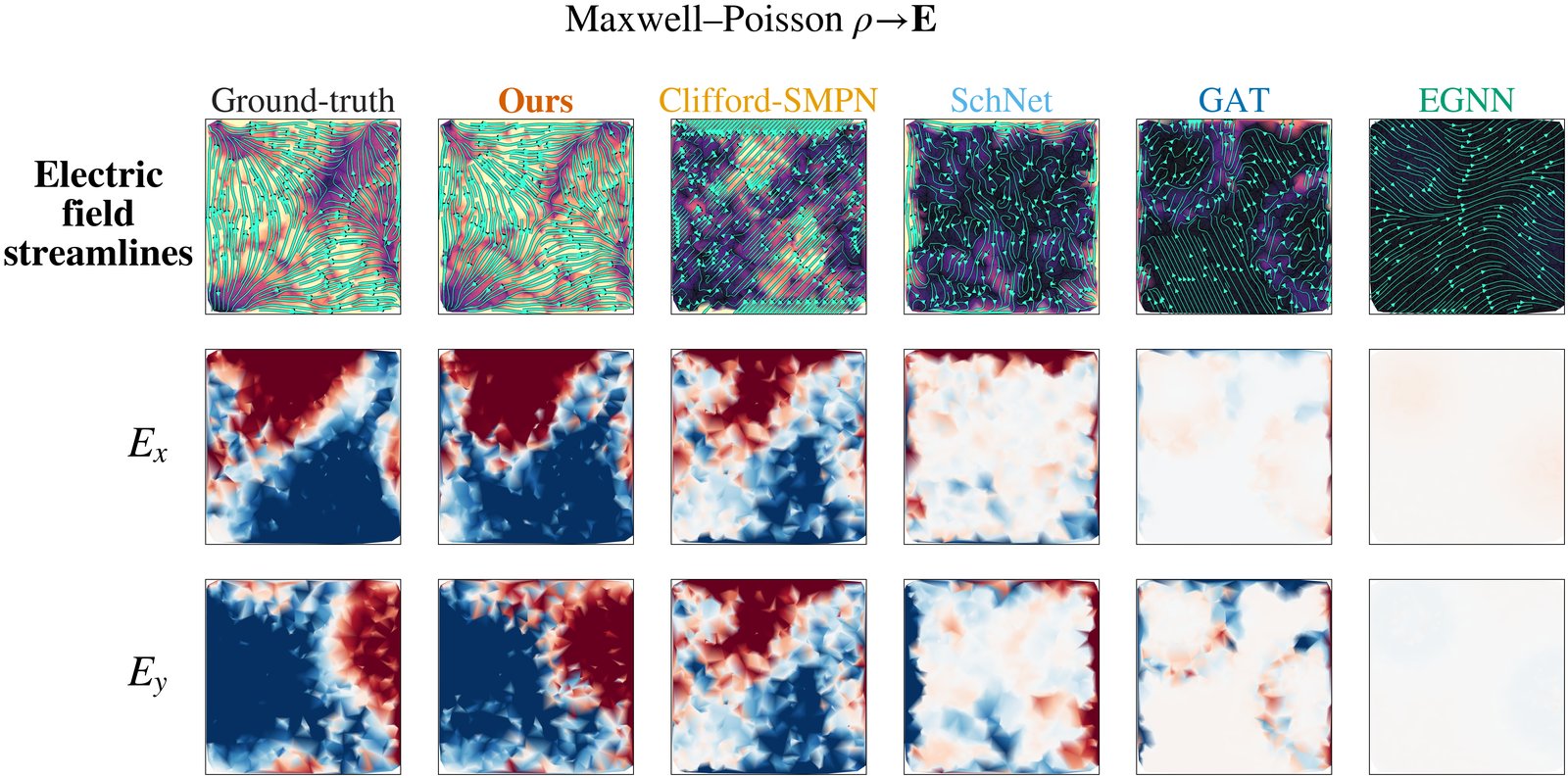}
  \caption{Sample 3.}
\end{subfigure}\hfill
\begin{subfigure}{0.49\textwidth}\centering
  \includegraphics[width=\linewidth]{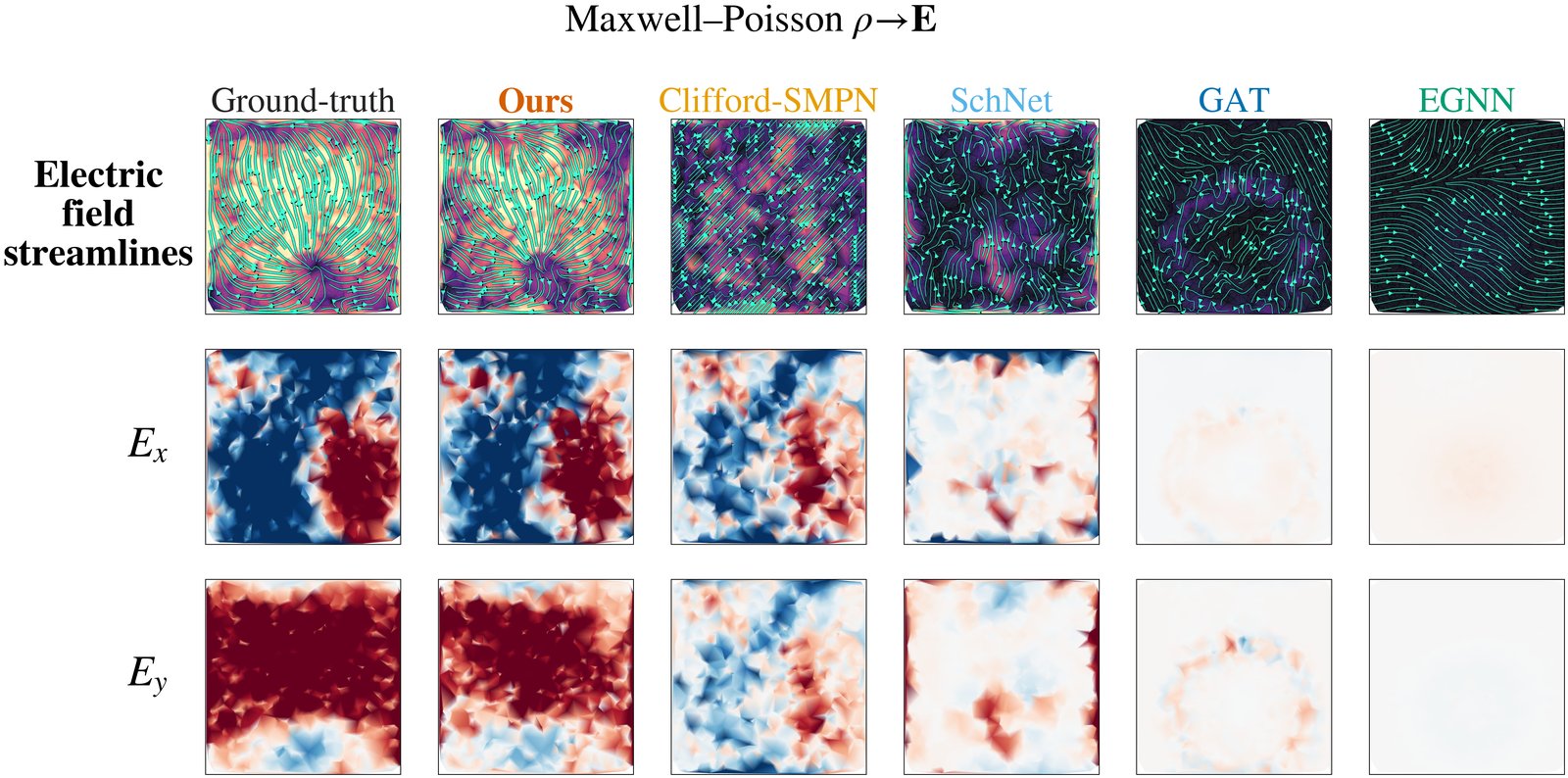}
  \caption{Sample 4.}
\end{subfigure}
\caption{Additional qualitative samples on Maxwell--Poisson electrostatics.
Heatmaps show $\|\mathbf{E}\|$ and arrows show $\mathbf{E}$;
the curl-free constraint $d^2\!=\!0$ is encoded structurally in
RHMP, whose streamlines emanate cleanly from the source/sink
configuration of the ground truth.}
\label{fig:qual_app_maxwell}
\end{figure}

\begin{figure}[H]
\centering
\begin{subfigure}{0.49\textwidth}\centering
  \includegraphics[width=\linewidth]{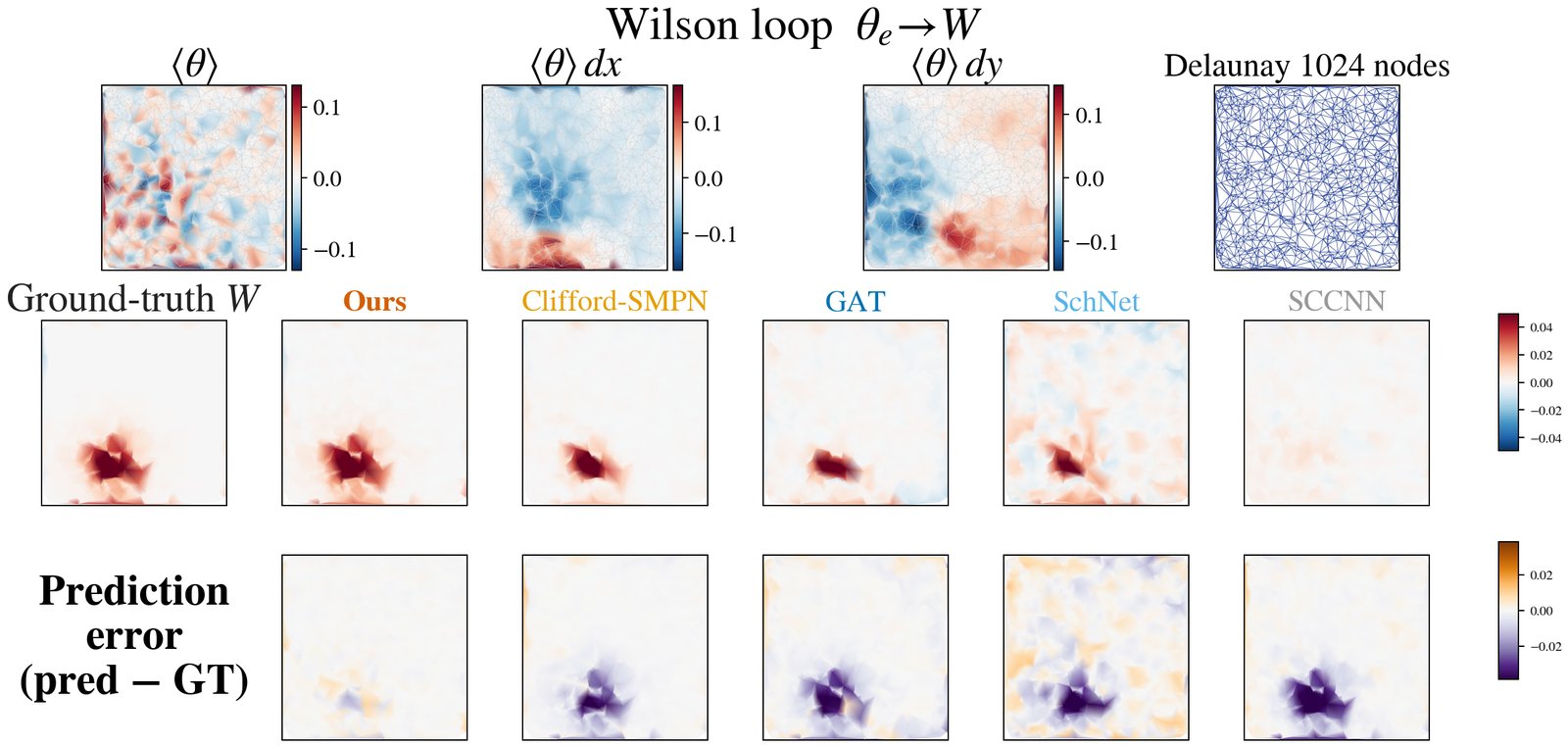}
  \caption{Sample 1.}
\end{subfigure}\hfill
\begin{subfigure}{0.49\textwidth}\centering
  \includegraphics[width=\linewidth]{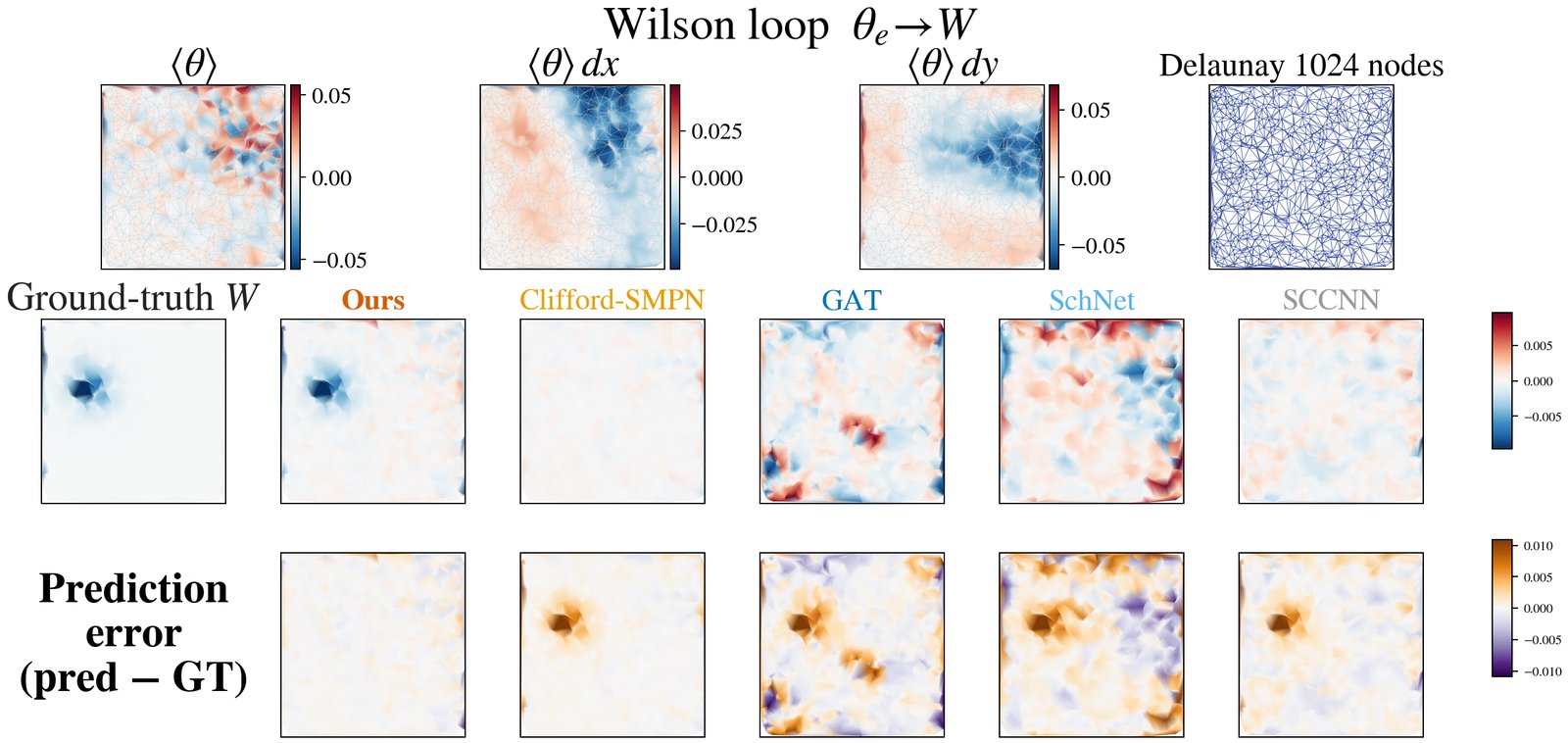}
  \caption{Sample 2.}
\end{subfigure}

\vspace{0.4em}
\begin{subfigure}{0.49\textwidth}\centering
  \includegraphics[width=\linewidth]{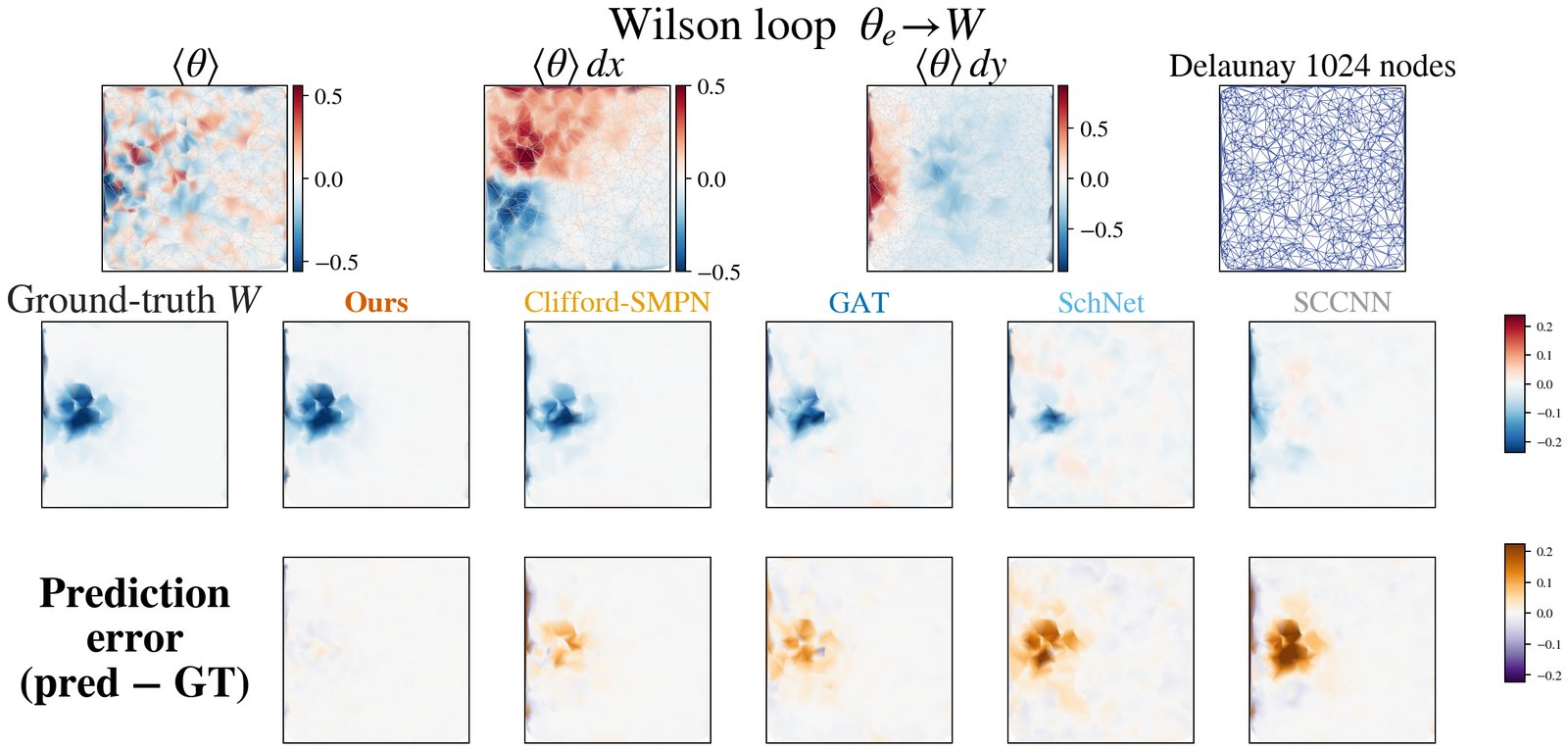}
  \caption{Sample 3.}
\end{subfigure}\hfill
\begin{subfigure}{0.49\textwidth}\centering
  \includegraphics[width=\linewidth]{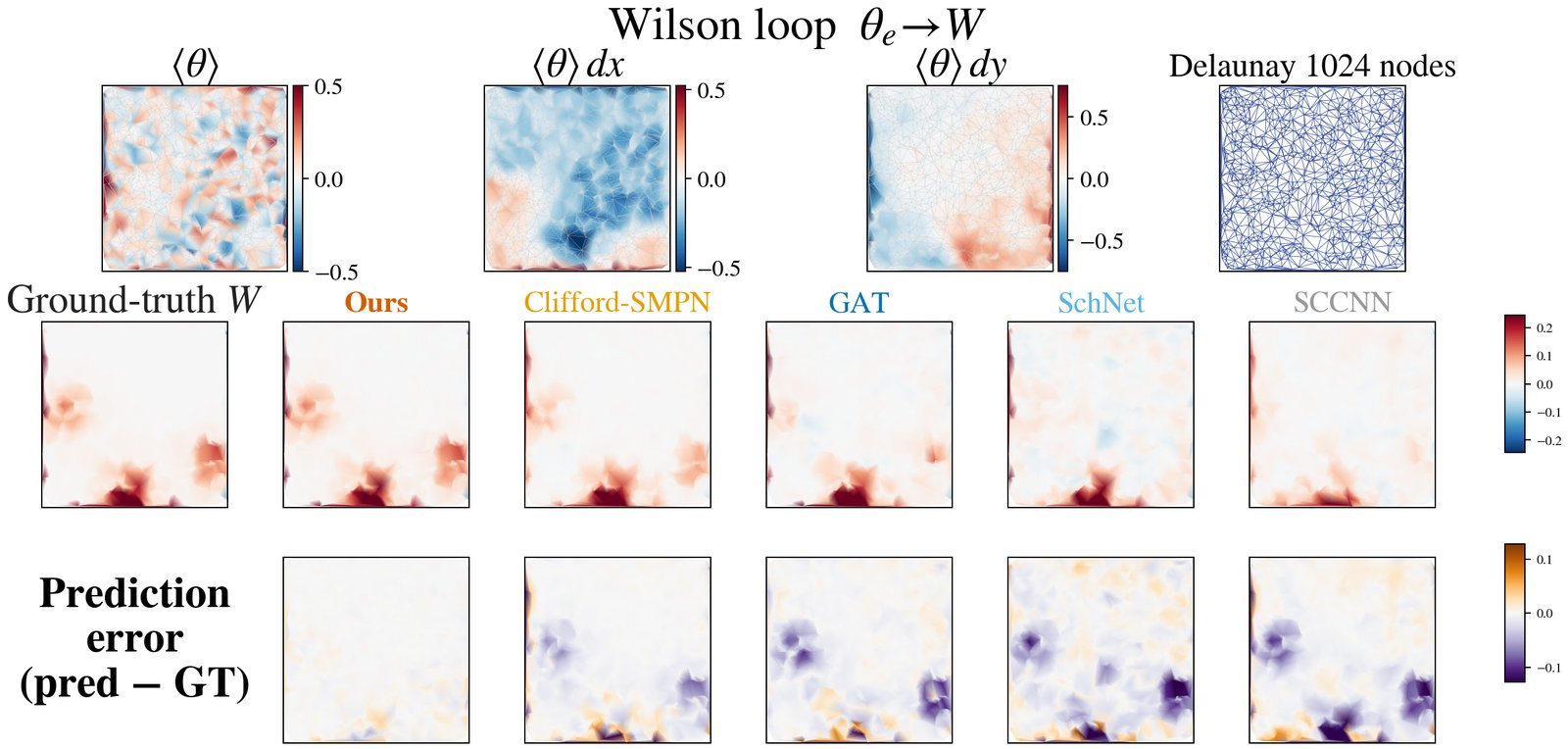}
  \caption{Sample 4.}
\end{subfigure}
\caption{Additional qualitative samples on $U(1)$ Wilson-loop curvature.
The localized $F=d\theta$ peaks are recovered with the correct
amplitude by RHMP, whereas the strongest baselines either
overshoot or wash out the support.}
\label{fig:qual_app_wilson}
\end{figure}

\begin{figure}[H]
\centering
\begin{subfigure}{0.49\textwidth}\centering
  \includegraphics[width=\linewidth]{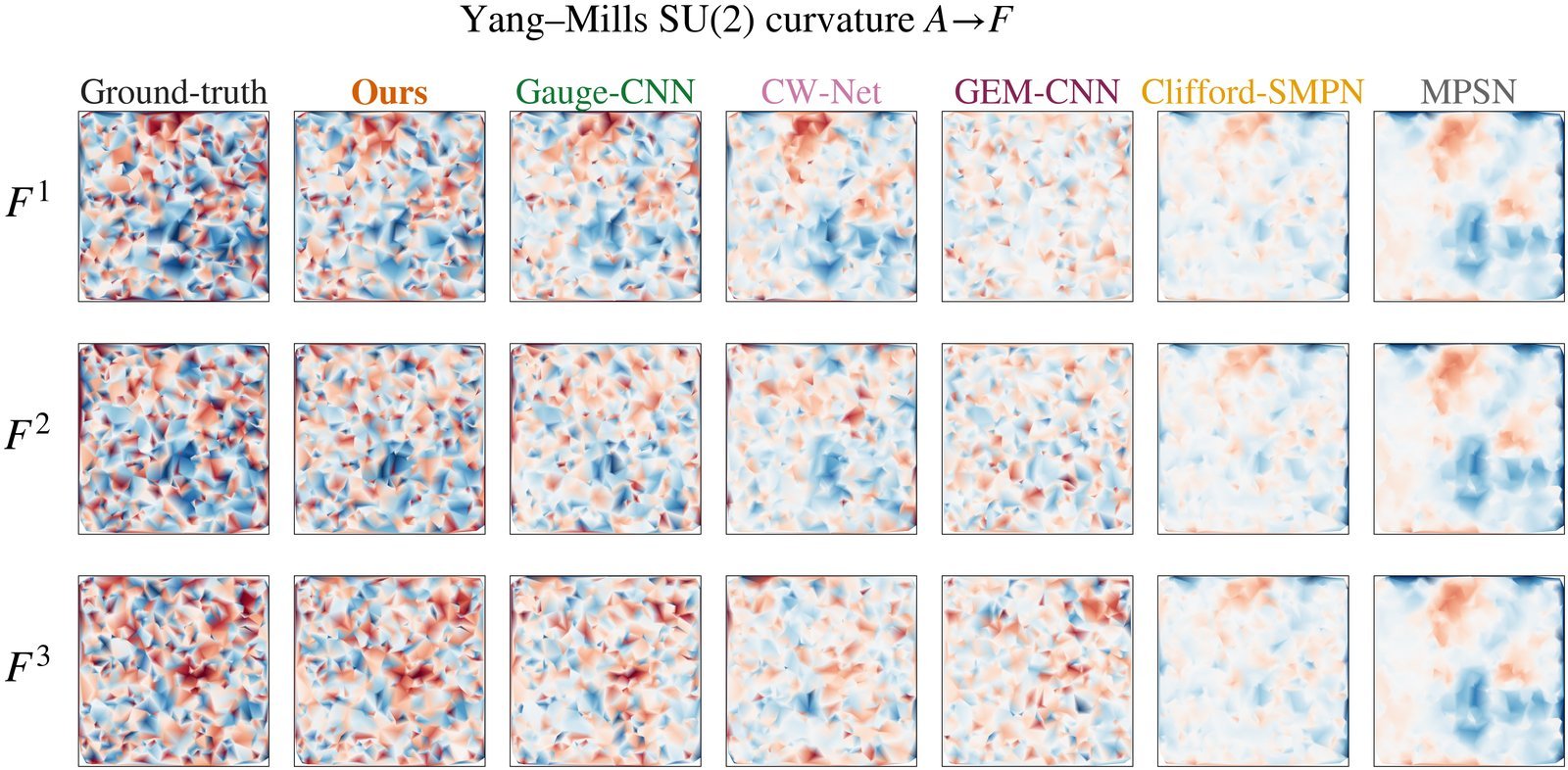}
  \caption{Sample 1.}
\end{subfigure}\hfill
\begin{subfigure}{0.49\textwidth}\centering
  \includegraphics[width=\linewidth]{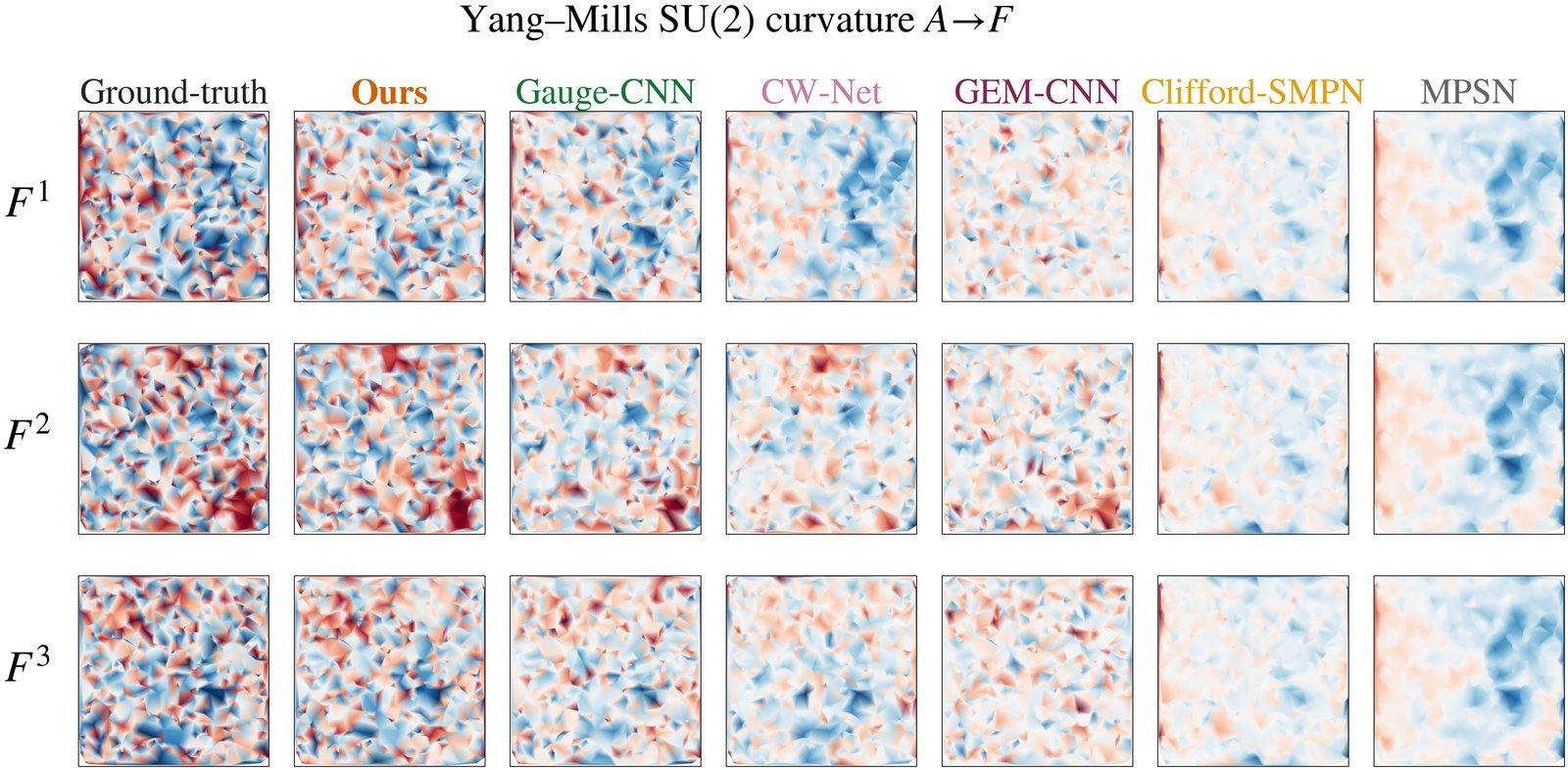}
  \caption{Sample 2.}
\end{subfigure}

\vspace{0.4em}
\begin{subfigure}{0.49\textwidth}\centering
  \includegraphics[width=\linewidth]{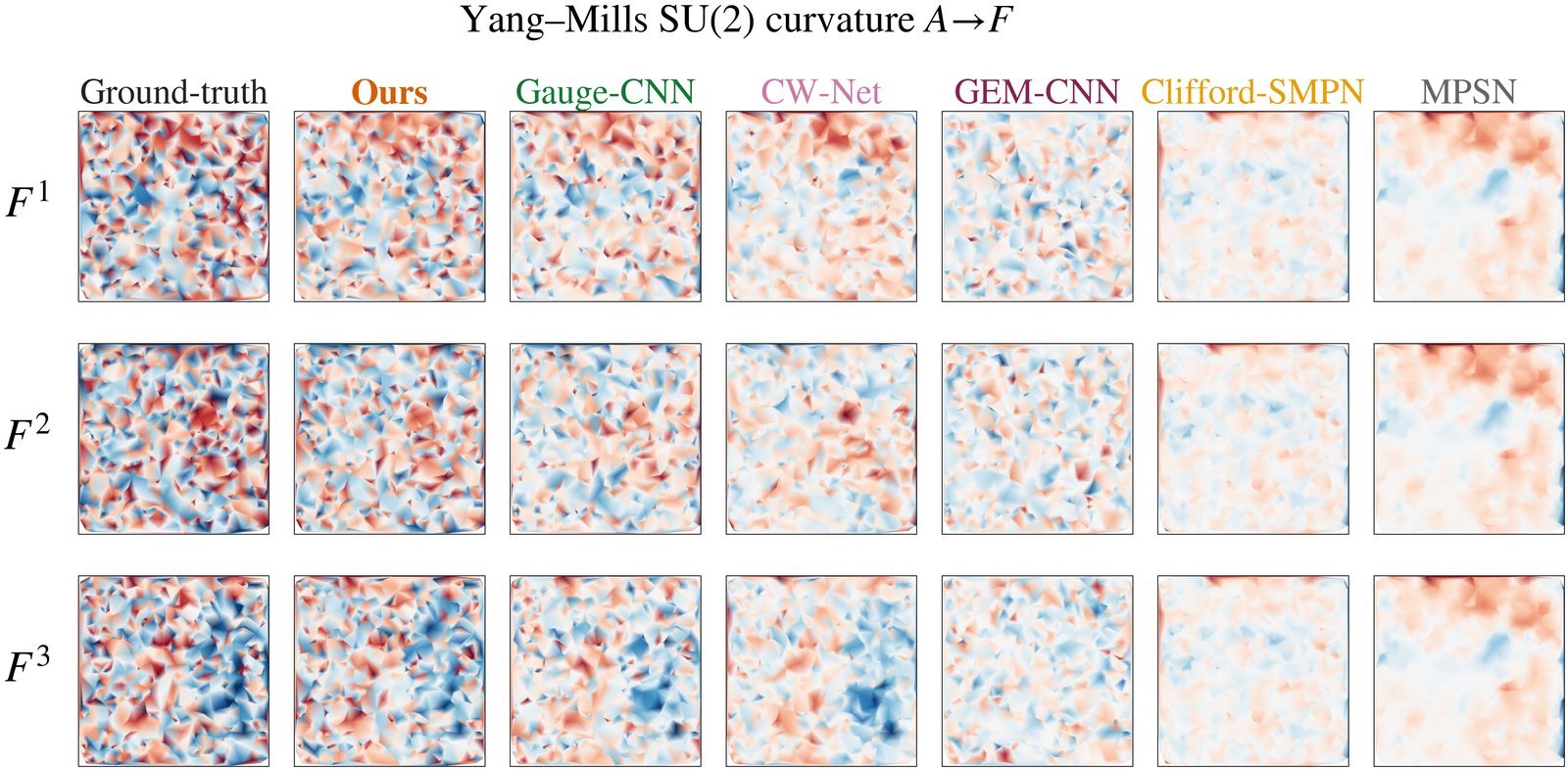}
  \caption{Sample 3.}
\end{subfigure}\hfill
\begin{subfigure}{0.49\textwidth}\centering
  \includegraphics[width=\linewidth]{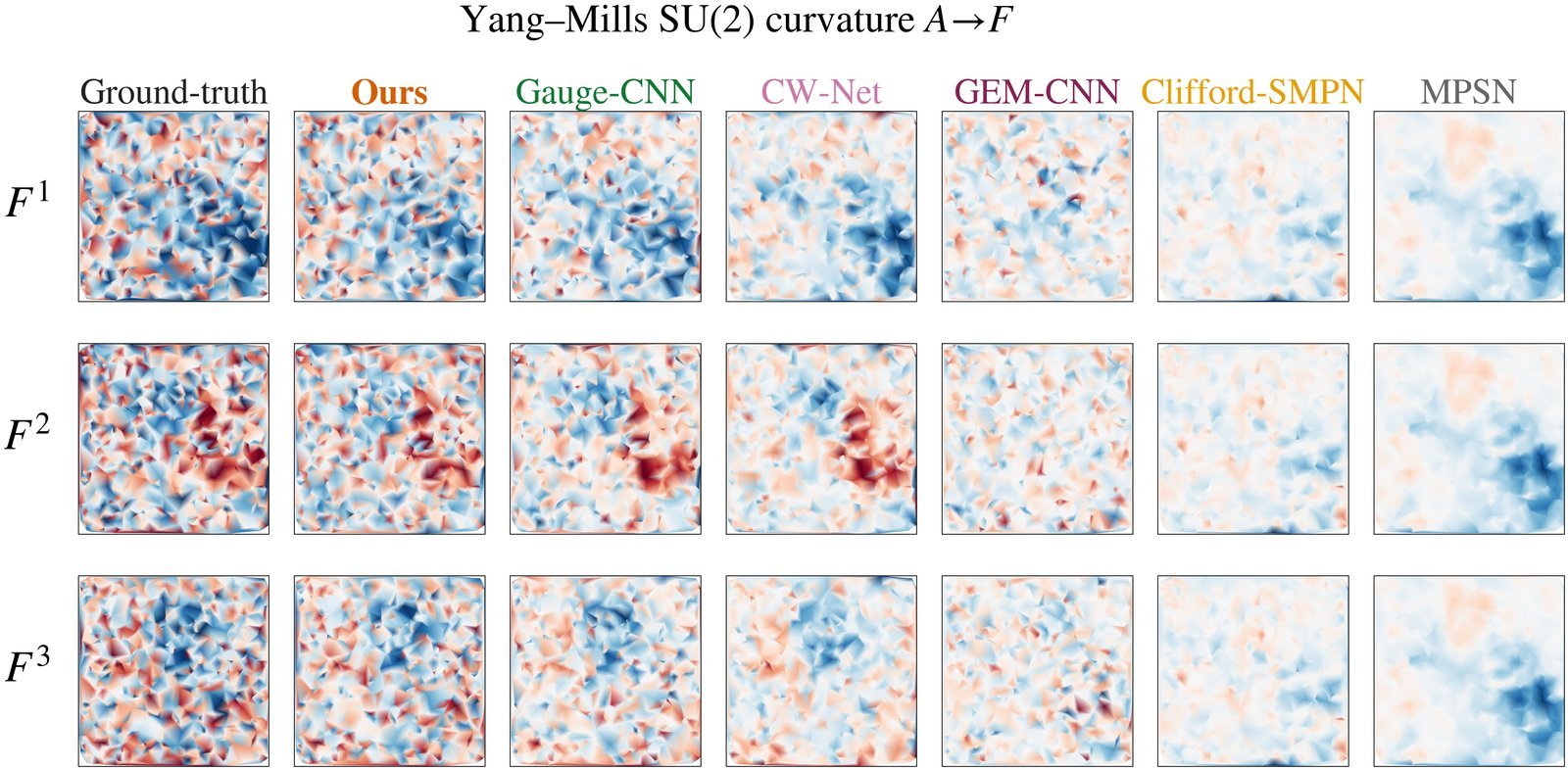}
  \caption{Sample 4.}
\end{subfigure}
\caption{Additional qualitative samples on $SU(2)$ Yang--Mills curvature.
Rows display the three Lie-algebra components $F^1, F^2, F^3$ of
$F = dA + [A,A]$; only RHMP captures both the differential
$dA$ structure and the algebraic $[A,A]$ coupling at the correct
amplitude.}
\label{fig:qual_app_ym}
\end{figure}

\begin{figure}[H]
\centering
\begin{subfigure}{0.49\textwidth}\centering
  \includegraphics[width=\linewidth]{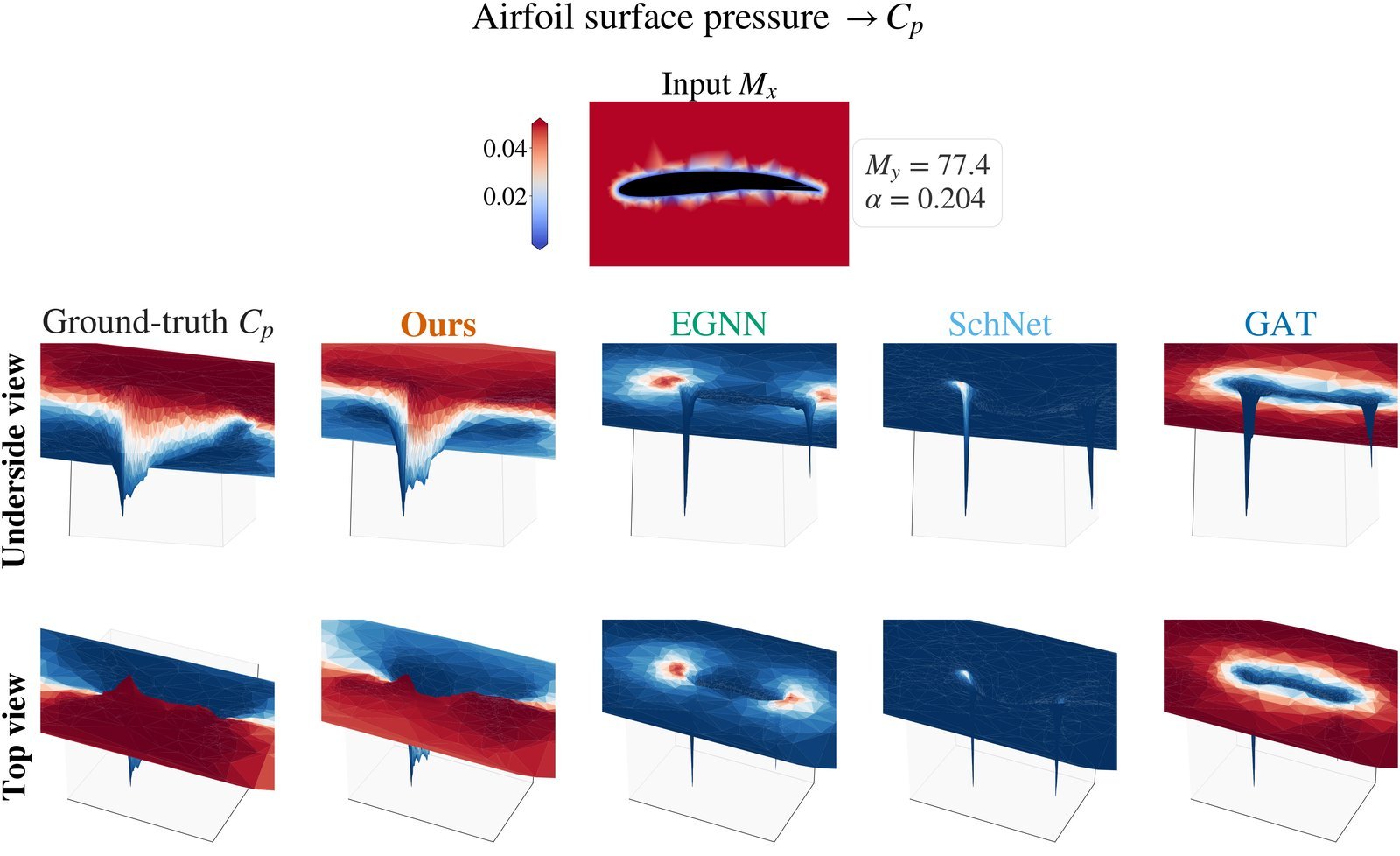}
  \caption{Sample 1.}
\end{subfigure}\hfill
\begin{subfigure}{0.49\textwidth}\centering
  \includegraphics[width=\linewidth]{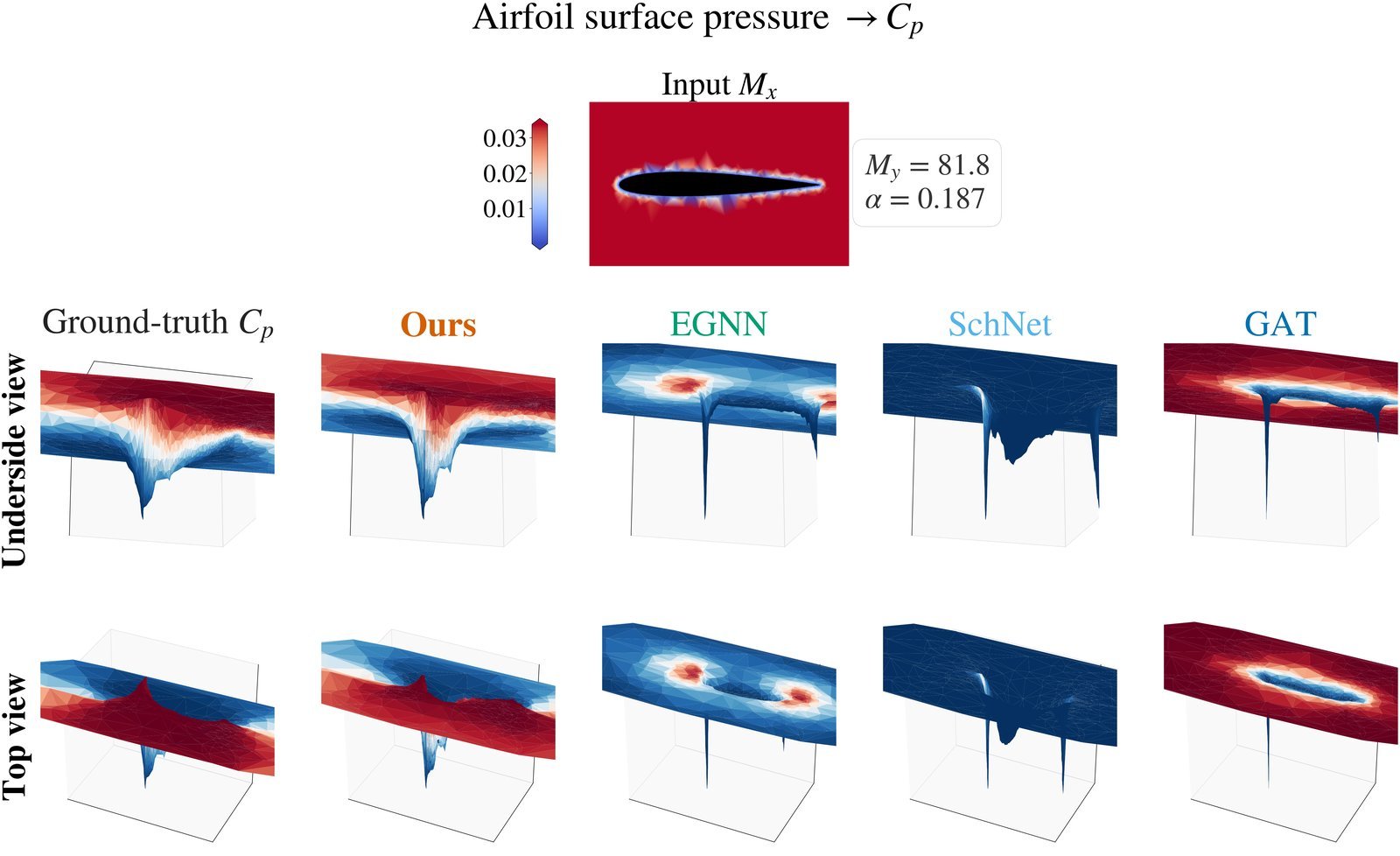}
  \caption{Sample 2.}
\end{subfigure}

\vspace{0.4em}
\begin{subfigure}{0.49\textwidth}\centering
  \includegraphics[width=\linewidth]{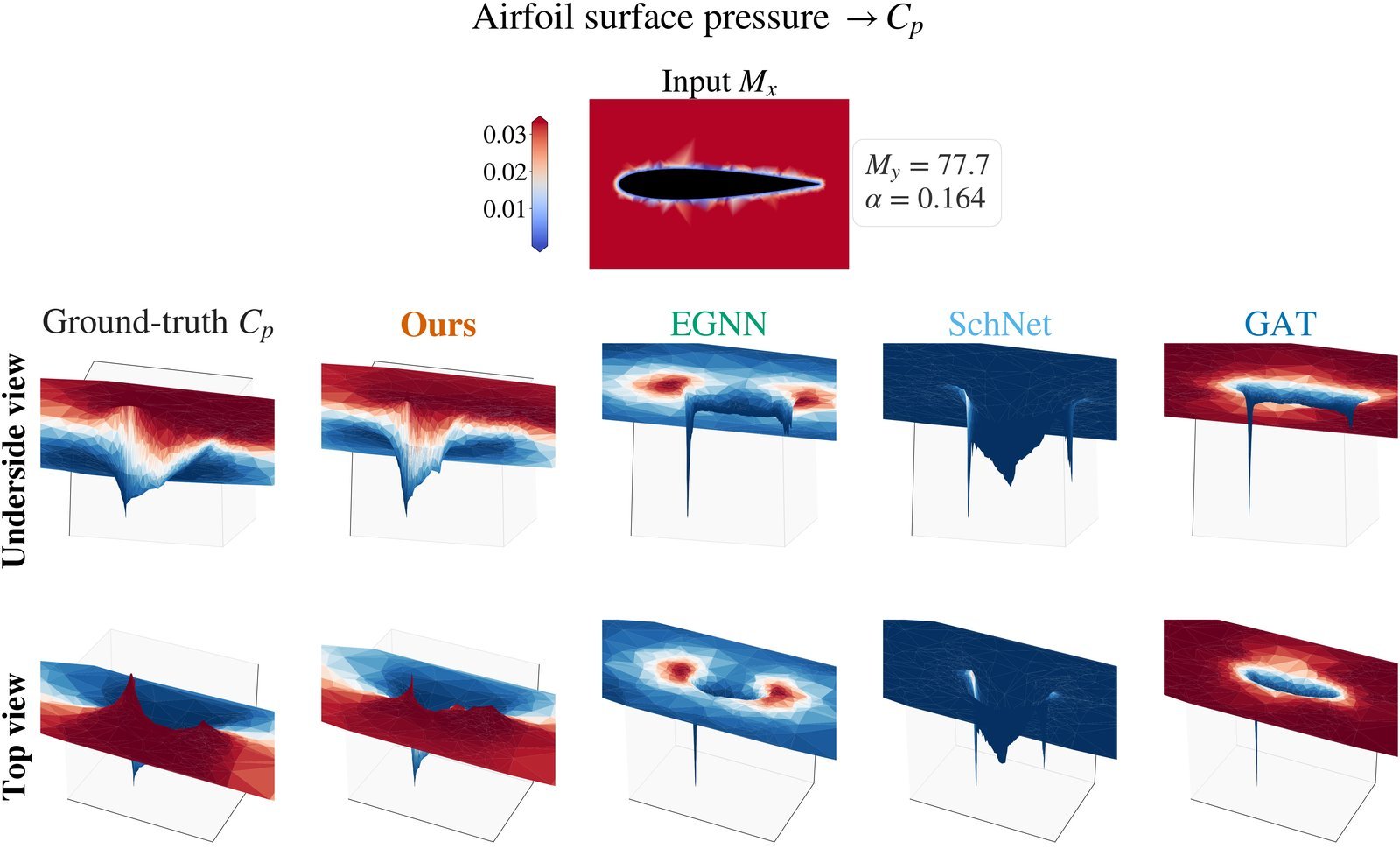}
  \caption{Sample 3.}
\end{subfigure}\hfill
\begin{subfigure}{0.49\textwidth}\centering
  \includegraphics[width=\linewidth]{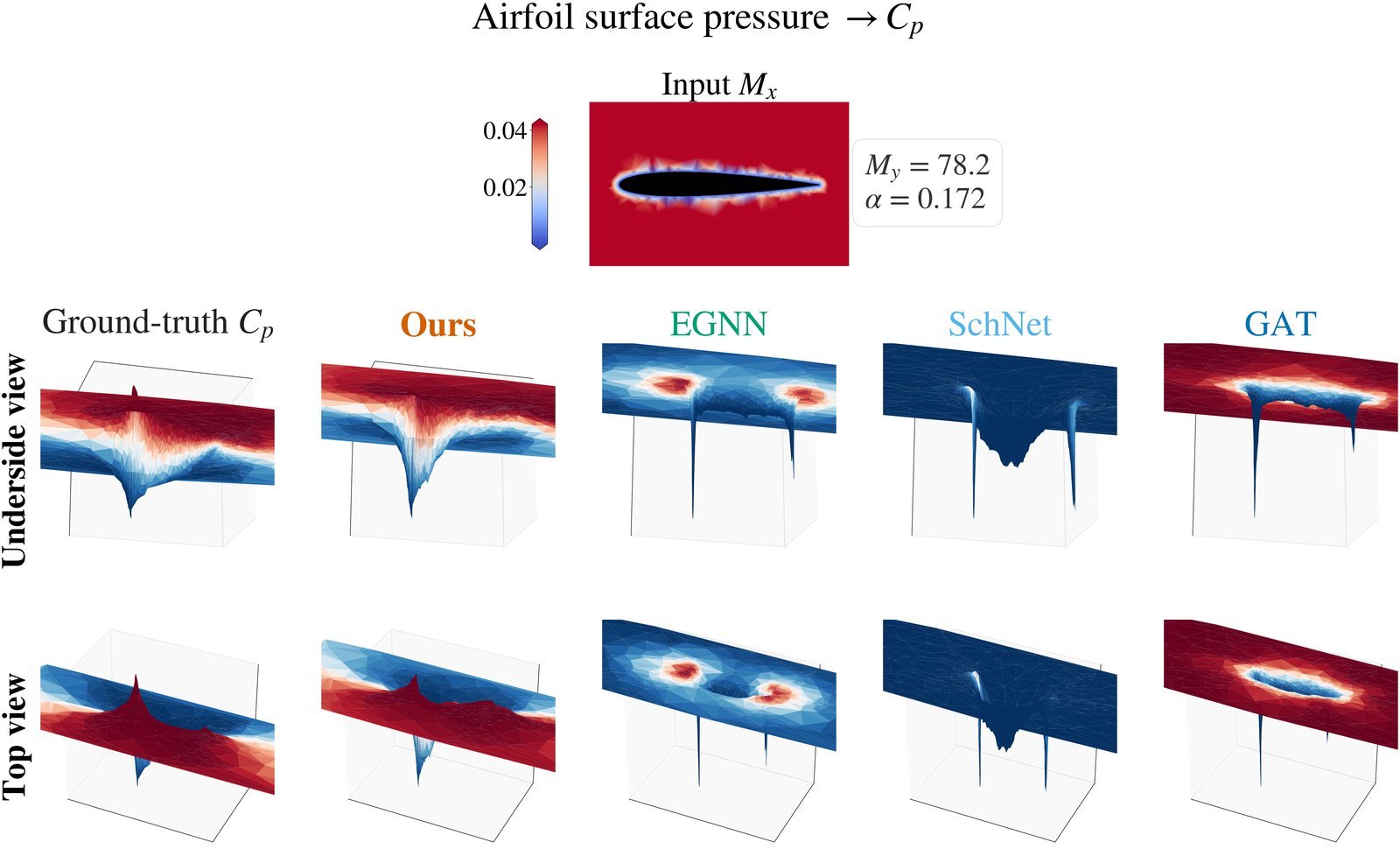}
  \caption{Sample 4.}
\end{subfigure}
\caption{Additional qualitative samples on AirfRANS airfoil surface
pressure.  The two views per sample (underside / top) show that
RHMP reconstructs the suction trough and pressure ridge across
unseen meshes, while baselines either flatten the pressure profile or
introduce mesh-bound artefacts.}
\label{fig:qual_app_airfoil}
\end{figure}

\clearpage



\end{document}